\documentclass[11pt,english,french,american]{article}
\usepackage[T1]{fontenc}
\usepackage[latin9]{inputenc}
\usepackage{array}
\usepackage{float}
\usepackage{multirow}
\usepackage{amsmath}
\usepackage{amsthm}
\usepackage{amssymb}
\usepackage{graphicx}

\makeatletter

\providecommand{\tabularnewline}{\\}
\floatstyle{ruled}
\newfloat{algorithm}{tbp}{loa}
\providecommand{\algorithmname}{Algorithm}
\floatname{algorithm}{\protect\algorithmname}

\theoremstyle{plain}
\newtheorem{thm}{\protect\theoremname}
\theoremstyle{plain}
\newtheorem{lem}[thm]{\protect\lemmaname}

\@ifundefined{date}{}{\date{}}
\usepackage{amsmath}
\usepackage{amsthm}
\usepackage{amssymb}
\usepackage{subfig}
\usepackage{color}
\usepackage[english]{babel}
\usepackage{graphicx}
\usepackage{wrapfig,epsfig}
\usepackage{epstopdf}
\usepackage{url}
\usepackage{graphicx}
\usepackage{color}
\usepackage{epstopdf}
\usepackage{scrextend}
\usepackage[T1]{fontenc}
\usepackage{bbm}
\usepackage{comment}

\usepackage{tablefootnote}
\usepackage[flushleft]{threeparttable}


\usepackage{tikz}
\usepackage{hyperref}  
\hypersetup{colorlinks=true,citecolor=blue,linkcolor=blue} 
\usetikzlibrary{arrows}
\usepackage[margin=1in]{geometry}
\linespread{1}

\usepackage{algorithmic}
\floatname{algorithm}{Algorithm}
\usepackage[titletoc,title]{appendix}

\allowdisplaybreaks

\makeatother

\usepackage{babel}
\makeatletter
\addto\extrasfrench{%
   \providecommand{\fg}{\ifdim\lastskip>\z@\unskip\fi~\frqq}%
}

\makeatother
\addto\captionsamerican{\renewcommand{\lemmaname}{Lemma}}
\addto\captionsamerican{\renewcommand{\theoremname}{Theorem}}
\addto\captionsenglish{\renewcommand{\algorithmname}{Algorithm}}
\addto\captionsenglish{\renewcommand{\lemmaname}{Lemma}}
\addto\captionsenglish{\renewcommand{\theoremname}{Theorem}}
\addto\captionsfrench{\renewcommand{\algorithmname}{Algorithme}}
\addto\captionsfrench{\renewcommand{\lemmaname}{Lemme}}
\addto\captionsfrench{\renewcommand{\theoremname}{Théorème}}
\providecommand{\lemmaname}{Lemma}
\providecommand{\theoremname}{Theorem}

\begin{document}
\global\long\def\L{\mathbb{\mathcal{L}}}%
\global\long\def\E{\mathbb{E}}%
\global\long\def\R{\mathbb{\mathbb{R}}}%
\global\long\def\P{\mathbb{P}}%
\global\long\def\T{\mathcal{T}}%
\global\long\def\EE{\mathcal{E}}%
\global\long\def\d{\mathop{}\!\mathrm{d}}%
\global\long\def\defeq{\overset{\mathrm{def}}{=}}%

\title{The Randomized Midpoint Method for Log-Concave Sampling}
\author{Ruoqi Shen\\
University of Washington\\
\texttt{shenr3@cs.washington.edu}\and Yin Tat Lee\\
University of Washington and Microsoft Research\\
\texttt{yintat@uw.edu}}

\maketitle
\selectlanguage{english}%
\global\long\def\defeq{\overset{\mathrm{def}}{=}}%

\begin{abstract}
Sampling from log-concave distributions is a well researched problem
that has many applications in statistics and machine learning. We
study the distributions of the form $p^{*}\propto\exp(-f(x))$, where
$f:\mathbb{R}^{d}\rightarrow\mathbb{R}$ has an $L$-Lipschitz gradient
and is $m$-strongly convex. In our paper, we propose a Markov chain
Monte Carlo (MCMC) algorithm based on the underdamped Langevin diffusion
(ULD). It can achieve $\epsilon\cdot D$ error (in 2-Wasserstein distance)
in $\tilde{O}\left(\kappa^{7/6}/\epsilon^{1/3}+\kappa/\epsilon^{2/3}\right)$
steps, where $D\defeq\sqrt{\frac{d}{m}}$ is the effective diameter
of the problem and $\kappa\defeq\frac{L}{m}$ is the condition number.
Our algorithm performs significantly faster than the previously best
known algorithm for solving this problem, which requires $\tilde{O}\left(\kappa^{1.5}/\epsilon\right)$
steps \cite{chen2019optimal,dalalyan2018sampling}. Moreover, our
algorithm can be easily parallelized to require only $O(\kappa\log\frac{1}{\epsilon})$
parallel steps. 

To solve the sampling problem, we propose a new framework to discretize
stochastic differential equations. We apply this framework to discretize
and simulate ULD, which converges to the target distribution $p^{*}$.
The framework can be used to solve not only the log-concave sampling
problem, but any problem that involves simulating (stochastic) differential
equations.\selectlanguage{american}%
\end{abstract}

\selectlanguage{english}%

\section{Introduction}

In this paper, we study the problem of sampling from a high-dimensional
log-concave distribution. This problem is central in statistics, machine
learning and theoretical computer science, with applications such
as Bayesian estimation \cite{andrieu2003introduction}, volume computation
\cite{vempala2010recent} and bandit optimization \cite{russo2018tutorial}.
In a seminal 1989 result, Dyer, Frieze and Kannan \cite{dyer1991random}
first presented a polynomial-time algorithm (for an equivalent problem)
that takes $\tilde{O}(d^{23}\log\frac{1}{\epsilon})$ steps on any
$d$ dimensional log-concave distribution to achieve target accuracy
$\epsilon$. After three decades of research in Markov chain Monte
Carlo (MCMC) and convex geometry \cite{lovasz1990mixing,applegate1991sampling,dyer1991computing,lovasz1993random,Kannan1997,lovasz2006simulated,Cousins:2015:BKG:2746539.2746563,Lee:2017:GWP:3055399.3055416,Lee:2018:CRR:3188745.3188774,mangoubi2019faster},
results have been improved to $\tilde{O}(d^{4}\log\frac{1}{\epsilon})$
steps for general log-concave distributions and slightly better for
distributions given in a certain form. Unfortunately, $d\log\frac{1}{\epsilon}$
steps are necessary even for a special case of log-concave sampling,
i.e., convex optimization \cite{alma991000409079704987}. To avoid
this lower bound, there has been a recent surge of interest in obtaining
a faster algorithm via assuming some properties on the distribution.

We call a distribution \emph{log-concave} if its density is proportional
to $e^{-f(x)}$ with a convex function $f$. For the standard assumption
that $f$ is $m$-strongly convex with an $L$-Lipschitz gradient
(see Section $\text{\ref{subsec:f}}$), the current best algorithms
have at least a linear $d$ or $1/\epsilon$ dependence or a large
dependence on the condition number $\kappa\defeq\frac{L}{m}$. In
this paper, we present an algorithm with no dependence on $d$ and
a much smaller dependence on $\kappa$ and $\epsilon$ than shown
in previous research. Moreover, our algorithm is the first algorithm
with better than $1/\epsilon$ dependence that is not Metropolis-adjusted
and does not make any extra assumption, such as high-order smoothness
\cite{mangoubi2017rapid,mangoubi2018dimensionally,chatterji2018theory,mou2019high}.

To explain our main result, we note that this problem has an effective
diameter $D\defeq\sqrt{\frac{d}{m}}$ because the distance between
the minimizer $x^{*}$ of $f$ and a random point $y\sim e^{-f}$
satisfies $\E_{y\sim e^{-f}}\|x^{*}-y\|^{2}\leq\frac{d}{m}$\cite{durmus2016high}.
Therefore, a natural problem definition\footnote{Previous papers addressing this problem defined $\epsilon$ as $W_{2}(x,e^{-f})\leq\epsilon$.
This definition is not scale invariant, i.e., the number of steps
changes when we scale $f$. In comparison, our definition yields results
that are invariant under: (1) the scaling of $f$, namely, replacing
$f(x)$ by $\alpha f(x)$ for $\alpha>0$, and (2) the tensor power
of $f$, namely, replacing $f(x)$ by $g(x)\defeq\sum_{i}f(x_{i})$.
Our new definition of $\epsilon$ also clarifies definitions in previous
research. Under the prior definition of $\epsilon$, the algorithms
\cite{durmus2016high,cheng2017underdamped,chen2019optimal} take $\tilde{O}(\kappa^{2}(\sqrt{\frac{d}{m}}/\epsilon)^{2})$,
$\tilde{O}(\kappa^{2}\sqrt{\frac{d}{m}}/\epsilon)$, and $\tilde{O}(\kappa^{1.5}\sqrt{\frac{d}{m}}/\epsilon)$
steps, respectively. Our new definition shows that these different
dependences on $d$ and $m$ all relate to their dependence on $\epsilon$.} is to find a random $x$ that makes the Wasserstein distance small:
\begin{equation}
W_{2}(x,y)\leq\epsilon\cdot D.\label{eq:W2_distance}
\end{equation}
This choice of distance is also common in previous papers \cite{durmus2016high,durmus2017nonasymptotic,cheng2017underdamped,mangoubi2017rapid,lee2018algorithmic,mangoubi2018dimensionally,chatterji2018theory}.

For $\epsilon=1$, we can simply output the minimizer $x^{*}$ of
$f$ as the ``random'' point. We first consider the question how
quickly we can find a random point satisfying $\epsilon=\frac{1}{2}$.
For convex optimization under the same assumption, it takes $\sqrt{\kappa}$
iterations via acceleration methods or $d$ iterations via cutting
plane methods, and these results are tight. For sampling, the current
fastest algorithms take either $\tilde{O}(\kappa^{1.5})$ steps \cite{chen2019optimal,dalalyan2018sampling}
or $\tilde{O}(d^{4})$ steps \cite{lovasz2006hit}. Although there
is no rigorous lower bound for this problem, it is believed that $\min(\text{\ensuremath{\kappa}},d^{2})$
is the natural barrier.\footnote{The corresponding optimization problem takes at least $\min(\sqrt{\kappa},d)$
steps \cite{alma991000409079704987}. If we represent each point the
optimization algorithm visited by a vertex and each step the algorithm
takes by an edge, then the existing lower bound in fact shows that
this graph has a diameter of at least $\min(\sqrt{\kappa},d)$. Since
a random walk on a graph of diameter $D$ takes $D^{2}$ to mix, a
random walk on the graph takes at least $\min(\sqrt{\kappa},d)^{2}$
steps.} This paper presents an algorithm that takes only $\tilde{O}(\kappa^{7/6})$
steps, much closer to the natural barrier of $\kappa$ for the high-dimensional
regime.

For general $0<\epsilon<1$, our algorithm takes $\tilde{O}(\kappa^{7/6}/\epsilon^{1/3}+\kappa/\epsilon^{2/3})$
steps, which is almost linear in $\kappa$ and sub-linear in $\epsilon$
. It has significantly better dependence on both $\kappa$ and $\epsilon$
than previous algorithms. (See the detailed comparison in Table \ref{tb: Runtime}.)
Moreover, if we query gradient $\nabla f$ at multiple points in parallel
in each step, we can improve the number to $O(\kappa\log\frac{1}{\epsilon})$
steps. 
\begin{table}[h]
\centering{}%
\begin{tabular}{|>{\centering}m{4.5cm}|c|c|}
\hline 
\multirow{1}{4.5cm}{\foreignlanguage{american}{}} & \multicolumn{2}{c|}{\textbf{$\mathbf{\#}$ Step}}\tabularnewline
\cline{2-3} 
\textbf{Algorithm} & \textbf{Warm Start} & \textbf{Cold Start}\tabularnewline
\hline 
\hline 
Hit-and-Run\cite{lovasz2006hit} & $\tilde{O}\left(d^{3}\log(\frac{1}{\epsilon})\right)$ & $\tilde{O}\left(d^{4}\log(\frac{1}{\epsilon})\right)$\tabularnewline
\hline 
Langevin Diffusion\cite{durmus2016high,doi:10.1111/rssb.12183} & \multicolumn{2}{c|}{$\tilde{O}\left(\kappa^{2}/\epsilon^{2}\right)$}\tabularnewline
\hline 
Underdamped Langevin Diffusion \cite{cheng2017underdamped} & \multicolumn{2}{c|}{$\tilde{O}\left(\kappa^{2}/\epsilon\right)$}\tabularnewline
\hline 
Underdamped Langevin Diffusion2

\cite{dalalyan2018sampling} & \multicolumn{2}{c|}{$\tilde{O}\left(\kappa^{1.5}/\epsilon+\kappa^{2}\right)$}\tabularnewline
\hline 
High-Order Langevin Diffusion\cite{mou2019high} & \multicolumn{2}{c|}{$\tilde{O}\left(\kappa^{19/4}/\epsilon^{1/2}+\kappa^{13/3}/\epsilon^{2/3}\right)$}\tabularnewline
\hline 
Metropolis-Adjusted Langevin Algorithm\cite{dwivedi2018log} & $\tilde{O}\left(\left(\kappa d+\kappa^{1.5}\sqrt{d}\right)\log(\frac{1}{\epsilon})\right)$ & $\tilde{O}\left(\left(\kappa d^{2}+\kappa^{1.5}d^{1.5}\right)\log(\frac{1}{\epsilon})\right)$\tabularnewline
\hline 
Hamiltonian Monte Carlo with Euler Method \cite{mangoubi2017rapid} & \multicolumn{2}{c|}{$\tilde{O}\left(\kappa^{6.5}/\epsilon\right)$}\tabularnewline
\hline 
Hamiltonian Monte Carlo with Collocation Method \cite{lee2018algorithmic} & \multicolumn{2}{c|}{$\tilde{O}\left(\kappa^{1.75}/\epsilon\right)$}\tabularnewline
\hline 
Hamiltonian Monte Carlo with Collocation Method 2 \cite{chen2019optimal} & \multicolumn{2}{c|}{$\tilde{O}\left(\kappa^{1.5}/\epsilon\right)$}\tabularnewline
\hline 
Underdamped Langevin Diffusion with Randomized Midpoint Method (This
Paper) & \multicolumn{2}{c|}{$\tilde{O}\left(\kappa^{7/6}/\epsilon^{1/3}+\kappa/\epsilon^{2/3}\right)$}\tabularnewline
\hline 
\end{tabular}\caption{Summary of iteration complexity. Except for Hit-and-Run, each step
involves $O(1)$-gradient computation. Hit-and-Run takes $\tilde{O}(1)$
function value computations in each step.\label{tb: Runtime}}
\end{table}

\subsection{Contributions\label{subsec:Contributions}}

We propose a new framework to discretize stochastic differential equations
(SDEs), which is a crucial step of log-sampling algorithms. Since
our techniques can also be applied to ordinary differential equations
(ODEs), we focus on the following ODE here:
\[
\frac{\d x}{\d t}=F(x(t)).
\]
There are two main frameworks to discretize a differential equation.
One is the Taylor expansion, which approximates $x(t)$ by $x(0)+x'(0)t+x''(0)\frac{t^{2}}{2}+\cdots$.
Our paper uses the second framework, called the \emph{collocation
method}. This method uses the fact that the differential equation
is equivalent to the integral equation $x=\mathcal{T}(x)$, where
$\mathcal{T}$ maps continuous functions to continuous functions:
\[
\mathcal{T}(x)(t)=x(0)+\int_{0}^{t}F(x(s))\d s\text{ for all }t\geq0.
\]
Since $x$ is a fixed point of $\mathcal{T}$, we can approximate
$x$ by computing $\mathcal{T}(\mathcal{T}(\cdots(\mathcal{T}(x_{0}))\cdots))$
for some approximate initial function $x_{0}$. Algorithmically, two
key questions are how to: (1) show when and how quickly $\mathcal{T}$
iterations converge, and (2) compute the integration. The convergence
rate of $\mathcal{T}$ was shown by the Picard--Lindelöf Theorem
in the 1890s \cite{lindelof1894application,picard1898methodes} and
was key to achieving $O(\kappa^{1.75})$ and $O\left(\kappa^{1.5}\right)$
in the previous papers \cite{lee2018algorithmic,chen2019optimal}.
To approximate the integration, one standard approach is to approximate
\[
\int_{0}^{t}F(x(s))\d s\sim\sum_{i}w_{i}F(x(s_{i}))
\]
for some carefully chosen $w_{i}$ and $s_{i}$. The key drawback
of this approach is its introduction of a deterministic error, which
accumulates linearly to the number of steps. Since we expect to take
at least $\kappa$-many iterations, the approximation error must be
$\kappa$ times smaller than the target accuracy.

In this paper, we improve upon the collocation method for sampling
by developing a new algorithm, called the \emph{randomized midpoint
method}, that yields three distinct benefits:
\begin{enumerate}
\item We generalize fixed point iteration to stochastic differential equations
and hence avoid the cost of reducing SDEs to ODEs, as was done in
\cite{lee2018algorithmic}.
\item We greatly reduce the error accumulation by simply approximating $\int_{0}^{t}F(x(s))ds$
by $t\cdot F(x(s))$ where $s$ is randomly chosen from $0$ to $t$
uniformly.
\item We show that two iterations of $\mathcal{T}$ suffice to achieve the
best theoretical guarantee.
\end{enumerate}
Although we discuss only strongly convex functions with a Lipschitz
gradient, we believe our framework can be applied to other classes
of functions, as well. By designing suitable unbiased estimators of
integrals, researchers can easily use our approach to obtain faster
algorithms for solving SDEs that are unrelated to sampling problems. 

\subsection{Paper Organization}

Section $\text{\ref{sec:Background}}$ provides background information
on solving the log-concave sampling problem, while Section $\text{\ref{sec:Notation-and-Definition}}$
introduces our notations and assumptions about the function $f$.
We introduce our algorithm in Section $\text{\ref{sec:Algorithm-and-Results}}$,
where we present the main result of our paper. We show our proofs
in appendices: Appendix $\text{\ref{sec:Brownian-Motion-Simulation}}$--how
we simulate the Brownian motion; Appendix $\text{\ref{sec:Properties-of-the}}$--important
properties of ULD and the Brownian motion; Appendix $\text{\ref{sec:Proof-of-Lemma}}$--
bounds for the discretization error of our algorithm; Appendix $\text{\ref{sec:Bounds-on-}}$--a
bound on the average value of $\left\Vert \nabla f(x_{n})\right\Vert $
and $\left\Vert v_{n}\right\Vert $ in our algorithm, which is useful
for bounding the discretization error; Appendix $\text{\ref{sec:Proof-of-Theorem}}$--proofs
for the main result of our paper; Appendix $\text{\ref{sec:Discretization-Error-of}}$--additional
proofs on how to parallelize our algorithm.\selectlanguage{american}%

\selectlanguage{english}%

\section{Background\label{sec:Background}}

Many different algorithms have been proposed to solve the log-concave
sampling problem. The general approach uses a MCMC-based algorithm
that often includes two steps. The first step involves the choice
of a Markov process with a stationary distribution equal or close
to the target distribution. The second step is discretizing the process
and simulating it until the distribution of the points generated is
sufficiently close to the target distribution.

\subsection{Choosing the Markov Process}

One commonly used Markov process is the Langevin diffusion (LD) \cite{roberts1996exponential,203584,durmus2019analysis}.
LD evolves according to the SDE
\begin{eqnarray}
\d x(t) & = & -\nabla f(x(t))+\sqrt{2}\d B_{t},\label{eq:SDE}
\end{eqnarray}
where $B_{t}$ is the standard Brownian motion. Under the assumption
that $f$ is $L$-smooth and $m$-strongly convex (see Section $\text{\ref{subsec:f}}$)
with $\kappa=\frac{L}{m}$ as the condition number, \cite{durmus2016high,doi:10.1111/rssb.12183,cheng2017convergence}
show that algorithms based on LD can achieve less than $\epsilon$
error in $\tilde{O}\left(\frac{\kappa^{2}}{\epsilon^{2}}\right)$
steps. Other related works include LD with stochastic gradient \cite{dalalyan2019user,zhang2017hitting,raginsky2017non,chatterji2018theory}
and LD in the non-convex setting \cite{raginsky2017non,cheng2018sharp}.

One important breakthrough introduced the Hamiltonian Monte Carlo
(HMC), originally proposed in \cite{kramers1940brownian}. In this
process, SDE (\ref{eq:SDE}) is approximated by a piece-wise curve,
where each piece is governed by an ODE called the Hamiltonian dynamics.
The Hamiltonian dynamics maintains a velocity $v$ in addition to
a position $x$ and conserves the value of the Hamiltonian $H(x,v)=f(x)+\frac{1}{2}\left\Vert v\right\Vert ^{2}.$
HMC has been widely studied in \cite{neal2011mcmc,ma2015complete,mangoubi2017rapid,mangoubi2018dimensionally,lee2018algorithmic,chen2019optimal,Lee:2018:CRR:3188745.3188774}.
The works \cite{chen2019optimal,dalalyan2018sampling} show that algorithms
based on HMC can achieve less than $\epsilon$ error in $\tilde{O}\left(\frac{\kappa^{1.5}}{\epsilon}\right)$
steps.

The underdamped Langevin diffusion (ULD) can be viewed as a version
of HMC that replaces multiple ODEs with one SDE; it has been studied
in \cite{cheng2017underdamped,eberle2017couplings,dalalyan2018sampling}.
ULD follows the SDE:
\begin{align}
\d v(t)=-2v(t)\d t-u\nabla f(x(t))\d t+2\sqrt{u}\d B_{t}, & \qquad\d x(t)=v(t)\d t,\label{eq:ULD}
\end{align}
where $u=\frac{1}{L}$. \cite{cheng2017underdamped} shows that even
a basic discretization of ULD has a fast convergence rate that can
achieve less than $\epsilon$ error in $\tilde{O}\left(\frac{\kappa^{2}}{\epsilon}\right)$
steps. Recently, it was shown that ULD can be viewed as an accelerated
gradient descent for sampling \cite{arxiv190200996}. This suggests
that ULD might be one of the right dynamic for sampling in the same
way as the accelerated gradient descent method is appropriate for
convex optimization. For this reason, our paper focuses on how to
discretize ULD. We note that our framework can be applied to both
LD and HMC to improve on previous results for these dynamics as well. 

\subsection{Discretizing the Process}

To simulate the random process mentioned, previous works usually apply
the Euler method \cite{cheng2017underdamped,durmus2016high} or the
Leapfrog method \cite{mangoubi2017rapid,mangoubi2018dimensionally}
to discretize the SDEs or the ODEs. In Section \ref{subsec:Random-Midpoint-Method},
we introduce a 2-step fixed point iteration method to solve general
differential equations. We apply this method to ULD and significantly
reduce the discretization error compared to existing methods. In particular,
ULD can achieve less than $\epsilon$ error in $\tilde{O}\left(\frac{\kappa^{7/6}}{\epsilon^{1/3}}+\frac{\kappa}{\epsilon^{2/3}}\right)$
steps. Table $\text{\ref{tb: Runtime}}$ summarizes the number of
steps needed by previous algorithms versus our algorithm. Moreover,
with slightly more effort, our algorithm can be parallelized so that
it needs only $O\left(\kappa\log\frac{1}{\epsilon}\right)$ parallel
steps.

On top of the discretization method, one can use a Metropolis-Hastings
accept-reject step to ensure that the post-discretization random process
results in a stationary distribution equal to the target distribution
\cite{belisle1993hit,lovasz1993random,roberts1996geometric,mengersen1996rates,lovasz1999hit,lovasz2006hit,lovasz2007geometry}.
\cite{lovasz2006hit} gives the current best algorithm for arbitrary
log-concave distribution. Originally proposed in \cite{roberts1996exponential,roberts1996geometric},
the Metropolis Adjusted Langevin Algorithm (MALA) \cite{Roberts97optimalscaling,jarner2000geometric,pillai2012optimal,10.1093/imanum/drs003,xifara2014langevin,Pereyra2016}
applies the Metropolis-Hastings accept-reject step to the Langevin
diffusion. \cite{dwivedi2018log} shows MALA can achieve $\epsilon$
error in total variation distance in $\tilde{O}\left(\left(\kappa d+\kappa^{1.5}\sqrt{d}\right)\log\left(\frac{\beta}{\epsilon}\right)\right)$
steps for $\beta$-warm start. Unlike other algorithms that have a
$\frac{1}{\epsilon^{O(1)}}$ dependence on $\epsilon$, MALA depends
logarithmically on $\epsilon$. However, $\beta$ usually depends
exponentially on the dimension $d$, which results in a $\Omega(d^{1.5})$
dependence in total. Since this paper focuses on achieving a dimension
independent result, we do not discuss how to combine our process with
a Metropolis-Hastings step in this paper.

Finally, we note that all results--including ours--can be improved
if we assume that $f$ has bounded higher-order derivatives. To ensure
a fair comparison in Table $\text{\ref{tb: Runtime}}$, we only include
results that only assume $f$ is strongly convex and has a Lipschitz
gradient.\selectlanguage{american}%

\section{Notations and Definitions\label{sec:Notation-and-Definition}}

For any function $f$, we use $\tilde{O}(f)$ to denote the class
$O\left(f\right)\cdot\log^{O(1)}(f)$. For vector $v\in\R^{d}$, we
use $\left\Vert v\right\Vert $ to denote the Euclidean norm of $v$.

\subsection{Assumptions on $f$}

\label{subsec:f}

We assume that the function $f$ is a twice continuously differentiable
function from $\R^{d}$ to $\R$ that has an $L$-Lipschitz continuous
gradient and is $m$-strongly convex. That is, there exist positive
constants $L$ and $m$ such that for all $x,y\in\R^{d}$, 
\begin{align*}
\left\Vert \nabla f(x)-\nabla f(y)\right\Vert \leq L\left\Vert x-y\right\Vert , & \text{ and }f(y)\geq f(x)+\left\langle \nabla f(x),y-x\right\rangle +\frac{m}{2}\left\Vert x-y\right\Vert ^{2}.
\end{align*}
It is easy to show that these inequalities are equivalent to $mI_{d}\preceq\nabla^{2}f(x)\preceq LI_{d},$
where $I_{d}$ is the identity matrix of dimension $d$. Let $\kappa=\frac{L}{m}$
be the condition number. We assume that we have access to an oracle
that, given a point $x\in\R^{d}$, can return the gradient of $f$
at point $x$, $\nabla f(x)$.

\subsection{Wasserstein Distance}

The $p$th Wasserstein distance between two probability measures $\mu$
and $\nu$ is defined as
\begin{eqnarray*}
W_{p}\left(\mu,\nu\right) & = & \left(\inf_{\left(X,Y\right)\in\mathcal{C}\left(\mu,\nu\right)}\E\left[\left\Vert X-Y\right\Vert ^{p}\right]\right)^{1/p},
\end{eqnarray*}
where $\mathcal{C}\left(\mu,\nu\right)$ is the set of all couplings
of $\mu$ and $\nu$. In this paper, for any $0<\epsilon<1$, we study
the number of steps needed so that the $W_{2}$ distance between the
distribution of the point our algorithms generate and the target distribution
is smaller than $\epsilon\cdot D$.

\section{Algorithms and Results\label{sec:Algorithm-and-Results}}

\subsection{Underdamped Langevin Diffusion (ULD)}

ULD is a random process that evolves according to $\text{\ensuremath{\left(\ref{eq:ULD}\right)}}$.
Our paper studies $\text{\ensuremath{\left(\ref{eq:ULD}\right)}}$
with $u=\frac{1}{L}$. Under mild conditions, it can be shown that
the stationary distribution of $\text{\ensuremath{\left(\ref{eq:ULD}\right)}}$
is proportional to $\exp\left(-f(x)+L\left\Vert v\right\Vert ^{2}/2\right).$
Then, the marginal distribution of $x$ is proportional to $\exp\left(-f(x)\right).$
It can also be shown that the solution to $\text{\ensuremath{\left(\ref{eq:ULD}\right)}}$
has a contraction property \cite{cheng2017underdamped,eberle2017couplings},
shown in the following lemma.
\begin{lem}[Theorem 5 of \cite{cheng2017underdamped}]
\label{lem:ULD} Let $\left(x_{0},v_{0}\right)$ and $\left(y_{0},w_{0}\right)$
be two arbitrary points in $\R^{d}\times\R^{d}.$ Let $\left(x_{t},v_{t}\right)$
and $\left(y_{t},w_{t}\right)$ be the exact solutions of the underdamped
Langevin diffusion after time $t$. If $\left(x_{t},v_{t}\right)$
and $\left(y_{t},w_{t}\right)$ are coupled through a shared Brownian
motion, then, 
\[
\E\left[\left\Vert x_{t}-y_{t}\right\Vert ^{2}+\left\Vert \left(x_{t}+v_{t}\right)-\left(y_{t}+w_{t}\right)\right\Vert ^{2}\right]\leq e^{-\frac{t}{\kappa}}\E\left[\left\Vert x_{0}-y_{0}\right\Vert ^{2}+\left\Vert \left(x_{0}+v_{0}\right)-\left(y_{0}+w_{0}\right)\right\Vert ^{2}\right].
\]
\end{lem}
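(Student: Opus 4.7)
The plan is to use a synchronous coupling together with a Lyapunov-function argument. Since the two processes share the same Brownian motion, the stochastic terms cancel in the differences $z_t \defeq x_t - y_t$ and $\tilde z_t \defeq v_t - w_t$, and one is left with the purely deterministic system
\[
\frac{\d z_t}{\d t} = \tilde z_t, \qquad \frac{\d \tilde z_t}{\d t} = -2\tilde z_t - u\bigl(\nabla f(x_t) - \nabla f(y_t)\bigr).
\]
Writing $\nabla f(x_t) - \nabla f(y_t) = H_t z_t$ with $H_t \defeq \int_0^1 \nabla^2 f(y_t + s z_t)\,\d s$ (symmetric and satisfying $mI \preceq H_t \preceq LI$), and setting $P_t \defeq u H_t$ so that the eigenvalues of $P_t$ lie in $[1/\kappa, 1]$, the ODE becomes linear-in-disguise with a state-dependent but well-controlled matrix.

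The Lyapunov function suggested by the inequality to be proved is
\[
\Phi_t \defeq \|z_t\|^2 + \|z_t + \tilde z_t\|^2.
\]
I would differentiate along trajectories, using $\tfrac{d}{dt}\|z_t\|^2 = 2\langle z_t,\tilde z_t\rangle$ and $\tfrac{d}{dt}\|z_t+\tilde z_t\|^2 = 2\langle z_t+\tilde z_t,\,-\tilde z_t - P_t z_t\rangle$, to obtain
\[
\frac{\d \Phi_t}{\d t} \;=\; -2\|\tilde z_t\|^2 \;-\; 2\langle z_t + \tilde z_t,\, P_t z_t\rangle.
\]
The goal then reduces to verifying the pointwise matrix inequality $\tfrac{\d\Phi_t}{\d t} + \tfrac{1}{\kappa}\Phi_t \leq 0$, i.e., negative semi-definiteness of the quadratic form in $(z_t,\tilde z_t)$ whose block matrix is
\[
M(P_t) \;=\; \begin{pmatrix} \tfrac{2}{\kappa}I - 2P_t & \tfrac{1}{\kappa}I - P_t \\ \tfrac{1}{\kappa}I - P_t & (\tfrac{1}{\kappa}-2)I \end{pmatrix}.
\]

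Because $P_t$ is symmetric, I can simultaneously diagonalize and reduce $M(P_t) \preceq 0$ to a family of $2\times 2$ scalar blocks indexed by eigenvalues $p\in[1/\kappa,1]$. For each such block the trace is $\tfrac{3}{\kappa}-2p-2 < 0$, and a direct calculation gives determinant $-(p-1/\kappa)(p-(4-1/\kappa))$, which is $\geq 0$ on $[1/\kappa,1]$; hence each block is negative semi-definite, and the desired inequality follows. The anticipated main obstacle is precisely this algebraic check: choosing a Lyapunov function with just the right cross-term weight so that the quadratic form is semi-definite uniformly for all admissible Hessians. Once this is in hand, Gr\"onwall's inequality yields $\Phi_t \leq e^{-t/\kappa}\Phi_0$ pathwise, and taking expectations over the (possibly random) initial conditions gives the stated bound.
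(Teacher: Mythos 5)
The paper does not prove this lemma; it is imported verbatim as Theorem 5 of the cited reference \cite{cheng2017underdamped}, so there is no in-paper proof to compare against. Your argument is correct and is essentially the standard proof from that reference: synchronous coupling cancels the noise, the Lyapunov function $\|z\|^2+\|z+\tilde z\|^2$ reduces the contraction to negative semi-definiteness of the family of $2\times 2$ blocks indexed by eigenvalues $p\in[1/\kappa,1]$ of $uH_t$, and your trace/determinant check (with the determinant vanishing at $p=1/\kappa$, showing the rate $1/\kappa$ is tight for this Lyapunov function) completes it via Gr\"onwall.
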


This contraction bound can be very useful for showing the convergence
of the continuous process $\text{\ensuremath{\left(\ref{eq:ULD}\right)}}$.
In our algorithm, we discretize the continuous process to implement
it; therefore we need to use this contraction bound together with
a discretization error bound to show the guarantee of our algorithm.
In Section $\text{\ref{subsec:Random-Midpoint-Method}}$, we show
how we discretize $\left(\ref{eq:ULD}\right)$.

\subsection{Randomized Midpoint Method \label{subsec:Random-Midpoint-Method}}

\selectlanguage{french}%
\begin{algorithm}
\caption{\label{al:lan}\foreignlanguage{american}{Randomized Midpoint Method
for ULD}}

\begin{algorithmic}[1]

\selectlanguage{american}%
\STATE${\bf Procedure}$ RandomMidpoint$(x_{0},v_{0},N,h)$

\selectlanguage{french}%
\STATE ${\bf For}\;n=0,...,N-1$

\STATE$\quad$Randomly sample $\alpha$ uniformly from $[0,1].$

\STATE$\quad$Generate Gaussian random variable $\left(W_{1}^{(n)},W_{2}^{(n)},W_{3}^{(n)}\right)\in\R^{3d}$
as in Appendix $\text{\ref{sec:Brownian-Motion-Simulation}}$

\STATE$\quad x_{n+\frac{1}{2}}=x_{n}+\frac{1}{2}\left(1-e^{-2\alpha h}\right)v_{n}-\frac{1}{2}u\left(\alpha h-\frac{1}{2}(1-e^{-2\alpha h})\right)\nabla f(x_{n})+\sqrt{u}W_{1}^{(n)}.$

\STATE$\quad$$x_{n+1}=x_{n}+\frac{1}{2}\left(1-e^{-2h}\right)v_{n}-\frac{1}{2}uh\left(1-e^{-2(h-\alpha h)}\right)\nabla f(x_{n+\frac{1}{2}})+\sqrt{u}W_{2}^{(n)}.$\\
\STATE$\quad$$v_{n+1}=v_{n}e^{-2h}-uhe^{-2(h-\alpha h)}\nabla f(x_{n+\frac{1}{2}})+2\sqrt{u}W_{3}^{(n)}.$

\STATE ${\bf end}{\bf \;for}$

\STATE${\bf end}{\bf \;procedure}$

\end{algorithmic}
\end{algorithm}
\selectlanguage{american}%
Our step size for each iteration is $h$. In iteration $n$ of our
algorithm, to simulate (\ref{eq:ULD}), we need to approximate the
solution to SDE (\ref{eq:ULD}) at time $h$, $\left(x_{n}^{*}(h),v_{n}^{*}(h)\right)$,
with initial value, $\left(x_{n},v_{n}\right)$. The simplest way
to do so is to use the Euler method:
\begin{align*}
v_{n}(h) & =(1-2h)v_{n}-uh\nabla f(x_{n})+2\sqrt{uh}\zeta,\quad x_{n}(h)=x_{n}+hv_{n},
\end{align*}
where $\zeta\in\R^{d}$ is drawn from the standard normal distribution.
This discretization was considered in \cite{durmus2017nonasymptotic,doi:10.1111/rssb.12183}
due to its simplicity.

As discussed in Section \ref{subsec:Contributions}, we improve the
accuracy by studying the integral formulation of (\ref{eq:ULD}):
\begin{align}
x_{n}^{*}(t) & =x_{n}+\frac{1-e^{-2t}}{2}v_{n}-\frac{u}{2}\int_{0}^{t}\left(1-e^{-2(t-s)}\right)\nabla f(x_{n}^{*}(s))\d s+\sqrt{u}\int_{0}^{t}\left(1-e^{-2(t-s)}\right)\d B_{s},\nonumber \\
v_{n}^{*}(t) & =v_{n}e^{-2t}-u\left(\int_{0}^{t}e^{-2(t-s)}\nabla f(x_{n}^{*}(s))\d s\right)+2\sqrt{u}\int_{0}^{t}e^{-2(t-s)}\d B_{s}.\label{eq:ULD_sol}
\end{align}
\cite{cheng2017underdamped} considered the same integral formulation
and used $\nabla f(x_{n})$ to approximate $\nabla f(x_{n}^{*}(t))$
for $t\in[0,h]$ to get the following algorithm:
\begin{align*}
\hat{x}_{n}(h) & =x_{n}+\frac{1-e^{-2h}}{2}v_{n}-\frac{u}{2}\int_{0}^{h}\left(1-e^{-2(h-s)}\right)\nabla f(x_{n})\d s+\sqrt{u}\int_{0}^{h}\left(1-e^{-2(h-s)}\right)\d B_{s},\\
\hat{v}_{n}(h) & =v_{n}e^{-2h}-u\left(\int_{0}^{h}e^{-2(h-s)}\nabla f(x_{n})\d s\right)+2\sqrt{u}\int_{0}^{h}e^{-2(h-s)}\d B_{s}.
\end{align*}
However, this approximation method can still generate a relatively
large error. Our paper proposes a new method, the randomized midpoint
method, to solve (\ref{eq:ULD_sol}), which yields a more accurate
approximation and significantly reduces the total runtime of the algorithm.

We first need to identify an accurate estimator of the integral $\int_{0}^{h}\left(1-e^{-2(h-s)}\right)\nabla f(x_{n}^{*}(s))\d s.$
To do so, we sample a random number $\alpha$ uniformly from $[0,1]$
so that $\alpha h$ gives a random point from $[0,h].$ Then, $h\left(1-e^{-2(h-\alpha h)}\right)\nabla f(x_{n}^{*}(\alpha h))$
is an accurate estimator of the integral \\
$\int_{0}^{h}\left(1-e^{-2(h-s)}\right)\nabla f(x_{n}^{*}(s))\d s.$
We can further show that this estimator is unbiased.

For brevity, we use $x_{n+\frac{1}{2}}$ to denote our approximation
of $x_{n}^{*}(\alpha h)$. To approximate $x_{n}^{*}(\alpha h)$,
we use equation (\ref{eq:ULD_sol}) again:
\[
x_{n+\frac{1}{2}}=x_{n}+\frac{1-e^{-2\alpha h}}{2}v_{n}-\frac{u}{2}\int_{0}^{\alpha h}\left(1-e^{-2(\alpha h-s)}\right)\nabla f(x_{n})\!\mathrm{d}s+\sqrt{u}\int_{0}^{\alpha h}\left(1-e^{-2(\alpha h-s)}\right)\!\mathrm{d}B_{s}.
\]
Then, $\left(x_{n}^{*}(h),v_{n}^{*}(h)\right)$ can be approximated
as
\begin{align*}
x_{n+1} & =x_{n}+\frac{1-e^{-2h}}{2}v_{n}-\frac{u}{2}h\left(1-e^{-2(h-\alpha h)}\right)\nabla f(x_{n+\frac{1}{2}})+\sqrt{u}\int_{0}^{h}\left(1-e^{-2(h-s)}\right)\d B_{s},\\
v_{n+1} & =v_{n}e^{-2h}-uhe^{-2(h-\alpha h)}\nabla f(x_{n+\frac{1}{2}})+2\sqrt{u}\int_{0}^{h}e^{-2(h-s)}\d B_{s}.
\end{align*}

Note that we can view (\ref{eq:ULD_sol}) as the fixed point of the
operator $\T$, $x_{n}^{*}=\mathcal{T}(x_{n}^{*})$, where for all
$t$,
\begin{align}
\mathcal{T}(x)(t) & =x_{n}+\frac{1-e^{-2t}}{2}v_{n}-\frac{u}{2}\int_{0}^{t}\left(1-e^{-2(t-s)}\right)\nabla f(x(s))\d s+\sqrt{u}\int_{0}^{t}\left(1-e^{-2(t-s)}\right)\d B_{s}.\label{eq:op_t}
\end{align}
Then, our randomized algorithm is essentially approximating $\mathcal{T}(\mathcal{T}(x_{n}))$.
Under the assumption $f$ is twice differentiable, we show that two
iterations suffice to achieve the best theoretical guarantee, but
we suspect more iterations might be useful if $f$ has higher order
derivatives. As emphasized in Section \ref{subsec:Contributions},
the way we obtain our algorithm forms a general framework that can
be applied to other SDEs.

In Lemma $\text{\ref{lem:mean_var}}$, we show that the stochastic
terms $W_{1}=\int_{0}^{\alpha h}\left(1-e^{-2(\alpha h-s)}\right)\d B_{s}$,\\
$W_{2}=\int_{0}^{h}\left(1-e^{-2(h-s)}\right)\d B_{s},$ and $W_{3}=\int_{0}^{h}e^{-2(h-s)}\d B_{s}$
conditional on the choice of $\alpha$ follow a multi-dimensional
Gaussian distribution and therefore can be easily sampled. The steps
mentioned above are summarized in Algorithm $\text{\ref{al:lan}}$.
Using this randomized midpoint method, we can solve (\ref{eq:ULD_sol})
much more accurately than previous works. We show that the discretization
error satisfies:

\begin{lem}
\label{lem:error}For each iteration $n$ of Algorithm $\text{\ref{al:lan}}$,
let $\E_{\alpha}$ be the expectation taken over the random choice
of $\alpha$ in iteration $n$. Let $\E$ be the expectation taken
over other randomness in iteration $n$. Let $\left(x_{n}^{*}(t),v_{n}^{*}(t)\right)_{t\in[0,h]}$
be the solution of the exact underdamped Langevin diffusion starting
from $\left(x_{n},v_{n}\right)$ coupled through a shared Brownian
motion with $x_{n+\frac{1}{2}},$ $v_{n}$ and $x_{n+1}.$ Assume
that $h\leq\frac{1}{20}$ and $u=\frac{1}{L}$. Then, $x_{n+1}$ and
$v_{n+1}$ of Algorithm $\text{\ref{al:lan}}$ satisfy
\begin{eqnarray*}
\E\left\Vert \E_{\alpha}x_{n+1}-x_{n}^{*}(h)\right\Vert ^{2} & \leq & O\left(h^{10}\left\Vert v_{n}\right\Vert ^{2}+u^{2}h^{12}\left\Vert \nabla f(x_{n})\right\Vert ^{2}+udh^{11}\right),\\
\E\left\Vert x_{n+1}-x_{n}^{*}(h)\right\Vert ^{2} & \leq & O\left(h^{6}\left\Vert v_{n}\right\Vert ^{2}+u^{2}h^{4}\left\Vert \nabla f(x_{n})\right\Vert ^{2}+udh^{7}\right),\\
\E\left\Vert \E_{\alpha}v_{n+1}-v_{n}^{*}(h)\right\Vert ^{2} & \leq & O\left(h^{8}\left\Vert v_{n}\right\Vert ^{2}+u^{2}h^{10}\left\Vert \nabla f(x_{n})\right\Vert ^{2}+udh^{9}\right),\\
\E\left\Vert v_{n+1}-v_{n}^{*}(h)\right\Vert ^{2} & \leq & O\left(h^{4}\left\Vert v_{n}\right\Vert ^{2}+u^{2}h^{4}\left\Vert \nabla f(x_{n})\right\Vert ^{2}+udh^{5}\right).
\end{eqnarray*}
\end{lem}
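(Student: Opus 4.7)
My plan is to exploit the integral form \eqref{eq:ULD_sol} of ULD to write each discretization error as a single stochastic integral, then decompose it into a \emph{randomized-quadrature} piece plus a \emph{fixed-point error} piece. Subtracting, say, $x_{n+1}$ from $x_n^*(h)$, the Brownian integrals and the free terms $x_n,\,v_n,\,\frac{1-e^{-2h}}{2}v_n$ cancel exactly because we couple through a shared Brownian motion. What remains is proportional to $u$ times
\[
E_1 + E_2 \;\defeq\; h\bigl(1-e^{-2(h-\alpha h)}\bigr)\bigl(\nabla f(x_{n+\tfrac12})-\nabla f(x_n^*(\alpha h))\bigr)
+ \Bigl(h g(\alpha h)-\int_0^h g(s)\,\d s\Bigr),
\]
where $g(s)=(1-e^{-2(h-s)})\nabla f(x_n^*(s))$, and an entirely analogous decomposition with weights $e^{-2(h-\cdot)}$ governs $v_{n+1}-v_n^*(h)$. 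The crucial structural fact is that $\alpha h$ is uniform on $[0,h]$, so $\E_\alpha[h g(\alpha h)]=\int_0^h g(s)\,\d s$, i.e.\ $\E_\alpha E_2=0$. This is the unbiased-quadrature property that gives the randomized midpoint its name and drives the higher-order bounds in the first and third inequalities of the lemma.

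For the two $\E_\alpha$-conditional bounds, only $E_1$ survives after $\alpha$-averaging, so I bound $\E\|\E_\alpha E_1\|^2\le \E\|E_1\|^2$ by Jensen, use $L$-Lipschitzness to get $\|E_1\|\le O(h^2)\cdot L\|x_{n+\tfrac12}-x_n^*(\alpha h)\|$ (the weight $h(1-e^{-2(h-\alpha h)})$ is $O(h^2)$ and similarly $he^{-2(h-\alpha h)}=O(h)$ for the $v$-case), and then reduce to bounding $\E\|x_{n+\tfrac12}-x_n^*(\alpha h)\|^2$. This latter error has the same structure one level down: since $x_{n+\tfrac12}$ replaces $\nabla f(x_n^*(s))$ by $\nabla f(x_n)$ on $[0,\alpha h]$, Cauchy--Schwarz combined with $L$-Lipschitzness reduces it to $\int_0^{\alpha h}(\alpha h-s)^2\E\|x_n^*(s)-x_n\|^2\,\d s$, and an auxiliary bound
\[
\E\|x_n^*(s)-x_n\|^2 \;\le\; O\bigl(s^2\|v_n\|^2 + u^2 s^4\|\nabla f(x_n)\|^2 + u d s^3\bigr)
\]
follows directly from \eqref{eq:ULD_sol} by Young's inequality and the Itô isometry (the last term coming from $\E\|\sqrt{u}\int_0^s(1-e^{-2(s-r)})\d B_r\|^2 = O(uds^3)$). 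Plugging in and using $u=1/L$ yields the claimed $h^{10},\,h^{12},\,h^{11}$ scaling.

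For the two unconditional bounds, I need to control $\E\|E_2\|^2$ as well. The clean identity
\[
\E_\alpha\|h g(\alpha h)-\textstyle\int_0^h g\|^2 \;=\; h\!\int_0^h \|g(s)\|^2\,\d s - \bigl\|\!\int_0^h g\bigr\|^2 \;\le\; h\!\int_0^h \|g(s)\|^2 \d s
\]
(variance identity) reduces the bound to a weighted integral of $\|\nabla f(x_n^*(s))\|^2$, which I split via $\|\nabla f(x_n^*(s))\|^2\le 2\|\nabla f(x_n)\|^2+2L^2\|x_n^*(s)-x_n\|^2$ and the auxiliary bound above. The weight $(h-s)^2$ in $\|g\|^2$ supplies the right powers of $h$ to produce $O(u^2 h^4\|\nabla f(x_n)\|^2 + h^6\|v_n\|^2 + udh^7)$ after multiplying by the outer $u^2$. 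The $v$-analog uses the weight $e^{-2(h-s)}=O(1)$ instead of $O(h-s)$, which removes two powers of $h$ throughout and gives the $h^4,h^4,h^5$ scaling. The main obstacle here is purely bookkeeping: tracking the exact power of $h$ contributed by each of the three sources (the outer weight, the integral over $s$, and the auxiliary $\E\|x_n^*(s)-x_n\|^2$ bound), and verifying that the $u=1/L$ substitutions cancel the stray factors of $L^2$ so that every term lands at the promised order.
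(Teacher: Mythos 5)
Your overall architecture matches the paper's: couple through a shared Brownian motion so the stochastic integrals cancel, split the residual into a Lipschitz/fixed-point piece $E_1$ and a quadrature piece $E_2$, kill $E_2$ by unbiasedness for the two $\E_\alpha$-conditional bounds, and reduce $E_1$ to $\E\|x_{n+\frac12}-x_n^*(\alpha h)\|^2$ plus a priori moment bounds on the continuous dynamics (the paper's Lemmas \ref{lem:grad_f} and \ref{lem:idealULD}). Two issues, one of which is a genuine gap.

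The genuine gap is in the fourth inequality. Your bound $\E_\alpha\bigl\|h g(\alpha h)-\int_0^h g\bigr\|^2\le h\int_0^h\|g(s)\|^2\,\d s$ discards the cancellation $-\|\int_0^h g\|^2$, which is harmless when the integrand carries an $O(h-s)$ weight (the $x$-case) but fatal when the weight is $e^{-2(h-s)}=\Theta(1)$ (the $v$-case): there $g$ is nearly constant, its variance is genuinely two orders in $h$ smaller than its second moment, and your bound yields only $u^2h^2\|\nabla f(x_n)\|^2$ rather than the claimed $u^2h^4\|\nabla f(x_n)\|^2$. This is not cosmetic: in the proof of Theorem \ref{thm:main} the term $u^2h^4\sum_n\E\|\nabla f(x_n)\|^2$ is what permits $h\sim\epsilon^{2/3}$; with $u^2h^2$ one would need $h\lesssim\epsilon^2$ and the headline complexity collapses to $\kappa/\epsilon^2$. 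The fix is to center $g$ before taking second moments — the variance is shift-invariant, so $\E_\alpha\|hg(\alpha h)-\int_0^hg\|^2\le h\int_0^h\|g(s)-c\|^2\d s$ for any constant $c$; taking $c=e^{-2h}\nabla f(x_n)$ (or, as the paper does, splitting the quadrature error into a $\nabla f$-variation term and a weight-variation term, each $O(h)$ smaller) restores the missing $h^2$.

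The minor issue: your auxiliary bound $\E\|x_n^*(s)-x_n\|^2\le O(s^2\|v_n\|^2+u^2s^4\|\nabla f(x_n)\|^2+uds^3)$ does not follow "directly" from Young and the It\^o isometry, because the drift term $\frac{u}{2}\int_0^s(1-e^{-2(s-r)})\nabla f(x_n^*(r))\,\d r$ involves the gradient along the (random) trajectory, not at $x_n$. Controlling $\sup_r\|\nabla f(x_n^*(r))\|^2$ by $O(\|\nabla f(x_n)\|^2+L^2h^2\|v_n\|^2+Ldh^3)$ requires the self-consistent bootstrap of Lemma \ref{lem:idealULD} (a coupled system of inequalities closed using $6h^4\le\frac14$ and Doob's maximal inequality). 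This is standard and the paper supplies it, but it should be invoked rather than waved through.
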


In Appendix $\text{\ref{sec:Bounds-on-}}$, we show that the average
value of $\left\Vert v_{n}\right\Vert ^{2}$ is of order $\tilde{O}\left(\frac{d}{L}\right)$;
that of $\left\Vert \nabla f(x_{n})\right\Vert ^{2}$ is of order
$\tilde{O}\left(Ld\right)$. Then, Lemma $\text{\ref{lem:error}}$
shows that the bias of the discretization is of order $\tilde{O}\left(h^{4}\sqrt{\frac{d}{L}}\right)$
and the standard deviation is of order $\tilde{O}\left(h^{2}\sqrt{\frac{d}{L}}\right)$,
which implies the error is larger when $h$ is larger. However, by
Lemma $\text{\ref{lem:ULD}}$, in order for the algorithm to converge
in a small number of steps, we need to avoid choosing an $h$ that
is too small. Therefore, it is important to choose the largest possible
$h$ that can still make the algorithm converge. By Lemma $\text{\ref{lem:ULD}}$,
it is sufficient to run our algorithm for $\tilde{O}\left(\frac{\kappa}{h}\right)$
iterations. Then, the bias will cumulate to $\tilde{O}\left(h^{4}\sqrt{\frac{d}{L}}\cdot\frac{\kappa}{h}\right)=\tilde{O}\left(h^{3}\sqrt{\frac{d\kappa}{m}}\right)$,
and the standard deviation will cumulate to $\tilde{O}\left(h^{2}\sqrt{\frac{d}{L}}\cdot\sqrt{\frac{\kappa}{h}}\right)=\tilde{O}\left(h^{1.5}\sqrt{\frac{d}{m}}\right)$.
Thus, in order to make the $W_{2}$ distance less than $\tilde{O}\left(\epsilon\sqrt{\frac{d}{m}}\right)$,
we show in Theorem $\text{\ref{thm:main}}$ that it is enough to choose
$h$ to be $\tilde{\Theta}\left(\min\left(\frac{\epsilon^{1/3}}{\kappa^{1/6}},\epsilon^{2/3}\right)\right)$.
This choice of $h$ yields the main result of our paper, which is
stated in Theorem $\text{\ref{thm:main}}$. (See Appendix $\text{\ref{sec:Proof-of-Theorem}}$
for the full proof.)
\begin{thm}[Main Result]
\label{thm:main}Let $f$ be a function such that $0\prec m\cdot I_{d}\preceq\nabla^{2}f(x)\preceq L\cdot I_{d}$
for all $x\in\R^{d}$. Let $Y$ be a random point drawn from the density
proportional to $e^{-f}.$ Let the starting point $x_{0}$ be the
point that minimizes $f(x)$ and $v_{0}=0$. For any $0<\epsilon<1,$
if we set the step size of Algorithm $\ref{al:lan}$ as $h=C\min\left(\frac{\epsilon^{1/3}}{\kappa^{1/6}}\log^{-1/6}\left(\frac{1}{\epsilon}\right),\epsilon^{2/3}\log^{-1/3}\left(\frac{1}{\epsilon}\right)\right)$,
for some small constant $C$ and run the algorithm for $N=\frac{2\kappa}{h}\log\left(\frac{20}{\epsilon^{2}}\right)\leq\tilde{O}\left(\frac{\kappa^{7/6}}{\epsilon^{1/3}}+\frac{\kappa}{\epsilon^{2/3}}\right)$
iterations, then Algorithm $\ref{al:lan}$ after $N$ iterations can
generate a random point $X$ such that $W_{2}(X,Y)\leq\epsilon\sqrt{\frac{d}{m}}.$
Furthermore, each iteration of Algorithm $\ref{al:lan}$ involves
computing $\nabla f$ exactly twice.
\end{thm}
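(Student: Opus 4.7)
The plan is to couple the iterates of Algorithm~\ref{al:lan} with an exact stationary underdamped Langevin trajectory $(y_t, w_t)_{t\geq 0}$, where $(y_0, w_0)$ is drawn from the stationary distribution of (\ref{eq:ULD}) (so that $y_t\sim e^{-f}$ for all $t$), using the same Brownian motion as the algorithm. Introduce the Lyapunov potential
\[
\Phi_n \defeq \E\bigl[\|x_n - y_{nh}\|^{2} + \|(x_n + v_n) - (y_{nh} + w_{nh})\|^{2}\bigr],
\]
which upper bounds $W_2^{2}(x_n, Y)$ for the marginal $Y \sim e^{-f}$. Since $x_0 = \arg\min f$, $v_0 = 0$, $y_0$ has density $\propto e^{-f}$, and $w_0 \sim \mathcal{N}(0,I_d/L)$ at stationarity, the standard second-moment estimate for $m$-strongly log-concave measures gives $\Phi_0 = O(d/m)$. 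The goal is to show $\Phi_N \leq \epsilon^{2}(d/m)$ for the claimed choice of $N$ and $h$.

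\textbf{One-step recursion via bias--variance split.} Let $(\tilde x_n(h), \tilde v_n(h))$ denote the exact ULD started from $(x_n, v_n)$ run to time $h$ along the shared Brownian motion, and decompose (treating the $x$ coordinate; the $(x+v)$ coordinate is analogous)
\[
x_{n+1} - y_{(n+1)h} = \underbrace{\bigl(\tilde x_n(h) - y_{(n+1)h}\bigr)}_{A_n} + \underbrace{\bigl(\E_\alpha[x_{n+1}] - \tilde x_n(h)\bigr)}_{B_n} + \underbrace{\bigl(x_{n+1} - \E_\alpha[x_{n+1}]\bigr)}_{C_n}.
\]
The key observation is that $A_n + B_n$ is independent of the fresh sample $\alpha_n$ (conditioning on the past and on the Brownian motion up to $(n+1)h$), while by construction $\E_\alpha[C_n] = 0$; hence $\E\langle A_n + B_n,\,C_n\rangle = 0$, and Young's inequality gives
\[
\E\|A_n + B_n + C_n\|^{2} \leq \bigl(1 + \tfrac{h}{\kappa}\bigr)\E\|A_n\|^{2} + O\bigl(\tfrac{\kappa}{h}\bigr)\E\|B_n\|^{2} + \E\|C_n\|^{2}.
\]
Lemma~\ref{lem:ULD} applied to the coupled exact trajectories contracts the $A_n$ term, $\E\|A_n\|^{2} + (\text{velocity counterpart}) \leq e^{-h/\kappa}\Phi_n$. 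Lemma~\ref{lem:error} bounds the bias $\E\|B_n\|^{2}$ and the variance $\E\|C_n\|^{2} \leq \E\|x_{n+1} - \tilde x_n(h)\|^{2}$ in terms of $\|v_n\|^{2}$, $\|\nabla f(x_n)\|^{2}$, and $ud$. Summing contributions from both coordinates yields a recursion of the form $\Phi_{n+1} \leq e^{-h/(2\kappa)}\Phi_n + \mathcal{E}_n$.

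\textbf{Iteration and parameter tuning.} Unrolling produces $\Phi_N \leq e^{-Nh/(2\kappa)}\Phi_0 + \sum_{n=0}^{N-1} e^{-(N-1-n)h/(2\kappa)}\mathcal{E}_n$, and the geometric sum contributes a further factor $O(\kappa/h)$. Plugging the a priori moment bounds $\E\|v_n\|^{2} = \tilde O(d/L)$ and $\E\|\nabla f(x_n)\|^{2} = \tilde O(Ld)$ from Appendix~\ref{sec:Bounds-on-} into Lemma~\ref{lem:error}, the velocity-component bias $\tilde O(h^{8}d/L)$ and velocity-component variance $\tilde O(h^{4}d/L)$ dominate, giving accumulated bias-squared $\tilde O(\kappa^{2}h^{6}\cdot d/m)$ and accumulated variance $\tilde O(\kappa h^{3}\cdot d/m)$. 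Choosing $N = (2\kappa/h)\log(20/\epsilon^{2})$ suppresses $e^{-Nh/(2\kappa)}\Phi_0$ to at most $\tfrac{1}{3}\epsilon^{2}(d/m)$, and the step size $h = \tilde\Theta(\min(\epsilon^{1/3}/\kappa^{1/6},\,\epsilon^{2/3}))$ is the largest value forcing both accumulated error terms below $\tfrac{1}{3}\epsilon^{2}(d/m)$; adding the three contributions yields $\Phi_N \leq \epsilon^{2}(d/m)$, hence $W_2(X, Y) \leq \epsilon\sqrt{d/m}$. The main obstacle is the uniform-in-$n$ moment control of $\|v_n\|^{2}$ and $\|\nabla f(x_n)\|^{2}$ that the Lemma~\ref{lem:error} estimates rely on: these cannot be plugged in naively since they must be proved simultaneously with (a crude version of) the discretization analysis via an inductive argument, which is precisely what Appendix~\ref{sec:Bounds-on-} carries out. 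Once those bounds are in hand, the coupling-and-accumulation computation above is routine.
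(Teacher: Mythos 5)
Your proposal follows essentially the same route as the paper's proof: the same stationary coupling and Lyapunov potential, the same bias--variance split exploiting that the randomized midpoint is unbiased in $\alpha$ (so the cross term only picks up the bias, weighted by $O(\kappa/h)$ via Young's inequality), the same contraction from Lemma~\ref{lem:ULD}, and the same reliance on the Appendix~\ref{sec:Bounds-on-} moment bounds, whose coupling back to the final potential (the paper absorbs a $\frac{1}{2}\Phi_N$ term at the end rather than running an induction) you correctly flag as the delicate step. One minor slip: your accumulated errors should read $\tilde O(\kappa h^{6}\cdot d/m)$ and $\tilde O(h^{3}\cdot d/m)$ (you appear to have written $d/m$ where $d/L$ belongs), which is what makes your stated $h=\tilde\Theta(\min(\epsilon^{1/3}/\kappa^{1/6},\epsilon^{2/3}))$ consistent with the claimed iteration count.
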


\subsection{A More General Algorithm}

\selectlanguage{french}%
\begin{algorithm}
\caption{\label{al:lan2}\foreignlanguage{american}{Randomized Midpoint Method
for ULD (Parallel)}}

\begin{algorithmic}[1]

\selectlanguage{american}%
\STATE${\bf Procedure}$ RandomMidpoint\_P$(x_{0},v_{0},N,h,R)$

\selectlanguage{french}%
\STATE ${\bf For}\;n=0,...,N-1$

\STATE$\quad$Randomly sample \foreignlanguage{american}{$\alpha_{1}$,
..., $\alpha_{R}$ uniformly from $\left[0,\frac{1}{R}\right]$, $\left[\frac{1}{R},\frac{2}{R}\right]$,
..., $\left[\frac{R-1}{R},1\right]$.}

\STATE$\quad$Generate Gaussian r.v. $\left(W_{1,1}^{(n)},...,W_{1,R}^{(n)},W_{2}^{(n)},W_{3}^{(n)}\right)\in\R^{(R+2)d}$
similar to Appendix $\text{\ref{sec:Brownian-Motion-Simulation}}$

\STATE$\quad$$x_{n}^{(0,i)}=x_{n}$ for $i=1,...,R$.

\STATE$\quad$${\bf For}\;k=1,...,K-1,i=1,...,R$

\STATE$\quad$$\quad$$x_{n}^{(k,i)}=x_{n}+\frac{1}{2}\left(1-e^{-2\alpha_{i}h}\right)v_{n}$

\STATE$\quad$$\qquad$$-\frac{1}{2}u\sum_{j=1}^{i}\left[\int_{(j-1)\delta}^{\min(j\delta,\alpha_{i}h)}\left(1-e^{-2(\alpha_{i}h-s)}\right)\d s\cdot\nabla f(x_{n}^{(k-1,j)})\right]+\sqrt{u}W_{1,i}^{(n)}$

\STATE$\quad$ ${\bf end}{\bf \;for}$

\STATE$\quad$$x_{n+1}=x_{n}+\frac{1}{2}\left(1-e^{-2h}\right)v_{n}-\frac{1}{2}u\sum_{i=1}^{R}\delta\left(1-e^{-2(h-\alpha_{i}h)}\right)\nabla f(x_{n}^{(K-1,i)})+\sqrt{u}W_{2}^{(n)},$\\
\STATE$\quad v_{n+1}=v_{n}e^{-2h}-u\sum_{i=1}^{R}\delta e^{-2(h-\alpha_{i}h)}\nabla f(x_{n}^{(K-1,i)})+2\sqrt{u}W_{3}^{(n)}.$

\STATE ${\bf end}{\bf \;for}$

\STATE${\bf end}{\bf \;procedure}$

\end{algorithmic}
\end{algorithm}
\selectlanguage{american}%
Now we show how our algorithm can be parallelized. The algorithm studied
in this section can be viewed as a more general version of Algorithm
$\text{\ref{al:lan}}.$ Instead of choosing one random point from
$[0,h]$, we divide the time interval $[0,h]$ into $R$ pieces, each
of length $\delta=\frac{h}{R}$, and choose one random point from
each piece. That is, we randomly choose $\alpha_{1},$ $\alpha_{2}$,
..., $\alpha_{R}$ uniformly from $\left[0,\frac{1}{R}\right]$, $\left[\frac{1}{R},\frac{2}{R}\right]$,
..., $\left[\frac{R-1}{R},1\right]$. As in Algorithm $\text{\ref{al:lan}}$,
to approximate $\left(x_{n}^{*}(h),v_{n}^{*}(h)\right)$, we use
\begin{align*}
\tilde{x} & =x_{n}+\frac{1-e^{-2h}}{2}v_{n}-\frac{u}{2}\sum_{i=1}^{R}\delta\left(1-e^{-2(h-\alpha_{i}h)}\right)\nabla f(x_{n}^{*}(\alpha_{i}h))+\sqrt{u}\int_{0}^{h}\left(1-e^{-2(h-s)}\right)\mathrm{d}B_{s},\\
\tilde{v} & =v_{n}e^{-2h}-u\sum_{i=1}^{R}\delta e^{-2(h-\alpha_{i}h)}\nabla f(x^{*}(\alpha_{i}h))+2\sqrt{u}\int_{0}^{h}e^{-2(h-s)}\d B_{s},
\end{align*}

which gives an unbiased estimator of $\left(x_{n}^{*}(h),v_{n}^{*}(h)\right)$.
The next step is to approximate $x_{n}^{*}(\alpha_{i}h)$ for $i=1,..,R$.
We know that the solution $x_{n}^{*}$ is the fixed point of the operator
$\T$ defined in $\text{\ensuremath{\left(\ref{eq:op_t}\right)}}$.
To solve the fixed point of $\T$, we can use the fixed point iteration
method, which applies the operator $\T$ multiple times on some initial
point. By the Banach fixed point theorem, the resulting points can
converge to the fixed point of $\T$. Instead of applying $\T$, which
involves computing an integral, we apply the operator $\tilde{\T}$
, which approximates $\T$, on $X=\left(x^{(1)},...,x^{(R)}\right)$
,
\begin{align*}
\tilde{\T}\left(X\right)_{i} & =x_{n}+\frac{1}{2}\left(1-e^{-2\alpha_{i}h}\right)v_{n}-\frac{1}{2}u\sum_{j=1}^{i}\left[\int_{(j-1)\delta}^{\min(j\delta,\alpha_{i}h)}\left(1-e^{-2(\alpha_{i}h-s)}\right)\d s\cdot\nabla f(x^{(j)})\right]\\
 & +\sqrt{u}\int_{0}^{\alpha_{i}h}\left(1-e^{-2(\alpha_{i}h-s)}\right)\d B_{s}.
\end{align*}
We set the initial points to $x_{n}^{(0,j)}=x_{n}$ for $j=1,...,R$.
Then, we apply $\tilde{\T}$ for $K$ times and get $(x^{(K,1)},...,x^{(K,R)})=\tilde{\T}^{\circ K}(x^{(0,1)},...,x^{(0,R)})$.
The preceding steps are summarized in Algorithm $\text{\ref{al:lan2}}$.
It is easy to see Algorithm $\text{\ref{al:lan}}$ is a special case
of Algorithm $\text{\ref{al:lan2}}$ with $R=1$ and $K=2$. 

This algorithm can be parallelized since we can compute $\tilde{\T}(x^{(k,1)},...,x^{(k,R)})_{j}$
for each $j$ parallelly. It can be shown that it is sufficient to
choose $K$ to depend logarithmically on $\kappa$ and $\epsilon$.
Similar to Algorithm $\text{\ref{al:lan}}$, we can show that Algorithm
$\text{\ref{al:lan2}}$ has the guarantee that the bias of the discretization
is of order $\tilde{O}\left(\frac{h^{4}}{R}\sqrt{\frac{d}{L}}\right)$
and the standard deviation is of order $\tilde{O}\left(\frac{h^{2}}{R}\sqrt{\frac{d}{L}}\right)$
(Appendix $\text{\ref{sec:Discretization-Error-of}}$). Then, summing
from $\tilde{O}\left(\frac{\kappa}{h}\right)$ iterations, the total
bias would be $\tilde{O}\left(\frac{h^{4}}{R}\sqrt{\frac{d}{L}}\cdot\frac{\kappa}{h}\right)=\tilde{O}\left(\frac{h^{3}}{R}\sqrt{\frac{d\kappa}{m}}\right)$,
and the total standard deviation would be $\tilde{O}\left(\frac{h^{2}}{R}\sqrt{\frac{d}{L}}\cdot\sqrt{\frac{\kappa}{h}}\right)=\tilde{O}\left(\frac{h^{1.5}}{R}\sqrt{\frac{d}{m}}\right)$.
By choosing $R=\tilde{\Theta}\left(\frac{\sqrt{\kappa}}{\epsilon}\right)$,
it is enough to choose $h$ to be a constant to achieve less than
$\epsilon\sqrt{\frac{d}{m}}$ error, which shows that the algorithm
needs only $O\left(\frac{\kappa}{h}\log\frac{1}{\epsilon}\right)=O(\kappa\log\frac{1}{\epsilon})$
parallel steps. Appendix $\text{\ref{sec:Discretization-Error-of}}$
gives a partial proof of the guarantee of Algorithm $\text{\ref{al:lan2}}$.
The other part of the proof is similar to that in Algorithm $\text{\ref{al:lan}}$,
so we omit it here. 
\begin{thm}
\label{thm:main-1}Let $f$ be a function such that $0\prec m\cdot I_{d}\preceq\nabla^{2}f(x)\preceq L\cdot I_{d}$
for all $x\in\R^{d}$. Let $Y$ be a random point drawn from the density
proportional to $e^{-f}.$ Algorithm $\ref{al:lan2}$ can generate
a random point $X$ such that $W_{2}(X,Y)\leq\epsilon\sqrt{\frac{d}{m}}$
in $O(\kappa\log\frac{1}{\epsilon})$ parallel steps. Furthermore,
each iteration of Algorithm $\ref{al:lan2}$ involves computing $\tilde{\Theta}\left(\frac{\sqrt{\kappa}}{\epsilon}\right)$
of $\nabla f$s.
\end{thm}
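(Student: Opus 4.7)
The plan is to mirror the proof of Theorem~\ref{thm:main}, but replace the per-iteration discretization bounds of Lemma~\ref{lem:error} with bounds tailored to Algorithm~\ref{al:lan2} that exploit two improvements over Algorithm~\ref{al:lan}: (i) averaging an integrand against $R$ independently drawn midpoints, one per subinterval of length $\delta = h/R$, decreases the variance and bias of the integral estimator by a factor polynomial in $1/R$; and (ii) running the approximate fixed-point operator $\tilde{\T}$ for $K$ rounds drives the inner fixed-point residual down geometrically, so $K = O(\log(\kappa/\epsilon))$ suffices.

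First I would show that $\tilde{\T}$ is a contraction in the max norm $\max_i\|x^{(i)} - y^{(i)}\|$ with Lipschitz constant $O(h^2)$, using $u = 1/L$, the $L$-Lipschitz property of $\nabla f$, and $\alpha_i h \leq h$. For $h$ a sufficiently small absolute constant, the Banach fixed-point theorem then gives geometric convergence to the true fixed point $(x_n^{*}(\alpha_i h))_{i=1}^{R}$. The initial displacement starting from $x_n^{(0,i)} = x_n$ is $O(h\|v_n\| + uh^2\|\nabla f(x_n)\| + \sqrt{uh^3 d})$, so after $K = O(\log(\kappa/\epsilon))$ rounds the residual is negligible compared to the Monte-Carlo error contributed by the midpoint estimator.

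Second, I would bound the per-iteration bias and variance of $(x_{n+1}, v_{n+1})$. Because $\alpha_1,\ldots,\alpha_R$ are independent and uniform on their respective subintervals, the Riemann sum $\sum_i \delta(1 - e^{-2(h - \alpha_i h)})\nabla f(x_n^{*}(\alpha_i h))$ is an unbiased estimator of the corresponding integral, and the variances of its $R$ independent summands add. Combined with the fixed-point residual from the previous step, this yields (as verified in Appendix~\ref{sec:Discretization-Error-of}) per-step bias $\tilde{O}((h^4/R)\sqrt{d/L})$ and standard deviation $\tilde{O}((h^2/R)\sqrt{d/L})$, after averaging against the moment bounds $\E\|v_n\|^2 = \tilde{O}(d/L)$ and $\E\|\nabla f(x_n)\|^2 = \tilde{O}(Ld)$ of Appendix~\ref{sec:Bounds-on-}.

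Finally, I would plug these per-step bounds into the convergence argument: by Lemma~\ref{lem:ULD}, exact ULD contracts at rate $1/\kappa$, so $N = \tilde{O}(\kappa/h)\log(1/\epsilon)$ outer iterations suffice, and the errors accumulate in $W_2$ to $\tilde{O}((h^3/R)\sqrt{d\kappa/m})$ (bias) and $\tilde{O}((h^{1.5}/R)\sqrt{d/m})$ (standard deviation). Taking $h$ a small absolute constant and $R = \tilde{\Theta}(\sqrt{\kappa}/\epsilon)$ reduces both to at most $\epsilon\sqrt{d/m}$; each of the $N = O(\kappa\log(1/\epsilon))$ outer iterations runs $K = O(\log(\kappa/\epsilon))$ sequential rounds of $\tilde{\T}$, with $R$ parallel gradient queries per round, giving $KR = \tilde{\Theta}(\sqrt{\kappa}/\epsilon)$ gradients per iteration. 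The main obstacle I expect is extracting the $1/R$ factor in the variance cleanly: the $R$ midpoints share the Brownian path and the initial pair $(x_n,v_n)$ and are further coupled through the joint fixed-point iteration, so the argument has to exploit that, conditional on the Brownian motion and $(x_n,v_n)$, the zero-mean errors $(1-e^{-2(h-\alpha_i h)})\nabla f(x_n^{*}(\alpha_i h)) - \E_{\alpha_i}[\cdot]$ are independent across $i$, and verify that the fixed-point iteration preserves enough of this independence through $K$ rounds.
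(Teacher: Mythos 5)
Your proposal follows essentially the same route as the paper: Appendix \ref{sec:Discretization-Error-of} proves exactly the contraction of $\tilde{\T}$ you describe (Lemma \ref{lem:step_induc}, with squared-norm contraction factor $2R^{4}\delta^{4}=2h^{4}$ per round, so $K=\Omega(\log\frac{1}{\delta^{4}})=O(\log\frac{\kappa}{\epsilon})$ rounds make the fixed-point residual negligible), then derives per-step bias and variance bounds for $v_{n+1}$ of order $h^{8}/R^{2}$ and $h^{4}/R^{2}$ respectively, and the outer argument (Lemma \ref{lem:ULD} contraction plus the moment bounds of Appendix \ref{sec:Bounds-on-}) is carried over verbatim from Theorem \ref{thm:main}. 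One remark on the obstacle you flag at the end: the paper does not need conditional independence of the summands across $i$ to extract the $1/R$ factor in the standard deviation. The gain comes purely from stratification: because $\alpha_{i}h$ and $s$ lie in the same length-$\delta$ subinterval, one has $|e^{-2(h-\alpha_{i}h)}-e^{-2(h-s)}|\leq2\delta$ and $\E\sup_{s\in[(i-1)\delta,i\delta]}\|x_{n}^{*}(\alpha_{i}h)-x_{n}^{*}(s)\|^{2}=O(\delta^{2}\|v_{n}\|^{2}+\cdots)$ rather than the $O(h)$-scale bounds of the $R=1$ case, after which the crude inequality $\left(\sum_{i=1}^{R}a_{i}\right)^{2}\leq R\sum_{i=1}^{R}a_{i}^{2}$ already yields the $R^{2}\delta^{4}=h^{4}/R^{2}$ coefficient; independence of the $\alpha_{i}$ is used only for unbiasedness of $\E_{\alpha}$. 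This is why the coupling of the iterates $x_{n}^{(K-1,i)}$ across $i$ through the joint fixed-point iteration causes no difficulty---that term is handled by the same Cauchy--Schwarz step together with the geometrically small residual from Lemma \ref{lem:step_induc}. (Your variance-addition route, if you insist on it, also works once combined with the same per-subinterval bounds, and would in fact give $h^{2}/R^{3/2}$, but it is not what the paper does and is not needed for the stated $R=\tilde{\Theta}(\sqrt{\kappa}/\epsilon)$.)
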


\section{Numerical Experiments}

\begin{figure}[h]
\includegraphics[scale=0.33]{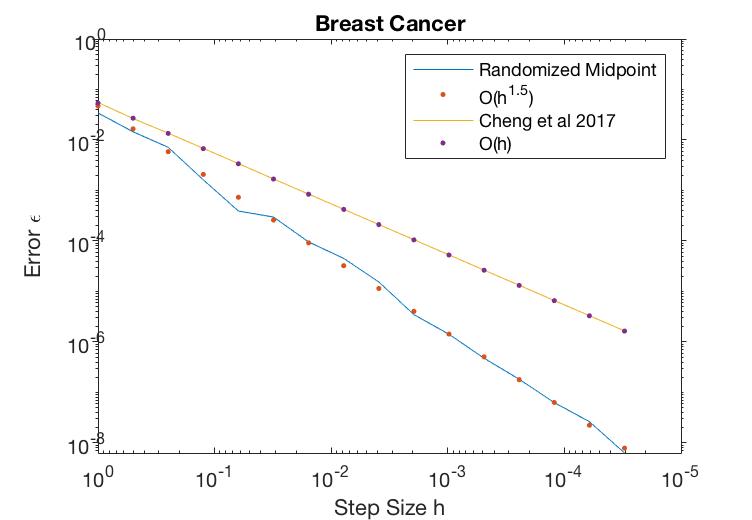}\includegraphics[scale=0.33]{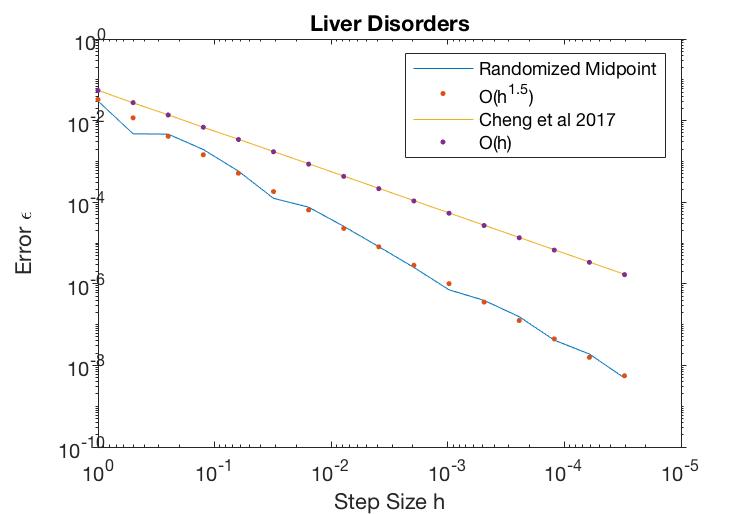}\caption{\label{fig:1}Error of random walks with different choice of step
size.}

\end{figure}

In this section, we compare the algorithm from our paper, randomized
midpoint method, with the one from \cite{cheng2017underdamped}. We
test the algorithms on the liver-disorders dataset and the breast-cancer
dataset from UCL machine learning \cite{Dua:2019}.\foreignlanguage{english}{
In both datasets, we observe a set of independent samples $\left\{ x_{i},y_{i}\right\} _{i=1}^{m}$,
where $y_{i}$ is the label, $x_{i}$ is the feature and $m$ is the
number of samples. We sample from the target distribution $p^{*}(\theta)\propto\exp\left(-f(\theta)\right),$
where 
\[
f(\theta)=\frac{\lambda}{2}\left\Vert \theta\right\Vert ^{2}+\frac{1}{m}\sum_{i=1}^{m}\log\left(\exp\left(-y_{i}x_{i}^{T}\theta\right)+1\right),
\]
for regularization parameters $\lambda$. We set $\lambda$ to be
$10^{-2}$ in our experiments. Figure $\text{\ref{fig:1}}$ shows
the error of randomized midpoint method and the algorithm from} \cite{cheng2017underdamped}\foreignlanguage{english}{
with different step size $h$. The error is measured by the $\ell_{2}$
distance to the true solution of (\ref{eq:ULD}) at time $N=5000$,
a time much greater than the mixing time of (\ref{eq:ULD}) for both
datasets. Our results show that the $\epsilon$ dependence analysis
of our algorithm and that of \cite{cheng2017underdamped} are both
tight.} However, we note that the logistic function is infinitely
differentiable, so there are methods of higher orders for this objective
such as the standard midpoint method and Runge--Kutta methods.

\bibliography{sampling.bib}

\begin{thebibliography}{10}

\bibitem{andrieu2003introduction}
Christophe Andrieu, Nando De~Freitas, Arnaud Doucet, and Michael~I Jordan.
\newblock An introduction to {MCMC} for machine learning.
\newblock {\em Machine Learning}, 50(1-2):5--43, 2003.

\bibitem{applegate1991sampling}
David Applegate and Ravi Kannan.
\newblock Sampling and integration of near log-concave functions.
\newblock In {\em Proceedings of the Twenty-Third Annual ACM Symposium on
  Theory of Computing}, pages 156--163. ACM, 1991.

\bibitem{alma991000409079704987}
David~Yudin Arkadii~Nemirovsky.
\newblock {\em Problem complexity and method efficiency in optimization}.
\newblock Wiley-Interscience Series in Discrete Mathematics. John Wiley \&
  Sons, 1983.

\bibitem{belisle1993hit}
Claude~JP B{\'e}lisle, H~Edwin Romeijn, and Robert~L Smith.
\newblock Hit-and-{R}un algorithms for generating multivariate distributions.
\newblock {\em Mathematics of Operations Research}, 18(2):255--266, 1993.

\bibitem{10.1093/imanum/drs003}
Nawaf Bou-Rabee and Martin Hairer.
\newblock {Nonasymptotic mixing of the {MALA} algorithm}.
\newblock {\em IMA Journal of Numerical Analysis}, 33(1):80--110, 03 2012.

\bibitem{chatterji2018theory}
Niladri~S Chatterji, Nicolas Flammarion, Yi-An Ma, Peter~L Bartlett, and
  Michael~I Jordan.
\newblock On the theory of variance reduction for stochastic gradient {M}onte
  {C}arlo.
\newblock {\em arXiv preprint arXiv:1802.05431}, 2018.

\bibitem{chen2019optimal}
Zongchen Chen and Santosh~S Vempala.
\newblock Optimal convergence rate of {H}amiltonian {M}onte {C}arlo for
  strongly logconcave distributions.
\newblock {\em arXiv preprint arXiv:1905.02313}, 2019.

\bibitem{cheng2017convergence}
Xiang Cheng and Peter Bartlett.
\newblock Convergence of {L}angevin {MCMC} in {KL}-divergence.
\newblock {\em arXiv preprint arXiv:1705.09048}, 2017.

\bibitem{cheng2018sharp}
Xiang Cheng, Niladri~S Chatterji, Yasin Abbasi-Yadkori, Peter~L Bartlett, and
  Michael~I Jordan.
\newblock Sharp convergence rates for {L}angevin dynamics in the nonconvex
  setting.
\newblock {\em arXiv preprint arXiv:1805.01648}, 2018.

\bibitem{cheng2017underdamped}
Xiang Cheng, Niladri~S Chatterji, Peter~L Bartlett, and Michael~I Jordan.
\newblock Underdamped {L}angevin {MCMC}: A non-asymptotic analysis.
\newblock {\em arXiv preprint arXiv:1707.03663}, 2017.

\bibitem{Cousins:2015:BKG:2746539.2746563}
Benjamin Cousins and Santosh Vempala.
\newblock Bypassing {KLS}: Gaussian cooling and a cubic volume algorithm.
\newblock In {\em Proceedings of the Forty-seventh Annual ACM Symposium on
  Theory of Computing}, STOC '15, pages 539--548, New York, NY, USA, 2015. ACM.

\bibitem{dalalyan2017further}
Arnak~S Dalalyan.
\newblock Further and stronger analogy between sampling and optimization:
  {L}angevin {M}onte {C}arlo and gradient descent.
\newblock {\em arXiv preprint arXiv:1704.04752}, 2017.

\bibitem{doi:10.1111/rssb.12183}
Arnak~S. Dalalyan.
\newblock Theoretical guarantees for approximate sampling from smooth and
  log-concave densities.
\newblock {\em Journal of the Royal Statistical Society: Series B (Statistical
  Methodology)}, 79(3):651--676, 2017.

\bibitem{dalalyan2019user}
Arnak~S Dalalyan and Avetik Karagulyan.
\newblock User-friendly guarantees for the {L}angevin {M}onte {C}arlo with
  inaccurate gradient.
\newblock {\em Stochastic Processes and Their Applications}, 2019.

\bibitem{dalalyan2018sampling}
Arnak~S Dalalyan and Lionel Riou-Durand.
\newblock On sampling from a log-concave density using kinetic {L}angevin
  diffusions.
\newblock {\em arXiv preprint arXiv:1807.09382}, 2018.

\bibitem{doob1953stochastic}
Joseph~Leo Doob.
\newblock {\em Stochastic Processes}, volume 101.
\newblock New York, Wiley, 1953.

\bibitem{Dua:2019}
Dheeru Dua and Casey Graff.
\newblock {UCI} machine learning repository, 2017.

\bibitem{durmus2019analysis}
Alain Durmus, Szymon Majewski, and Blazej Miasojedow.
\newblock Analysis of langevin monte carlo via convex optimization.
\newblock {\em Journal of Machine Learning Research}, 20(73):1--46, 2019.

\bibitem{durmus2016high}
Alain Durmus and Eric Moulines.
\newblock High-dimensional bayesian inference via the unadjusted {L}angevin
  algorithm.
\newblock {\em arXiv preprint arXiv:1605.01559}, 2016.

\bibitem{durmus2017nonasymptotic}
Alain Durmus and Eric Moulines.
\newblock Nonasymptotic convergence analysis for the unadjusted {L}angevin
  algorithm.
\newblock {\em The Annals of Applied Probability}, 27(3):1551--1587, 2017.

\bibitem{dwivedi2018log}
Raaz Dwivedi, Yuansi Chen, Martin~J Wainwright, and Bin Yu.
\newblock Log-concave sampling: {M}etropolis-{H}astings algorithms are fast!
\newblock {\em arXiv preprint arXiv:1801.02309}, 2018.

\bibitem{dyer1991computing}
Martin Dyer and Alan Frieze.
\newblock Computing the volume of convex bodies: a case where randomness
  provably helps.
\newblock {\em Probabilistic Combinatorics and Its Applications}, 44:123--170,
  1991.

\bibitem{dyer1991random}
Martin Dyer, Alan Frieze, and Ravi Kannan.
\newblock A random polynomial-time algorithm for approximating the volume of
  convex bodies.
\newblock {\em Journal of the ACM (JACM)}, 38(1):1--17, 1991.

\bibitem{eberle2017couplings}
Andreas Eberle, Arnaud Guillin, and Raphael Zimmer.
\newblock Couplings and quantitative contraction rates for {L}angevin dynamics.
\newblock {\em arXiv preprint arXiv:1703.01617}, 2017.

\bibitem{203584}
S.~B. {Gelfand} and S.~K. {Mitter}.
\newblock Recursive stochastic algorithms for global optimization in {R}\^{}d.
\newblock In {\em 29th IEEE Conference on Decision and Control}, pages 220--221
  vol.1, Dec 1990.

\bibitem{jarner2000geometric}
S{\o}ren~Fiig Jarner and Ernst Hansen.
\newblock Geometric ergodicity of {M}etropolis algorithms.
\newblock {\em Stochastic Processes and Their Applications}, 85(2):341--361,
  2000.

\bibitem{Kannan1997}
Ravi Kannan, Laszlo Lovasz, and Miklos Simonovits.
\newblock Random walks and an o*(n5) volume algorithm for convex bodies.
\newblock {\em Random Structures \& Algorithms}, 11(1):1--50, 1997.

\bibitem{kramers1940brownian}
Hendrik~Anthony Kramers.
\newblock Brownian motion in a field of force and the diffusion model of
  chemical reactions.
\newblock {\em Physica}, 7(4):284--304, 1940.

\bibitem{lee2018algorithmic}
Yin~Tat Lee, Zhao Song, and Santosh~S Vempala.
\newblock Algorithmic theory of {ODE}s and sampling from well-conditioned
  logconcave densities.
\newblock {\em arXiv preprint arXiv:1812.06243}, 2018.

\bibitem{Lee:2017:GWP:3055399.3055416}
Yin~Tat Lee and Santosh~S. Vempala.
\newblock Geodesic walks in polytopes.
\newblock In {\em Proceedings of the 49th Annual ACM SIGACT Symposium on Theory
  of Computing}, STOC 2017, pages 927--940, New York, NY, USA, 2017. ACM.

\bibitem{Lee:2018:CRR:3188745.3188774}
Yin~Tat Lee and Santosh~S. Vempala.
\newblock Convergence rate of riemannian {H}amiltonian {M}onte {C}arlo and
  faster polytope volume computation.
\newblock In {\em Proceedings of the 50th Annual ACM SIGACT Symposium on Theory
  of Computing}, STOC 2018, pages 1115--1121, New York, NY, USA, 2018. ACM.

\bibitem{lindelof1894application}
Ernest Lindelof.
\newblock Sur lapplication de la methode des approximations successives aux
  equations differentielles ordinaires du premier ordre.
\newblock {\em Comptes rendus hebdomadaires des seances de lAcademie des
  sciences}, 116(3):454--457, 1894.

\bibitem{lovasz1999hit}
L{\'a}szl{\'o} Lov{\'a}sz.
\newblock Hit-and-{R}un mixes fast.
\newblock {\em Mathematical Programming}, 86(3):443--461, 1999.

\bibitem{lovasz1990mixing}
L{\'a}szl{\'o} Lov{\'a}sz and Mikl{\'o}s Simonovits.
\newblock The mixing rate of {M}arkov chains, an isoperimetric inequality, and
  computing the volume.
\newblock In {\em Proceedings., 31st Annual Symposium on Foundations of
  Computer Science}, pages 346--354. IEEE, 1990.

\bibitem{lovasz1993random}
L{\'a}szl{\'o} Lov{\'a}sz and Mikl{\'o}s Simonovits.
\newblock Random walks in a convex body and an improved volume algorithm.
\newblock {\em Random structures \& algorithms}, 4(4):359--412, 1993.

\bibitem{lovasz2006hit}
L{\'a}szl{\'o} Lov{\'a}sz and Santosh Vempala.
\newblock Hit-and-{R}un from a corner.
\newblock {\em SIAM Journal on Computing}, 35(4):985--1005, 2006.

\bibitem{lovasz2006simulated}
L{\'a}szl{\'o} Lov{\'a}sz and Santosh Vempala.
\newblock Simulated annealing in convex bodies and an o*(n4) volume algorithm.
\newblock {\em Journal of Computer and System Sciences}, 72(2):392--417, 2006.

\bibitem{lovasz2007geometry}
L{\'a}szl{\'o} Lov{\'a}sz and Santosh Vempala.
\newblock The geometry of logconcave functions and sampling algorithms.
\newblock {\em Random Structures \& Algorithms}, 30(3):307--358, 2007.

\bibitem{arxiv190200996}
Yi-An Ma, Niladri Chatterji, Xiang Cheng, Nicolas Flammarion, Peter Bartlett,
  and Michael~I. Jordan.
\newblock Is there an analog of nesterov acceleration for {MCMC}?
\newblock {\em arXiv preprint arXiv:1902.00996}, 2019.

\bibitem{ma2015complete}
Yi-An Ma, Tianqi Chen, and Emily Fox.
\newblock A complete recipe for stochastic gradient {MCMC}.
\newblock In {\em Advances in Neural Information Processing Systems}, pages
  2917--2925, 2015.

\bibitem{mangoubi2017rapid}
Oren Mangoubi and Aaron Smith.
\newblock Rapid mixing of {H}amiltonian {M}onte {C}arlo on strongly log-concave
  distributions.
\newblock {\em arXiv preprint arXiv:1708.07114}, 2017.

\bibitem{mangoubi2018dimensionally}
Oren Mangoubi and Nisheeth Vishnoi.
\newblock Dimensionally tight bounds for second-order {H}amiltonian {M}onte
  {C}arlo.
\newblock In {\em Advances in Neural Information Processing Systems}, pages
  6027--6037, 2018.

\bibitem{mangoubi2019faster}
Oren Mangoubi and Nisheeth~K. Vishnoi.
\newblock Faster algorithms for polytope rounding, sampling, and volume
  computation via a sublinear "ball walk'', 2019.

\bibitem{mengersen1996rates}
Kerrie~L Mengersen and Richard~L Tweedie.
\newblock Rates of convergence of the {H}astings and {M}etropolis algorithms.
\newblock {\em The Annals of Statistics}, 24(1):101--121, 1996.

\bibitem{mou2019high}
Wenlong Mou, Yi-An Ma, Martin~J Wainwright, Peter~L Bartlett, and Michael~I
  Jordan.
\newblock High-order langevin diffusion yields an accelerated mcmc algorithm.
\newblock {\em arXiv preprint arXiv:1908.10859}, 2019.

\bibitem{neal2011mcmc}
Radford~M Neal.
\newblock {MCMC} using {H}amiltonian dynamics.
\newblock {\em Handbook of Markov Chain Monte cCarlo}, 2(11):2, 2011.

\bibitem{Pereyra2016}
Marcelo Pereyra.
\newblock Proximal {M}arkov chain {M}onte {C}arlo algorithms.
\newblock {\em Statistics and Computing}, 26(4):745--760, Jul 2016.

\bibitem{picard1898methodes}
Emile Picard.
\newblock Sur les methodes dapproximations successives dans la theorie des
  equations differentielles.
\newblock {\em American Journal of Mathematics}, pages 87--100, 1898.

\bibitem{pillai2012optimal}
Natesh~S Pillai, Andrew~M Stuart, and Alexandre~H Thi{\'e}ry.
\newblock Optimal scaling and diffusion limits for the {L}angevin algorithm in
  high dimensions.
\newblock {\em The Annals of Applied Probability}, 22(6):2320--2356, 2012.

\bibitem{raginsky2017non}
Maxim Raginsky, Alexander Rakhlin, and Matus Telgarsky.
\newblock Non-convex learning via stochastic gradient {L}angevin dynamics: a
  nonasymptotic analysis.
\newblock {\em arXiv preprint arXiv:1702.03849}, 2017.

\bibitem{Roberts97optimalscaling}
Gareth~O. Roberts and Jeffrey~S. Rosenthal.
\newblock Optimal scaling of discrete approximations to {L}angevin diffusions.
\newblock {\em Journal of the Royal Statistical Society: Series B},
  60:255--268, 1997.

\bibitem{roberts1996exponential}
Gareth~O Roberts and Richard~L Tweedie.
\newblock Exponential convergence of {L}angevin distributions and their
  discrete approximations.
\newblock {\em Bernoulli}, 2(4):341--363, 1996.

\bibitem{roberts1996geometric}
Gareth~O Roberts and Richard~L Tweedie.
\newblock Geometric convergence and central limit theorems for multidimensional
  {H}astings and {M}etropolis algorithms.
\newblock {\em Biometrika}, 83(1):95--110, 1996.

\bibitem{russo2018tutorial}
Daniel~J Russo, Benjamin Van~Roy, et~al.
\newblock A tutorial on thompson sampling.
\newblock {\em Foundations and Trends{\textregistered} in Machine Learning},
  11(1):1--96, 2018.

\bibitem{vempala2010recent}
Santosh~S Vempala.
\newblock Recent progress and open problems in algorithmic convex geometry.
\newblock In {\em IARCS Annual Conference on Foundations of Software Technology
  and Theoretical Computer Science (FSTTCS 2010)}. Schloss
  Dagstuhl-Leibniz-Zentrum fuer Informatik, 2010.

\bibitem{xifara2014langevin}
Tatiana Xifara, Chris Sherlock, Samuel Livingstone, Simon Byrne, and Mark
  Girolami.
\newblock Langevin diffusions and the {M}etropolis-adjusted {L}angevin
  algorithm.
\newblock {\em Statistics \& Probability Letters}, 91:14--19, 2014.

\bibitem{zhang2017hitting}
Yuchen Zhang, Percy Liang, and Moses Charikar.
\newblock A hitting time analysis of stochastic gradient {L}angevin dynamics.
\newblock {\em arXiv preprint arXiv:1702.05575}, 2017.

\end{thebibliography}
\bibliographystyle{plain}

\newpage{}

\appendix

\section{Brownian Motion Simulation}

\label{sec:Brownian-Motion-Simulation}

In this section, we introduce how $W_{1},$ $W_{2}$ and $W_{3}$
can be sampled. Let $\left\{ B_{t}\right\} _{t\in[0,h]}$ be the standard
$d$-dimensional Brownian motion on $t\in[0,h]$. In Algorithm $\text{\ref{al:lan}},$
$W_{1}=\int_{0}^{\alpha h}\left(1-e^{-2(\alpha h-s)}\right)\d B_{s}$,
$W_{2}=\int_{0}^{h}\left(1-e^{-2(h-s)}\right)\d B_{s}$ and $W_{3}=\int_{0}^{h}e^{-2(h-s)}\d B_{s}.$
We define $G_{1}=\int_{0}^{\alpha h}e^{2s}\d B_{s}$, $G_{2}=\int_{\alpha h}^{h}e^{2s}\d B_{s}$,
$H_{1}=\int_{0}^{\alpha h}\d B_{s}$ and $H_{2}=\int_{\alpha h}^{h}\d B_{s}$.
Then, $W_{1}=H_{1}-e^{-2\alpha h}G_{1}$, $W_{2}=(H_{1}+H_{2})-e^{-2h}(G_{1}+G_{2})$
and $W_{3}=e^{-2h}(G_{1}+G_{2})$. It is sufficient to sample $H_{1}$,
$H_{2}$, $G_{1}$ and $G_{2}$. We can show that $\left(G_{1},H_{1}\right)$
is independent of $\left(G_{2},H_{2}\right),$ and $\left(G_{1},H_{1}\right)$
and $\left(G_{2},H_{2}\right)$ both follow a $2d$-dimensional Gaussian
distribution, which can be easily sampled.
\begin{lem}
\label{lem:mean_var}Define $G_{1}=\int_{0}^{\alpha h}e^{2s}\d B_{s}$,
$G_{2}=\int_{\alpha h}^{h}e^{2s}\d B_{s}$, $H_{1}=\int_{0}^{\alpha h}\d B_{s}$
and $H_{2}=\int_{\alpha h}^{h}\d B_{s}$. Then, $\left(G_{1},H_{1}\right)$
is independent of $\left(G_{2},H_{2}\right)$ . Moreover, $\left(G_{1},H_{1}\right)$
and $\left(G_{2},H_{2}\right)$ both follow a $2d$-dimensional Gaussian
distribution with mean zero. Conditional on the choice of $\alpha$,
their covariance is given by
\begin{eqnarray*}
\text{\ensuremath{\E\left[\left(G_{1}-\E G_{1}\right)\left(H_{1}-\E H_{1}\right)^{T}\right]}} & = & \frac{1}{2}\left(e^{2\alpha h}-1\right)\cdot I_{d},\\
\E\left[\left(G_{1}-\E G_{1}\right)\left(G_{1}-\E G_{1}\right)^{T}\right] & = & \frac{1}{4}\left(e^{4\alpha h}-1\right)\cdot I_{d},\\
\E\left[\left(H_{1}-\E H_{1}\right)\left(H_{1}-\E H_{1}\right)^{T}\right] & = & \alpha h\cdot I_{d},\\
\E\left[\left(G_{2}-\E G_{2}\right)\left(H_{2}-\E H_{2}\right)^{T}\right] & = & \frac{1}{2}\left(e^{2h}-e^{2\alpha h}\right)\cdot I_{d},\\
\E\left[\left(G_{2}-\E G_{2}\right)\left(G_{2}-\E G_{2}\right)^{T}\right] & = & \frac{1}{4}\left(e^{4h}-e^{4\alpha h}\right)\cdot I_{d},\\
\E\left[\left(H_{2}-\E H_{2}\right)\left(H_{2}-\E H_{2}\right)^{T}\right] & = & \left(h-\alpha h\right)\cdot I_{d}.
\end{eqnarray*}
\end{lem}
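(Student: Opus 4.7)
The plan is to fix the value of $\alpha$ throughout (so the integration limits become deterministic) and exploit two standard facts about Itô integrals against the $d$-dimensional Brownian motion $B$: (i) its independent-increments property and (ii) the Itô isometry $\E\!\left[\int f\,\d B\cdot \left(\int g\,\d B\right)^T\right]=\int f(s)g(s)\,\d s\cdot I_d$ whenever $f,g$ are deterministic scalar integrands.

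First I would establish the independence claim. The pair $(G_1,H_1)$ is a measurable function of the restriction $\{B_s:s\in[0,\alpha h]\}$, whereas $(G_2,H_2)$ is a measurable function of the increments $\{B_s-B_{\alpha h}:s\in[\alpha h,h]\}$. Because Brownian motion has independent increments, these two sigma-algebras are independent, and hence so are the two pairs. Next, for the Gaussian / zero-mean claim, for any $a,b\in\R^d$ I would write
\[
a^T G_1+b^T H_1=\sum_{i=1}^d\int_0^{\alpha h}\!\bigl(a_i e^{2s}+b_i\bigr)\,\d B_s^{(i)},
\]
which is an Itô integral of a deterministic integrand against a one-dimensional Brownian motion (once we decouple coordinates using independence of the $d$ components), and is therefore Gaussian with mean zero. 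Since every linear functional of $(G_1,H_1)\in\R^{2d}$ is Gaussian, $(G_1,H_1)$ is jointly Gaussian; the identical argument gives the same for $(G_2,H_2)$.

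Finally I would compute the six covariance matrices by direct application of the Itô isometry. The coordinate-wise independence of the Brownian components produces the overall factor $I_d$. For the block on $[0,\alpha h]$ the three elementary integrals are
\[
\int_0^{\alpha h}\!e^{2s}\cdot 1\,\d s=\tfrac{1}{2}(e^{2\alpha h}-1),\qquad \int_0^{\alpha h}\!e^{4s}\,\d s=\tfrac{1}{4}(e^{4\alpha h}-1),\qquad \int_0^{\alpha h}\!1\,\d s=\alpha h,
\]
which yield the three stated identities for $(G_1,H_1)$. The analogous integrals over $[\alpha h,h]$ give those for $(G_2,H_2)$.

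There is no real obstacle here; the only point requiring a little care is the conditioning on $\alpha$: because the integration limits are random, one must first fix $\alpha$ so that the integrands become deterministic (enabling Itô isometry and the usual Gaussianity), and then report the resulting covariances as conditional quantities, which is exactly what the statement asks for. Once this is done, the entire proof reduces to the two bullet-point facts above plus three elementary antiderivatives per block.
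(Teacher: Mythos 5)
Your proposal is correct and follows essentially the same route as the paper: independence from the independent-increments property of Brownian motion, and the six covariance identities from the It\^{o} isometry applied coordinatewise with the same three elementary antiderivatives on each block. The only difference is that you spell out the Gaussianity of $(G_1,H_1)$ via linear functionals, a point the paper's proof simply asserts.
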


\begin{proof}
By the definition of the standard Brownian motion, $\left(G_{1},H_{1}\right)$
is independent of $\left(G_{2},H_{2}\right)$ and $\left(G_{1},H_{1}\right)$
and $\left(G_{2},H_{2}\right)$ both have mean zero. Moreover,
\begin{align*}
\E\left[\left(G_{1}-\E G_{1}\right)\left(H_{1}-\E H_{1}\right)^{T}\right] & =\E\left[\left(\int_{0}^{\alpha h}e^{2s}\d B_{s}\right)\left(\int_{0}^{\alpha h}\d B_{s}\right)^{T}\right]=\int_{0}^{\alpha h}e^{2s}\d s\cdot I_{d}\\
 & =\frac{1}{2}\left(e^{2\alpha h}-1\right)\cdot I_{d},
\end{align*}
\begin{align*}
\E\left[\left(G_{1}-\E G_{1}\right)\left(G_{1}-\E G_{1}\right)^{T}\right] & =\E\left[\left(\int_{0}^{\alpha h}e^{2s}\d B_{s}\right)\left(\int_{0}^{\alpha h}e^{2s}\d B_{s}\right)^{T}\right]=\int_{0}^{\alpha h}e^{4s}\d s\cdot I_{d}\\
 & =\frac{1}{4}\left(e^{4\alpha h}-1\right)\cdot I_{d},
\end{align*}
and
\begin{eqnarray*}
\E\left[\left(H_{1}-\E H_{1}\right)\left(H_{1}-\E H_{1}\right)^{T}\right] & = & \alpha h\cdot I_{d}.
\end{eqnarray*}

Similarly,
\begin{align*}
\E\left[\left(G_{2}-\E G_{2}\right)\left(H_{2}-\E H_{2}\right)^{T}\right] & =\E\left[\left(\int_{\alpha h}^{h}e^{2s}\d B_{s}\right)\left(\int_{\alpha h}^{h}\d B_{s}\right)^{T}\right]=\int_{\alpha h}^{h}e^{2s}\d s\cdot I_{d}\\
 & =\frac{1}{2}\left(e^{2h}-e^{2\alpha h}\right)\cdot I_{d},
\end{align*}
\begin{align*}
\E\left[\left(G_{1}-\E G_{1}\right)\left(G_{1}-\E G_{1}\right)^{T}\right] & =\E\left[\left(\int_{\alpha h}^{h}e^{2s}\d B_{s}\right)\left(\int_{\alpha h}^{h}e^{2s}\d B_{s}\right)^{T}\right]=\int_{\alpha h}^{h}e^{4s}\d s\cdot I_{d}\\
 & =\frac{1}{4}\left(e^{4h}-e^{4\alpha h}\right)\cdot I_{d},
\end{align*}
and
\begin{eqnarray*}
\E\left[\left(H_{2}-\E H_{2}\right)\left(H_{2}-\E H_{2}\right)^{T}\right] & = & \left(h-\alpha h\right)\cdot I_{d}.
\end{eqnarray*}
\end{proof}

\section{Properties of the ULD and the Brownian motion}

\label{sec:Properties-of-the}

Here, we prove some properties of the ULD and the Brownian motion.
These properties are used in Appendices $\text{\ref{sec:Proof-of-Lemma}}$,
$\text{\ref{sec:Bounds-on-}}$, $\text{\ref{sec:Proof-of-Theorem}}$
and $\text{\ref{sec:Discretization-Error-of}}$ to prove the guarantee
of our algorithm.

\subsection{Properties of the ULD}
\begin{lem}
\label{lem:idealULD}Let $\left\{ x(t)\right\} _{t\in[0,h]}$ and
$\left\{ v(t)\right\} _{t\in[0,h]}$ be the solution to the underdamped
Langevin diffusion $\text{\ensuremath{\left(\ref{eq:ULD}\right)}}$
on $t\in[0,h]$. Assume that $h\leq\frac{1}{20}$ and $u=\frac{1}{L}$.
We have the following bounds.
\begin{eqnarray*}
\E\sup_{t\in[0,h]}\left\Vert v(t)\right\Vert ^{2} & \leq & O\left(\left\Vert v(0)\right\Vert ^{2}+u^{2}h^{2}\left\Vert \nabla f(x(0))\right\Vert ^{2}+udh\right),\\
\E\sup_{t\in[0,h]}\left\Vert \nabla f(x(t))\right\Vert ^{2} & \leq & O\left(\left\Vert \nabla f(x(0))\right\Vert ^{2}+L^{2}h^{2}\left\Vert v(0)\right\Vert ^{2}+Ldh^{3}\right),\\
\E\sup_{t\in[0,h]}\left\Vert x(0)-x(t)\right\Vert ^{2} & \leq & O\left(h^{2}\left\Vert v(0)\right\Vert ^{2}+u^{2}h^{4}\left\Vert \nabla f(x(0))\right\Vert ^{2}+udh^{3}\right)\text{,}
\end{eqnarray*}

and
\begin{eqnarray*}
-\E\inf_{t\in[0,h]}\left\Vert v(t)\right\Vert ^{2} & \leq & -\frac{1}{3}\left\Vert v(0)\right\Vert ^{2}+O\left(u^{2}h^{2}\left\Vert \nabla f(x(0))\right\Vert ^{2}+udh\right),\\
-\E\inf_{t\in[0,h]}\left\Vert \nabla f(x(t))\right\Vert ^{2} & \leq & -\frac{1}{3}\left\Vert \nabla f(x(0))\right\Vert ^{2}+O\left(h^{2}L^{2}\left\Vert v(0)\right\Vert ^{2}+Ldh^{3}\right)\text{.}
\end{eqnarray*}
\end{lem}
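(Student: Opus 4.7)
The plan is to start from the integral (mild) formulation of the underdamped Langevin diffusion,
\[
v(t)=v(0)e^{-2t}-u\!\int_{0}^{t}e^{-2(t-s)}\nabla f(x(s))\,\d s+2\sqrt{u}\!\int_{0}^{t}e^{-2(t-s)}\,\d B_{s},\qquad x(t)=x(0)+\!\int_{0}^{t}v(s)\,\d s,
\]
and set up a small closed system of inequalities for the three quantities
\[
V^{2}\defeq\E\sup_{t\in[0,h]}\|v(t)\|^{2},\qquad G^{2}\defeq\E\sup_{t\in[0,h]}\|\nabla f(x(t))\|^{2},\qquad X^{2}\defeq\E\sup_{t\in[0,h]}\|x(t)-x(0)\|^{2}.
\]
Applying the triangle inequality to the mild form, together with $uL=1$ and $h\le 1/20$, immediately gives $X^{2}\le h^{2}V^{2}$ (from $\|x(t)-x(0)\|\le h\sup\|v\|$) and $G^{2}\le 2\|\nabla f(x(0))\|^{2}+2L^{2}X^{2}$ (from $L$-Lipschitzness of $\nabla f$). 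The key ingredient is a Doob / Burkholder--Davis--Gundy bound applied to the martingale $t\mapsto\int_{0}^{t}e^{-2(t-s)}\d B_{s}$, yielding $\E\sup_{t\in[0,h]}\|\int_{0}^{t}e^{-2(t-s)}\d B_{s}\|^{2}\le O(dh)$; this handles the Brownian piece cleanly.

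I would then plug these into the mild form for $v$ to obtain
\[
V^{2}\le O\bigl(\|v(0)\|^{2}+u^{2}h^{2}G^{2}+udh\bigr)\le O\bigl(\|v(0)\|^{2}+u^{2}h^{2}\|\nabla f(x(0))\|^{2}+u^{2}h^{2}L^{2}X^{2}+udh\bigr).
\]
Using $X^{2}\le h^{2}V^{2}$ and $u^{2}L^{2}=1$, the coupling term becomes $h^{4}V^{2}$, which for $h\le 1/20$ is absorbed into the left-hand side. This yields the first sup bound, and substituting back into $G^{2}$ and $X^{2}$ gives the other two. The only subtlety is making sure the $h^{4}V^{2}$ absorption is rigorous; since $h^{4}\le(1/20)^{4}$ this is immediate, but one has to track constants carefully to keep the final inequalities of the stated form.

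For the infimum bounds I would use the reverse-triangle-type inequality $\|a+b\|^{2}\ge(1-\varepsilon)\|a\|^{2}-\tfrac{1}{\varepsilon}\|b\|^{2}$. Writing $v(t)=v(0)e^{-2t}+r(t)$ with $r(t)$ the sum of the drift and Brownian integrals, choosing $\varepsilon=1/2$, and using $e^{-4h}\ge e^{-1/5}>0.8$, yields $\|v(t)\|^{2}\ge\tfrac{2}{5}\|v(0)\|^{2}-\|r(t)\|^{2}$. Bounding $\E\sup_{t}\|r(t)\|^{2}$ via the same argument as above produces a term $O(u^{2}h^{2}\|\nabla f(x(0))\|^{2}+udh)$ plus a residual of the form $h^{4}\|v(0)\|^{2}$ from the recursive coupling, which is absorbed into the $\tfrac{2}{5}\|v(0)\|^{2}$ coefficient to leave $\tfrac{1}{3}\|v(0)\|^{2}$. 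For the gradient infimum, I would write $\nabla f(x(t))=\nabla f(x(0))+(\nabla f(x(t))-\nabla f(x(0)))$, apply the same reverse-triangle inequality, use $L$-smoothness to control the second term by $L^{2}\sup_{t}\|x(t)-x(0)\|^{2}$, and then invoke the already-established third bound on $X^{2}$.

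The main obstacle, in my view, is not any single calculation but rather the bookkeeping: because $V,G,X$ mutually bound one another and we need constants sharp enough (in particular $\tfrac{1}{3}$ in the infimum bounds) to survive the absorption of coupling terms, one must carefully verify that all reinserted terms scale as at least $h^{4}$ relative to the quantity they are absorbed into. Once the smallness of $h$ is exploited in the right places, every inequality reduces to elementary Itô calculus plus Doob's inequality.
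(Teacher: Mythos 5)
Your proposal is correct and follows essentially the same route as the paper: the integral (mild) formulation, Doob's maximal inequality for the stochastic convolution (the paper's Lemma \ref{lem:Brownian}), a closed system of inequalities among the three supremum quantities with the $h^{4}$ coupling term absorbed using $h\le\tfrac{1}{20}$, and reverse-triangle/Young-type lower bounds (with $e^{-4t}\ge 1-4t$) for the two infimum estimates. No substantive differences from the paper's argument.
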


\begin{proof}
We first show the first three bounds. We can write $\E\sup_{t\in[0,h]}\left\Vert \nabla f(x(t))\right\Vert ^{2}$
as
\begin{eqnarray}
 &  & \E\sup_{t\in[0,h]}\left\Vert \nabla f(x(t))\right\Vert ^{2}\nonumber \\
 & \leq & 2\left\Vert \nabla f(x(0))\right\Vert ^{2}+2\E\sup_{t\in[0,h]}\left\Vert \nabla f(x(0))-\nabla f(x(t))\right\Vert ^{2}\nonumber \\
 & \leq & 2\left\Vert \nabla f(x(0))\right\Vert ^{2}+2L^{2}\E\sup_{t\in[0,h]}\left\Vert x(0)-x(t)\right\Vert ^{2},\label{eq:ideal_1}
\end{eqnarray}
where the first step follows by Young's inequality and the second
step follows by $\nabla f$ is $L$-Lipschitz. To bound $\E\sup_{t\in[0,h]}\left\Vert x(0)-x(t)\right\Vert ^{2}$,
\begin{eqnarray}
\E\sup_{t\in[0,h]}\left\Vert x(0)-x(t)\right\Vert ^{2} & = & \E\sup_{t\in[0,h]}\left\Vert \int_{0}^{t}v(s)\d s\right\Vert ^{2}\nonumber \\
 & \leq & \E\sup_{t\in[0,h]}t\int_{0}^{t}\left\Vert v(s)\right\Vert ^{2}\d s\nonumber \\
 & \leq & h^{2}\E\sup_{t\in[0,h]}\left\Vert v(t)\right\Vert ^{2},\label{eq:ideal_2}
\end{eqnarray}
where the first step follows by the definition of $x$ and the second
follows by the Cauchy-Schwarz inequality. To bound $\E\sup_{t\in[0,h]}\left\Vert v(t)\right\Vert ^{2},$
\begin{eqnarray}
\E\sup_{t\in[0,h]}\left\Vert v(t)\right\Vert ^{2} & = & \E\sup_{t\in[0,h]}\left\Vert v(0)e^{-2t}-u\int_{0}^{t}e^{-2(t-s)}\nabla f(x(s))\d s+2\sqrt{u}\int_{0}^{t}e^{-2(t-s)}\d B_{s}\right\Vert ^{2}\nonumber \\
 & \leq & 3\left\Vert v(0)\right\Vert ^{2}+3u^{2}h^{2}\E\sup_{t\in[0,h]}\left\Vert \nabla f(x(t))\right\Vert ^{2}+12u\E\sup_{t\in[0,h]}\left\Vert \int_{0}^{t}e^{-2(t-s)}\d B_{s}\right\Vert ^{2}\nonumber \\
 & \leq & 3\left\Vert v(0)\right\Vert ^{2}+3u^{2}h^{2}\E\sup_{t\in[0,h]}\left\Vert \nabla f(x(t))\right\Vert ^{2}+60udh,\label{eq:ideal_3}
\end{eqnarray}
where the first step follows by the definition of ULD, the second
step follows by the inequality $(a+b+c)^{2}\leq3a^{2}+3b^{2}+3c^{2}$
and the third step follows by Lemma $\text{\ref{lem:Brownian}}.$
Then, combining $\text{\ensuremath{\left(\ref{eq:ideal_1}\right)}}$,
$\left(\text{\ref{eq:ideal_2}}\right)$ and $\left(\text{\ref{eq:ideal_3}}\right)$,
we have
\begin{align*}
\E\sup_{t\in[0,h]}\left\Vert \nabla f(x(t))\right\Vert ^{2} & \leq2\left\Vert \nabla f(x(0))\right\Vert ^{2}+2L^{2}\E\sup_{t\in[0,h]}\left\Vert x(0)-x(t)\right\Vert ^{2}\\
 & \leq2\left\Vert \nabla f(x(0))\right\Vert ^{2}+2L^{2}h^{2}\E\sup_{t\in[0,h]}\left\Vert v(t)\right\Vert ^{2}\\
 & \leq2\left\Vert \nabla f(x(0))\right\Vert ^{2}+6h^{4}\E\sup_{t\in[0,h]}\left\Vert \nabla f(x(t))\right\Vert ^{2}+6L^{2}h^{2}\left\Vert v(0)\right\Vert ^{2}+120Ldh^{3}.
\end{align*}
Since $6h^{4}\leq\frac{1}{4}$, 
\begin{eqnarray}
\E\sup_{t\in[0,h]}\left\Vert \nabla f(x(t))\right\Vert ^{2} & \leq & 3\left\Vert \nabla f(x(0))\right\Vert ^{2}+8L^{2}h^{2}\left\Vert v(0)\right\Vert ^{2}+160Ldh^{3}\nonumber \\
 & \leq & O\left(\left\Vert \nabla f(x(0))\right\Vert ^{2}+L^{2}h^{2}\left\Vert v(0)\right\Vert ^{2}+Ldh^{3}\right).\label{eq:ideal_4}
\end{eqnarray}
By $\text{\ensuremath{\left(\ref{eq:ideal_3}\right)}}$ and $\text{\ensuremath{\left(\ref{eq:ideal_4}\right)}}$,
\begin{eqnarray*}
\E\sup_{t\in[0,h]}\left\Vert v(t)\right\Vert ^{2} & \leq & 3\left\Vert v(0)\right\Vert ^{2}+3u^{2}h^{2}\E\sup_{t\in[0,h]}\left\Vert \nabla f(x(t))\right\Vert ^{2}+60udh\\
 & \leq & 3\left\Vert v(0)\right\Vert ^{2}+3u^{2}h^{2}\ensuremath{\cdot}O\left(\left\Vert \nabla f(x(0))\right\Vert ^{2}+L^{2}h^{2}\left\Vert v(0)\right\Vert ^{2}+Ldh^{3}\right)+60udh\\
 & \leq & O\left(\left\Vert v(0)\right\Vert ^{2}+u^{2}h^{2}\left\Vert \nabla f(x(0))\right\Vert ^{2}+udh\right).
\end{eqnarray*}
where the last step follows by $h$ is small.

By $\left(\text{\ref{eq:ideal_2}}\right)$ and $\left(\text{\ref{eq:ideal_4}}\right)$,
\begin{eqnarray}
\E\sup_{t\in[0,h]}\left\Vert x(0)-x(t)\right\Vert ^{2} & \leq & h^{2}\E\sup_{t\in[0,h]}\left\Vert v(t)\right\Vert ^{2}\nonumber \\
 & \leq & O\left(h^{2}\left\Vert v(0)\right\Vert ^{2}+u^{2}h^{4}\left\Vert \nabla f(x(0))\right\Vert ^{2}+udh^{3}\right).\label{eq:ideal_5}
\end{eqnarray}

To prove the fourth claim, 
\begin{eqnarray*}
 &  & \inf_{t\in[0,h]}\left\Vert v(t)\right\Vert ^{2}\\
 & = & \inf_{t\in[0,h]}\left\Vert v(0)e^{-2t}-u\int_{0}^{t}e^{-2(t-s)}\nabla f(x(s))\d s+2\sqrt{u}\int_{0}^{t}e^{-2(t-s)}\d B_{s}\right\Vert ^{2}\\
 & \geq & \inf_{t\in[0,h]}\Bigg[e^{-4t}\left\Vert v(0)\right\Vert ^{2}-2e^{-2t}v(0)^{T}\left(u\int_{0}^{t}e^{-2(t-s)}\nabla f(x(s))\d s\right)\\
 &  & +2e^{-2t}v(0)^{T}\left(2\sqrt{u}\int_{0}^{t}e^{-2(t-s)}\d B_{s}\right)\Bigg]\\
 & \geq & \inf_{t\in[0,h]}\Bigg[e^{-4t}\left\Vert v(0)\right\Vert ^{2}-\frac{1}{2}e^{-4t}\left\Vert v(0)\right\Vert ^{2}-4\left\Vert u\int_{0}^{t}e^{-2(t-s)}\nabla f(x(s))\d s\right\Vert ^{2}\\
 &  & -4\left\Vert 2\sqrt{u}\int_{0}^{t}e^{-2(t-s)}\d B_{s}\right\Vert ^{2}\Bigg]\\
 & \geq & \inf_{t\in[0,h]}\left[\frac{1}{2}(1-4h)\left\Vert v(0)\right\Vert ^{2}-4u^{2}h^{2}\sup_{s\in[0,t]}\left\Vert \nabla f(x(s))\right\Vert ^{2}-16u\left\Vert \int_{0}^{t}e^{-2(t-s)}\d B_{s}\right\Vert ^{2}\right]\\
 & \geq & \frac{1}{2}(1-4h)\left\Vert v(0)\right\Vert ^{2}-4u^{2}h^{2}\sup_{t\in[0,h]}\left\Vert \nabla f(x(t))\right\Vert ^{2}-16u\sup_{t\in[0,h]}\left\Vert \int_{0}^{t}e^{-2(t-s)}\d B_{s}\right\Vert ^{2},
\end{eqnarray*}
where the first step follows by the definition of $v$ , the second
step follows by the inequality $(a+b+c)^{2}\geq a^{2}+2a(b+c)$, the
third step follows by the inequality $2ab\leq a^{2}+b^{2}$ , the
fourth step follows by $e^{-4t}\geq1-4t$, and the last step follows
by $h$ is small.

Then, by $\text{\ensuremath{\left(\ref{eq:ideal_4}\right)}}$ and
Lemma $\text{\ref{lem:Brownian}},$
\begin{eqnarray*}
-\E\inf_{t\in[0,h]}\left\Vert v(t)\right\Vert ^{2} & \leq & -\frac{1}{3}\left\Vert v(0)\right\Vert ^{2}+O\left(u^{2}h^{2}\left\Vert \nabla f(x(0))\right\Vert ^{2}+udh\right).
\end{eqnarray*}

To show the lower bound on $\E\inf_{t\in[0,h]}\left\Vert \nabla f(x(t))\right\Vert ^{2}$,
notice that 
\begin{eqnarray*}
\E\inf_{t\in[0,h]}\left\Vert \nabla f(x(t))\right\Vert ^{2} & \geq & \frac{1}{2}\left\Vert \nabla f(x(0))\right\Vert ^{2}-\E\sup_{t\in[0,h]}\left\Vert \nabla f(x(t))-\nabla f(x(0))\right\Vert ^{2}\\
 & \geq & \frac{1}{2}\left\Vert \nabla f(x(0))\right\Vert ^{2}-L^{2}\E\sup_{t\in[0,h]}\left\Vert x(t)-x(0)\right\Vert ^{2}.
\end{eqnarray*}
Then, by$\text{\ensuremath{\left(\ref{eq:ideal_5}\right)}}$ and $h\leq\frac{1}{20}$,
\begin{eqnarray*}
-\E\inf_{t\in[0,h]}\left\Vert \nabla f(x(t))\right\Vert  & \leq & -\frac{1}{3}\left\Vert \nabla f(x(0))\right\Vert ^{2}+O\left(h^{2}L^{2}\left\Vert v(0)\right\Vert ^{2}+Ldh^{3}\right)\text{.}
\end{eqnarray*}
\end{proof}

\subsection{Properties of the Brownian Motion}
\begin{lem}[Doob's maximal inequality \cite{doob1953stochastic}]
\label{lem:doob}Suppose $\{X(t):t\geq0\}$ is a continuous martingale.
Then, for any $t\geq0$,
\begin{eqnarray*}
\E\left[\sup_{0\leq s\leq t}\left|X(s)\right|^{2}\right] & \leq & 4\E\left[\left|X(t)\right|^{2}\right].
\end{eqnarray*}
\end{lem}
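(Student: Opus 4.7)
The plan is to apply the classical Doob $L^p$ maximal inequality in the case $p=2$, for which the sharp constant $(p/(p-1))^p = 4$ matches the statement exactly. The strategy decomposes into three steps: (i) reduce to a statement about a nonnegative submartingale by observing that $|X(t)|$ is a continuous submartingale (conditional Jensen applied to the convex function $x\mapsto|x|$), (ii) establish Doob's weak-type tail inequality, and (iii) integrate the tail bound against the layer-cake formula and close the loop with Cauchy--Schwarz.

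For step (ii), I would set $M_t \defeq \sup_{0\le s\le t}|X(s)|$ and introduce, for each $\lambda>0$, the bounded stopping time $\tau_\lambda \defeq \inf\{s\in[0,t]:|X(s)|\ge\lambda\}\wedge t$. Continuity of paths ensures that on $\{M_t\ge\lambda\}$ we have $|X(\tau_\lambda)|\ge\lambda$. Applying the optional stopping theorem to the submartingale $|X|$ at $\tau_\lambda\le t$, together with the submartingale property on the event $\{M_t\ge\lambda\}\in\mathcal{F}_{\tau_\lambda}$, gives
\[
\lambda\,\P(M_t\ge\lambda) \;\le\; \E\!\left[|X(\tau_\lambda)|\,\mathbf{1}_{M_t\ge\lambda}\right] \;\le\; \E\!\left[|X(t)|\,\mathbf{1}_{M_t\ge\lambda}\right].
\]

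For step (iii), the layer-cake identity yields $\E[M_t^2] = \int_0^\infty 2\lambda\,\P(M_t\ge\lambda)\,d\lambda$. Substituting the bound from step (ii) and exchanging integrals by Fubini,
\[
\E[M_t^2] \;\le\; 2\int_0^\infty \E\!\left[|X(t)|\,\mathbf{1}_{M_t\ge\lambda}\right]\,d\lambda \;=\; 2\,\E\!\left[|X(t)|\,M_t\right].
\]
Cauchy--Schwarz then gives $\E[|X(t)|\,M_t] \le \sqrt{\E[X(t)^2]}\sqrt{\E[M_t^2]}$, and dividing by $\sqrt{\E[M_t^2]}$ yields $\sqrt{\E[M_t^2]}\le 2\sqrt{\E[X(t)^2]}$, i.e., the desired inequality.

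The only real obstacle is a technicality: the division by $\sqrt{\E[M_t^2]}$ is only legitimate once this quantity is known to be finite. The standard workaround is to first prove the inequality for the truncated supremum $M_t\wedge n$, which is bounded and hence trivially square-integrable, and then pass to the limit $n\to\infty$ by monotone convergence. Alternatively, one may assume $\E[X(t)^2]<\infty$ (otherwise the claim is vacuous) and invoke a preliminary crude estimate to see that $\E[M_t^2]<\infty$. Apart from this measure-theoretic bookkeeping, every step is routine, and the sharp constant $4$ drops out automatically from the Cauchy--Schwarz step.
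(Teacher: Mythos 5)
The paper states this lemma as a classical result with only a citation to Doob and gives no proof of its own, so there is nothing internal to compare against. Your argument is the standard textbook proof of the $L^{2}$ Doob maximal inequality (weak-type bound via optional stopping for the submartingale $|X|$, layer-cake plus Fubini, Cauchy--Schwarz, with the finiteness issue correctly handled by truncating $M_{t}\wedge n$ and passing to the limit by monotone convergence), and it is complete and correct, including the sharp constant $4$.
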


Using the Doob's maximal inequality, we can show the following lemma.
\begin{lem}
\label{lem:Brownian}For $d$-dimensional Brownian motion $B_{t}$
on $t\in[0,h]$, assuming $h\leq\frac{1}{10},$
\[
\E\left[\sup_{0\leq t\leq h}\left\Vert B(t)\right\Vert ^{2}\right]\leq4dh,\text{ and }\E\left[\sup_{0\leq t\leq h}\left\Vert \int_{0}^{t}e^{-2(t-s)}dB_{s}\right\Vert ^{2}\right]\leq5dh.
\]
\end{lem}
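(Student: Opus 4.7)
My plan is to apply Doob's maximal inequality (the lemma stated immediately before) coordinate-wise to two explicit martingales.

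\textbf{First bound.} The processes $B_1(t), \ldots, B_d(t)$ are independent continuous scalar martingales with $\E[B_i(h)^2] = h$. Doob's inequality gives $\E[\sup_{0\leq s\leq h} B_i(s)^2] \leq 4h$ for each $i$. Since $\sup_{0\leq s\leq h}\|B(s)\|^2 = \sup_{0\leq s \leq h}\sum_{i=1}^d B_i(s)^2 \leq \sum_{i=1}^d \sup_{0\leq s\leq h} B_i(s)^2$, summing yields $\E[\sup_{0\leq s\leq h}\|B(s)\|^2]\leq 4dh$.

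\textbf{Second bound.} The key observation is the factorization
\[
\int_0^t e^{-2(t-s)}\, dB_s \;=\; e^{-2t}\int_0^t e^{2s}\,dB_s \;=\; e^{-2t}M(t),
\]
where $M(t):=\int_0^t e^{2s}\,dB_s$ is a continuous $\R^d$-valued martingale with independent coordinates $M_i(t)=\int_0^t e^{2s}\,dB_{i,s}$. Since $e^{-4t}\leq 1$ on $[0,h]$, we have the pointwise bound
\[
\Big\|\int_0^t e^{-2(t-s)}\,dB_s\Big\|^2 \;=\; e^{-4t}\|M(t)\|^2 \;\leq\; \|M(t)\|^2,
\]
so taking $\sup$ over $t\in[0,h]$ and expectations, it suffices to control $\E[\sup_t \|M(t)\|^2]$. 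By the It\^o isometry, $\E[M_i(h)^2]=\int_0^h e^{4s}\,ds=\tfrac{1}{4}(e^{4h}-1)$. Applying Doob coordinate-wise and summing,
\[
\E\Big[\sup_{0\leq t\leq h}\|M(t)\|^2\Big] \;\leq\; \sum_{i=1}^d 4\E[M_i(h)^2] \;=\; d\bigl(e^{4h}-1\bigr).
\]

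\textbf{Final numerical step.} It remains to check $e^{4h}-1\leq 5h$ for $h\in(0,1/10]$. At $h=0$ both sides are $0$; the function $g(h):=5h-(e^{4h}-1)$ has $g'(0)=1>0$, and $g'(h)=5-4e^{4h}$ is decreasing with a unique zero at $h^{*}=\tfrac14\log(5/4)\approx 0.0558$. Thus $g$ is positive on $(0,h^{*}]$ and decreasing on $[h^{*},1/10]$; direct evaluation at the endpoint $h=1/10$ gives $g(1/10)=0.5-(e^{0.4}-1)\approx 0.5-0.4918>0$, so $g\geq 0$ throughout $(0,1/10]$. This yields $\E[\sup_{0\leq t\leq h}\|\int_0^t e^{-2(t-s)}\,dB_s\|^2]\leq 5dh$, completing the proof. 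The only non-routine part is noticing the factorization $Y(t)=e^{-2t}M(t)$ that converts the non-adapted integrand $e^{-2(t-s)}$ into a genuine martingale to which Doob applies; the rest is a short numerical verification.
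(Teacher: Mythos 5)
Your proof is correct and follows essentially the same route as the paper: Doob's maximal inequality applied coordinate-wise to $B_i$ and to the martingale $M_i(t)=\int_0^t e^{2s}\,dB_{i,s}$ after the factorization $e^{-2t}M(t)$, followed by the elementary bound $e^{4h}-1\leq 5h$ for $h\leq\frac{1}{10}$. No gaps.
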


\begin{proof}
To show the first inequality,
\begin{eqnarray*}
\E\left[\sup_{0\leq t\leq h}\left\Vert B(t)\right\Vert ^{2}\right] & \leq & \sum_{i=1}^{d}\E\left[\sup_{0\leq t\leq h}\left|B_{i}(t)\right|^{2}\right]\\
 & \leq & 4d\E\left[\left|B_{i}(h)\right|^{2}\right]\\
 & = & 4dh,
\end{eqnarray*}
where the second step follows by Lemma $\text{\ref{lem:doob}}$. To
show the second inequality,
\begin{eqnarray*}
\E\left[\sup_{0\leq t\leq h}\left\Vert \int_{0}^{t}e^{-2(t-s)}dB_{s}\right\Vert ^{2}\right] & \leq & \E\left[\sup_{0\leq t\leq h}e^{-4t}\left\Vert \int_{0}^{t}e^{2s}dB_{s}\right\Vert ^{2}\right]\\
 & \leq & \E\left[\sup_{0\leq t\leq h}\left\Vert \int_{0}^{t}e^{2s}dB_{s}\right\Vert ^{2}\right]\\
 & \leq & \sum_{i=1}^{d}\E\left[\sup_{0\leq t\leq h}\left|\int_{0}^{t}e^{2s}dB_{s,i}\right|^{2}\right]\\
 & \leq & 4\sum_{i=1}^{d}\E\left[\left|\int_{0}^{h}e^{2s}dB_{s,i}\right|^{2}\right]\\
 & = & 4\sum_{i=1}^{d}\int_{0}^{h}e^{4s}ds\\
 & \leq & 5dh,
\end{eqnarray*}
where the second step follows by $e^{-4t}\leq1$, the fourth step
follows by Lemma $\text{\ref{lem:doob}}$ and the last inequality
follows by $\int_{0}^{h}e^{4s}ds\leq\frac{5}{4}h$ for $h\leq\frac{1}{10}.$
\end{proof}

\section{Discretization Error of Algorithm $\text{\ref{al:lan}}$}

\label{sec:Proof-of-Lemma}

In this section, we bound the discretization error of Algorithm $\text{\ref{al:lan}}$
in each iteration. In order to prove Lemma $\text{\ref{lem:error}}$,
we first prove Lemma \ref{lem:grad_f}, stated next.
\begin{lem}
\label{lem:grad_f} Let $\alpha$ be the random number chosen in iteration
$n$. Let $x_{n+\frac{1}{2}}$ be the intermediate value computed
in iteration $n$ of Algorithm $\text{\ref{al:lan}}$. Let $\left\{ x_{n}^{*}(t)\right\} _{t\in[0,h]}$
be the ideal underdamped Langevin diffusion starting from $x_{n}^{*}(0)=x_{n}$
coupled through a shared Brownian motion with $x_{n+\frac{1}{2}}.$
Assume that $h\leq\frac{1}{20}$. Then, 
\begin{eqnarray*}
\E\left\Vert \nabla f(x_{n+\frac{1}{2}})-\nabla f(x_{n}^{*}(\alpha h))\right\Vert ^{2} & \leq & O\left(h^{6}L^{2}\left\Vert v_{n}\right\Vert ^{2}+h^{8}\left\Vert \nabla f(x_{n})\right\Vert ^{2}+Ldh^{7}\right).
\end{eqnarray*}
\end{lem}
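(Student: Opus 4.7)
\textbf{Proof plan for Lemma \ref{lem:grad_f}.}
The natural first move is to pass from gradients to points: since $\nabla f$ is $L$-Lipschitz,
\[
\E\|\nabla f(x_{n+\frac{1}{2}})-\nabla f(x_n^*(\alpha h))\|^2 \le L^2\,\E\|x_{n+\frac{1}{2}}-x_n^*(\alpha h)\|^2,
\]
so the whole task reduces to controlling the second moment of the discretization error at time $\alpha h$. The plan is to write this error as a single stochastic integral, bound it by Cauchy--Schwarz, and then invoke Lemma \ref{lem:idealULD}.

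First I would substitute the formula for $x_{n+\frac{1}{2}}$ from Algorithm \ref{al:lan} and the integral form (\ref{eq:ULD_sol}) for $x_n^*(\alpha h)$, using the identity
\[
\tfrac{1}{2}\!\left(\alpha h-\tfrac{1}{2}(1-e^{-2\alpha h})\right)=\tfrac{1}{2}\int_0^{\alpha h}(1-e^{-2(\alpha h-s)})\,\d s.
\]
Since $W_1^{(n)}=\int_0^{\alpha h}(1-e^{-2(\alpha h-s)})\,\d B_s$ (this is exactly how the Brownian coupling is set up in Appendix \ref{sec:Brownian-Motion-Simulation}), the Brownian and linear-in-$v_n$ terms cancel, leaving only
\[
x_{n+\frac{1}{2}}-x_n^*(\alpha h)=\frac{u}{2}\int_0^{\alpha h}(1-e^{-2(\alpha h-s)})\bigl[\nabla f(x_n^*(s))-\nabla f(x_n)\bigr]\d s.
\]

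Next I would apply Cauchy--Schwarz together with the elementary bound $1-e^{-2(\alpha h-s)}\le 2h$ to get
\[
\|x_{n+\frac{1}{2}}-x_n^*(\alpha h)\|^2 \le u^2 h^4 \sup_{s\in[0,h]}\|\nabla f(x_n^*(s))-\nabla f(x_n)\|^2,
\]
and then use $L$-Lipschitzness of $\nabla f$ once more to convert this into $u^2 L^2 h^4\sup_{s\in[0,h]}\|x_n^*(s)-x_n\|^2$. Taking expectations, Lemma \ref{lem:idealULD} immediately gives
\[
\E\sup_{s\in[0,h]}\|x_n^*(s)-x_n\|^2 \le O\!\bigl(h^2\|v_n\|^2+u^2h^4\|\nabla f(x_n)\|^2+udh^3\bigr),
\]
so, multiplying by $u^2 L^2 h^4=h^4$ (using $u=1/L$) and then by the outer $L^2$ from the Lipschitz step, the three terms become $L^2h^6\|v_n\|^2$, $h^8\|\nabla f(x_n)\|^2$, and $Ldh^7$, which is exactly the stated bound.

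The only subtle point is making sure the Brownian motion terms really cancel; this is what the coupling hypothesis is there for, and without it the $udh$ noise would survive and ruin the bound. After that observation the argument is essentially a two-line Cauchy--Schwarz followed by a direct application of Lemma \ref{lem:idealULD}, so I do not expect a major obstacle.
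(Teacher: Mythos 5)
Your proposal is correct and follows essentially the same route as the paper's proof: reduce to $L^{2}\E\|x_{n+\frac{1}{2}}-x_{n}^{*}(\alpha h)\|^{2}$ via Lipschitzness, observe that the coupled Brownian and $v_{n}$ terms cancel so the difference is the single integral of $\nabla f(x_{n}^{*}(s))-\nabla f(x_{n})$ against $\frac{u}{2}(1-e^{-2(\alpha h-s)})$, then apply Cauchy--Schwarz with $1-e^{-2(\alpha h-s)}\leq 2h$, Lipschitzness again, and Lemma \ref{lem:idealULD}. The constants and powers of $h$ work out exactly as in the paper (both arguments use $u=1/L$ to cancel the $u^{2}L^{2}$ factor), so there is nothing to add.
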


\begin{proof}
We have the bound
\begin{eqnarray*}
 &  & \E\left\Vert \nabla f(x_{n+\frac{1}{2}})-\nabla f(x_{n}^{*}(\alpha h))\right\Vert ^{2}\\
 & \leq & L^{2}\E\left\Vert x_{n+\frac{1}{2}}-x_{n}^{*}(\alpha h)\right\Vert ^{2}\\
 & = & L^{2}\E\left\Vert \frac{1}{2}u\int_{0}^{\alpha h}\left(1-e^{-2(\alpha h-s)}\right)\left(\nabla f(x_{n}^{*}(0))-\nabla f(x_{n}^{*}(s))\right)\d s\right\Vert ^{2}\\
 & \leq & \frac{1}{4}\E\left[\int_{0}^{\alpha h}\left(1-e^{-2(\alpha h-s)}\right)^{2}\d s\cdot\alpha h\cdot\left(\sup_{t\in[0,h]}\left\Vert \nabla f(x_{n}^{*}(0))-\nabla f(x_{n}^{*}(t))\right\Vert ^{2}\right)\right]\\
 & \leq & h^{4}\E\sup_{t\in[0,h]}\left\Vert \nabla f(x_{n}^{*}(0))-\nabla f(x_{n}^{*}(t))\right\Vert ^{2}\\
 & \leq & L^{2}h^{4}\E\sup_{t\in[0,h]}\left\Vert x_{n}^{*}(0)-x_{n}^{*}(t)\right\Vert ^{2}\\
 & \leq & O\left(h^{6}L^{2}\left\Vert v_{n}\right\Vert ^{2}+h^{8}\left\Vert \nabla f(x_{n})\right\Vert ^{2}+Ldh^{7}\right),
\end{eqnarray*}
where the first and the fifth step follows by $\nabla f$ is $L$-Lipschitz,
the third step follows by Cauchy-Schwarz inequality, the fourth step
follows by $1-e^{-2(\alpha h-t)}\leq2h$ and the last step follows
by Lemma $\text{\ref{lem:idealULD}.}$
\end{proof}
Now, we are ready to prove Lemma $\text{\ref{lem:error}}$.
\begin{proof}
To show the first claim,
\begin{eqnarray*}
 &  & \left\Vert \E_{\alpha}x_{n+1}-x_{n}^{*}(h)\right\Vert ^{2}\\
 & = & \left\Vert \E_{\alpha}\frac{1}{2}uh\left(1-e^{-2(h-\alpha h)}\right)\nabla f(x_{n+\frac{1}{2}})-\frac{1}{2}u\int_{0}^{h}\left(1-e^{-2(h-s)}\right)\nabla f(x_{n}^{*}(s))\d s\right\Vert ^{2}\\
 & \leq & \frac{1}{2}\E_{\alpha}\left\Vert uh\left(1-e^{-2(h-\alpha h)}\right)\nabla f(x_{n+\frac{1}{2}})-uh\left(1-e^{-2(h-\alpha h)}\right)\nabla f(x_{n}^{*}(\alpha h))\right\Vert ^{2}\\
 &  & +\frac{1}{2}\left\Vert \E_{\alpha}uh\left(1-e^{-2(h-\alpha h)}\right)\nabla f(x_{n}^{*}(\alpha h))-u\int_{0}^{h}\left(1-e^{-2(h-s)}\right)\nabla f(x_{n}^{*}(s))\d s\right\Vert ^{2}\\
 & \leq & \frac{1}{2}u^{2}h^{2}\E_{\alpha}\left[\left(1-e^{-2(h-\alpha h)}\right)^{2}\left\Vert \nabla f(x_{n+\frac{1}{2}})-\nabla f(x_{n}^{*}(\alpha h))\right\Vert ^{2}\right]+0\\
 & \leq & 2u^{2}h^{4}\E_{\alpha}\left\Vert \nabla f(x_{n+\frac{1}{2}})-\nabla f(x_{n}^{*}(\alpha h))\right\Vert ^{2},
\end{eqnarray*}
where the first step follows by the definition of $x_{n+1}$, the
second step follows by Young's inequality, the third step follows
by 
\[
\E_{\alpha}h\left(1-e^{-2(h-\alpha h)}\right)\nabla f(x_{n}^{*}(\alpha h))=\int_{0}^{h}\left(1-e^{-2(h-s)}\right)\nabla f(x_{n}^{*}(s))\d s,
\]
and the fourth step follows by $1-e^{-2(h-\alpha h)}\leq2h$ . By
Lemma $\text{\ref{lem:grad_f}}$,
\begin{eqnarray*}
\E\left\Vert \E_{\alpha}x_{n+1}-x_{n}^{*}(h)\right\Vert ^{2} & \leq & O\left(h^{10}\left\Vert v_{n}\right\Vert ^{2}+u^{2}h^{12}\left\Vert \nabla f(x_{n})\right\Vert ^{2}+udh^{11}\right).
\end{eqnarray*}

To show the second claim,
\begin{eqnarray*}
 &  & \E\left\Vert x_{n+1}-x_{n}^{*}(h)\right\Vert ^{2}\\
 & \leq & \frac{3}{4}\E\left\Vert uh\left(1-e^{-2(h-\alpha h)}\right)\nabla f(x_{n+\frac{1}{2}})-uh\left(1-e^{-2(h-\alpha h)}\right)\nabla f(x_{n}^{*}(\alpha h))\right\Vert ^{2}\\
 &  & +\frac{3}{4}\E\left\Vert uh\left(1-e^{-2(h-\alpha h)}\right)\nabla f(x_{n}^{*}(\alpha h))-u\int_{0}^{h}\left(1-e^{-2(h-\alpha h)}\right)\nabla f(x_{n}^{*}(s))\d s\right\Vert ^{2}\\
 &  & +\frac{3}{4}\E\left\Vert u\int_{0}^{h}\left(1-e^{-2(h-\alpha h)}\right)\nabla f(x_{n}^{*}(s))\d s-u\int_{0}^{h}\left(1-e^{-2(h-s)}\right)\nabla f(x_{n}^{*}(s))\d s\right\Vert ^{2},
\end{eqnarray*}
which follows by definition and Young's inequality. To bound the second
term,
\begin{eqnarray}
 &  & \left\Vert uh\left(1-e^{-2(h-\alpha h)}\right)\nabla f(x_{n}^{*}(\alpha h))-u\int_{0}^{h}\left(1-e^{-2(h-\alpha h)}\right)\nabla f(x_{n}^{*}(s))\d s\right\Vert ^{2}\nonumber \\
 & = & \left\Vert u\int_{0}^{h}\left(1-e^{-2(h-\alpha h)}\right)\left(\nabla f(x_{n}^{*}(\alpha h))-\nabla f(x_{n}^{*}(s))\right)\d s\right\Vert ^{2}\nonumber \\
 & \leq & u^{2}\int_{0}^{h}\left(1-e^{-2(h-\alpha h)}\right)^{2}\d s\cdot\sup_{t\in[0,h]}\left\Vert \nabla f(x_{n}^{*}(\alpha h))-\nabla f(x_{n}^{*}(t))\right\Vert ^{2}\cdot h\nonumber \\
 & \leq & 4u^{2}h^{4}\sup_{t\in[0,h]}\left\Vert \nabla f(x_{n}^{*}(\alpha h))-\nabla f(x_{n}^{*}(t))\right\Vert ^{2}\nonumber \\
 & \leq & 16h^{4}\sup_{t\in[0,h]}\left\Vert x_{n}^{*}(0)-x_{n}^{*}(t)\right\Vert ^{2}\label{eq:error_1}
\end{eqnarray}
where the second step follows by the Cauchy-Schwarz inequality. The
third term satisfies
\begin{eqnarray}
 &  & \left\Vert u\int_{0}^{h}\left(1-e^{-2(h-\alpha h)}\right)\nabla f(x_{n}^{*}(s))\d s-u\int_{0}^{h}\left(1-e^{-2(h-s)}\right)\nabla f(x_{n}^{*}(s))\d s\right\Vert ^{2}\nonumber \\
 & = & u^{2}\left\Vert \int_{0}^{h}\left(e^{-2(h-s)}-e^{-2(h-\alpha h)}\right)\nabla f(x_{n}^{*}(s))\d s\right\Vert ^{2}\nonumber \\
 & \leq & 4u^{2}h^{4}\sup_{t\in[0,h]}\left\Vert \nabla f(x_{n}^{*}(t))\right\Vert ^{2},\label{eq:error_2}
\end{eqnarray}
where the second step follows by the Cauchy Schwarz inequality and
$\left|e^{-2(h-s)}-e^{-2(h-\alpha h)}\right|\leq2h$. Thus, 
\begin{eqnarray*}
 &  & \E\left\Vert x_{n+1}-x_{n}^{*}(h)\right\Vert ^{2}\\
 & \leq & 3u^{2}h^{4}\E\left\Vert \nabla f(x_{n+\frac{1}{2}})-\nabla f(x_{n}^{*}(\alpha h))\right\Vert ^{2}+12h^{4}\E\sup_{t\in[0,h]}\left\Vert x_{n}^{*}(0)-x_{n}^{*}(t)\right\Vert ^{2}\\
 &  & +3u^{2}h^{4}\E\sup_{t\in[0,h]}\left\Vert \nabla f(x_{n}^{*}(t))\right\Vert ^{2}\\
 & \leq & 3h^{4}\cdot O\left(h^{6}\left\Vert v_{n}\right\Vert ^{2}+h^{8}u^{2}\left\Vert \nabla f(x_{n})\right\Vert ^{2}+udh^{7}\right)\\
 &  & +12h^{4}\cdot O\left(h^{2}\left\Vert v_{n}\right\Vert ^{2}+u^{2}h^{4}\left\Vert \nabla f(x_{n})\right\Vert ^{2}+udh^{3}\right)\\
 &  & +3u^{2}h^{4}\cdot O\left(\left\Vert \nabla f(x_{n})\right\Vert ^{2}+L^{2}h^{2}\left\Vert v_{n}\right\Vert ^{2}+Mdh^{3}\right)\\
 & \leq & O\left(h^{6}\left\Vert v_{n}\right\Vert ^{2}+u^{2}h^{4}\left\Vert \nabla f(x_{n})\right\Vert ^{2}+udh^{7}\right).
\end{eqnarray*}
where the first step follows by $\left(\text{\ref{eq:error_1}}\right)$
and $\left(\text{\ref{eq:error_2}}\right)$, the second step follows
by Lemma $\text{\ref{lem:idealULD}}$ and Lemma $\text{\ref{lem:grad_f}}$,
and the last inequality follows by $h\leq1$. 

To show the third claim,
\begin{eqnarray*}
\E\left\Vert \E_{\alpha}v_{n+1}-v_{n}^{*}(h)\right\Vert ^{2} & = & \E\left\Vert \E_{\alpha}uhe^{-2(h-\alpha h)}\nabla f(x_{n+\frac{1}{2}})-u\int_{0}^{h}e^{-2(h-s)}\nabla f(x_{n}^{*}(s))\d s\right\Vert ^{2}\\
 & \leq & 2\E\left\Vert uhe^{-2(h-\alpha h)}\nabla f(x_{n+\frac{1}{2}})-uhe^{-2(h-\alpha h)}\nabla f(x_{n}^{*}(\alpha h))\right\Vert ^{2}\\
 &  & +2\E\left\Vert \E_{\alpha}uhe^{-2(h-\alpha h)}\nabla f(x_{n}^{*}(\alpha h))-u\int_{0}^{h}e^{-2(h-s)}\nabla f(x_{n}^{*}(s))\d s\right\Vert ^{2}\\
 & \leq & 2u^{2}h^{2}\E\left\Vert \nabla f(x_{n+\frac{1}{2}})-\nabla f(x_{n}^{*}(\alpha h))\right\Vert ^{2}+0\\
 & \leq & O\left(h^{8}\left\Vert v_{n}\right\Vert ^{2}+u^{2}h^{10}\left\Vert \nabla f(x_{n})\right\Vert ^{2}+udh^{9}\right),
\end{eqnarray*}
where the first step follows by Young's inequality, the second step
follows by 
\begin{eqnarray*}
\E_{\alpha}uhe^{-2(h-\alpha h)}\nabla f(x_{n}^{*}(\alpha h)) & = & u\int_{0}^{h}e^{-2(h-t)}\nabla f(x_{n}^{*}(t))\d t,
\end{eqnarray*}
and $e^{-2(h-\alpha h)}\leq1$, and the third step follows by Lemma
$\text{\ref{lem:grad_f}}.$

To show the last claim,
\begin{eqnarray*}
 &  & \E\left\Vert v_{n+1}-v_{n}^{*}(h)\right\Vert ^{2}\\
 & = & \E\left\Vert uhe^{-2(h-\alpha h)}\nabla f(x_{n+\frac{1}{2}})-u\int_{0}^{h}e^{-2(h-s)}\nabla f(x^{*}(s))\d s\right\Vert ^{2}\\
 & \leq & 3\E\left\Vert uhe^{-2(h-\alpha h)}\nabla f(x_{n+\frac{1}{2}})-uhe^{-2(h-\alpha h)}\nabla f(x^{*}(\alpha h))\right\Vert ^{2}\\
 &  & +3\E\left\Vert u\int_{0}^{h}e^{-2(h-\alpha h)}\nabla f(x_{n}^{*}(\alpha h))\d t-u\int_{0}^{h}e^{-2(h-\alpha h)}\nabla f(x_{n}^{*}(s))\d s\right\Vert ^{2}\\
 &  & +3\E\left\Vert u\int_{0}^{h}e^{-2(h-\alpha h)}\nabla f(x_{n}^{*}(s))\d s-u\int_{0}^{h}e^{-2(h-s)}\nabla f(x_{n}^{*}(s))\d s\right\Vert ^{2}\\
 & \leq & 3u^{2}h^{2}\E\left\Vert \nabla f(x_{n+\frac{1}{2}})-\nabla f(x_{n}^{*}(\alpha h))\right\Vert ^{2}+3h^{2}\E\sup_{t\in[0,h]}\left\Vert x_{n}^{*}(\alpha h)-x_{n}^{*}(t)\right\Vert ^{2}\\
 &  & +12u^{2}h^{4}\E\sup_{t\in[0,h]}\left\Vert \nabla f(x_{n}^{*}(t))\right\Vert ^{2}\\
 & \leq & 3u^{2}h^{2}\cdot O\left(h^{6}L^{2}\left\Vert v_{n}\right\Vert ^{2}+h^{8}\left\Vert \nabla f(x_{n})\right\Vert ^{2}+Ldh^{7}\right)\\
 &  & +3h^{2}\cdot O\left(h^{2}\left\Vert v_{n}\right\Vert ^{2}+u^{2}h^{4}\left\Vert \nabla f(x_{n})\right\Vert ^{2}+udh^{3}\right)\\
 &  & +12u^{2}h^{4}\cdot O\left(\left\Vert \nabla f(x_{n})\right\Vert ^{2}+L^{2}h^{2}\left\Vert v_{n}\right\Vert ^{2}+Ldh^{3}\right)\\
 & \leq & O\left(h^{4}\left\Vert v_{n}\right\Vert ^{2}+u^{2}h^{4}\left\Vert \nabla f(x_{n})\right\Vert ^{2}+udh^{5}\right),
\end{eqnarray*}
where the first step follows by the definition, the second step follows
by Young's inequality, the third follows by $e^{-2(h-\alpha h)}-e^{-2(h-s)}\leq2h$
, the fourth step follows by Lemma $\text{\ref{lem:grad_f}}$ and
Lemma $\text{\ref{lem:idealULD}}$ and the last inequality follows
by $h\leq1.$
\end{proof}

\section{Bounds on $\left\Vert \nabla f(x)\right\Vert $ and $\left\Vert v\right\Vert $}

\label{sec:Bounds-on-}

In this section, we bound the sum of $\left\Vert \nabla f(x_{n})\right\Vert ^{2}$
and $\left\Vert v_{n}\right\Vert ^{2}$ over all iterations $n$,
$\sum_{n=0}^{N-1}\E\left\Vert \nabla f(x_{n})\right\Vert ^{2}$ and
$\sum_{n=0}^{N-1}\E\left\Vert v_{n}\right\Vert ^{2}$. In Appendix
$\text{\ref{sec:Proof-of-Theorem}}$, we use the results in this appendix
together with Lemma $\text{\ref{lem:error}}$ to prove the guarantee
of our algorithm. 
\begin{lem}
\label{lem:f_error}Assume $h\leq\frac{1}{20}$. For each iteration
$n$, let $x_{n}$ be the starting point of iteration $n$ of Algorithm
$\text{\ref{al:lan}}$. Let $\left\{ v_{n}(t),x_{n}(t)\right\} _{t\in[0,h]}$
be the solution of the exact underdamped Langevin diffusion starting
from $\left(v_{n},x_{n}\right)$ . Let $\E_{\alpha}$ be the expectation
over the random choice of $\alpha$ in iteration $n$. Then, the difference
between the value of $f$ on the starting point of iteration $n+1$,
$x_{n+1}$, and that of $x_{n}(h)$ satisfies
\begin{eqnarray*}
\E f(x_{n+1}(0))-f(x_{n}(h)) & \leq & O\left(uh^{3}\left\Vert \nabla f(x_{n}(0))\right\Vert ^{2}+Lh^{5}\left\Vert v_{n}(0)\right\Vert ^{2}+dh^{6}\right).
\end{eqnarray*}
\end{lem}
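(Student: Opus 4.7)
\medskip

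\noindent\textbf{Proof plan.} The plan is to apply the standard descent inequality for $L$-smooth $f$ and then bound the two resulting error terms using the discretization bounds already established in Lemma~\ref{lem:error} and the a priori bounds from Lemma~\ref{lem:idealULD}. Concretely, since $\nabla^{2} f \preceq L I_{d}$, for any $x,y$ we have $f(y) \leq f(x) + \langle \nabla f(x), y-x\rangle + \frac{L}{2}\|y-x\|^{2}$. Taking $x = x_{n}(h)$ and $y = x_{n+1}(0) = x_{n+1}$ (the algorithm's next iterate, coupled to $x_{n}(h)$ through the shared Brownian motion) and taking expectations gives
\[
\E[f(x_{n+1}) - f(x_{n}(h))] \leq \E\langle \nabla f(x_{n}(h)), x_{n+1}-x_{n}(h)\rangle + \tfrac{L}{2}\,\E\|x_{n+1}-x_{n}(h)\|^{2}.
\]

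\noindent The quadratic term is immediate from the second bound of Lemma~\ref{lem:error}: $\frac{L}{2}\E\|x_{n+1}-x_{n}(h)\|^{2} \leq O(Lh^{6}\|v_{n}\|^{2} + uh^{4}\|\nabla f(x_{n})\|^{2} + dh^{7})$, where I used $u=1/L$ so that $L\cdot u^{2} = u$ and $L\cdot u = 1$. Since $h \leq 1/20$, each of $Lh^{6}$, $uh^{4}$, $dh^{7}$ is dominated by the corresponding term $Lh^{5}$, $uh^{3}$, $dh^{6}$ in the target bound.

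\noindent The linear term is the main issue, since naively it only gives a discretization error to the first power. The key observation is that $x_{n}(h)$ is measurable with respect to the Brownian motion alone (independent of the randomization $\alpha$ inside iteration $n$), so by the tower property
\[
\E\langle \nabla f(x_{n}(h)), x_{n+1}-x_{n}(h)\rangle = \E\langle \nabla f(x_{n}(h)), \E_{\alpha} x_{n+1}-x_{n}(h)\rangle,
\]
which by Cauchy--Schwarz is bounded by $\sqrt{\E\|\nabla f(x_{n}(h))\|^{2}} \cdot \sqrt{\E\|\E_{\alpha}x_{n+1} - x_{n}(h)\|^{2}}$. Here the first factor is controlled via Lemma~\ref{lem:idealULD} by $O(\|\nabla f(x_{n})\|^{2} + L^{2}h^{2}\|v_{n}\|^{2} + Ldh^{3})$, and the second factor, which is the \emph{bias} of the randomized midpoint estimator, is controlled via the first bound of Lemma~\ref{lem:error} by $O(h^{10}\|v_{n}\|^{2} + u^{2}h^{12}\|\nabla f(x_{n})\|^{2} + udh^{11})$. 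This is where the unbiased nature of our midpoint estimator gives much better bias than variance, producing enough powers of $h$ to survive Cauchy--Schwarz.

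\noindent It then remains to expand the product of the two square roots and bound each of the nine cross terms $\sqrt{X_{i}Y_{j}}$ via Young's inequality $ab \leq \frac{\lambda}{2}a^{2} + \frac{1}{2\lambda}b^{2}$. With weights chosen so that the $\|\nabla f(x_{n})\|^{2}$ contribution lands at $\frac{uh^{3}}{2}\|\nabla f(x_{n})\|^{2}$ (i.e., $\lambda = uh^{3}$) and the $\|v_{n}\|^{2}$ contribution lands at $\frac{Lh^{5}}{2}\|v_{n}\|^{2}$, and repeatedly using $Lu = 1$ together with $h \leq 1/20$, each cross term collapses into a multiple of $uh^{3}\|\nabla f(x_{n})\|^{2}$, $Lh^{5}\|v_{n}\|^{2}$, or $dh^{6}$. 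The main obstacle is purely bookkeeping: carefully chasing all nine cross terms through AM--GM so that no stray power of $L$, $u$, or $h$ escapes the target form; thanks to $Lu = 1$ all the powers of $L$ cancel cleanly, and every surviving power of $h$ is at least as large as needed because $h \leq 1$.
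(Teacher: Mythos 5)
Your proposal is correct and follows essentially the same route as the paper's proof: the $L$-smoothness descent inequality, the quadratic term controlled by the second bound of Lemma~\ref{lem:error}, and the linear term controlled by pulling $\E_{\alpha}$ inside (valid since $x_{n}(h)$ does not depend on $\alpha$) so that the small \emph{bias} bound $\E\|\E_{\alpha}x_{n+1}-x_{n}(h)\|^{2}=O(h^{10}\|v_{n}\|^{2}+u^{2}h^{12}\|\nabla f(x_{n})\|^{2}+udh^{11})$ survives the Young's-inequality weight $\lambda=uh^{3}$. The paper applies Cauchy--Schwarz pointwise before taking expectations rather than in $L^{2}$, and bounds the product in one application of Young's inequality rather than nine cross terms, but these are cosmetic differences.
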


\begin{proof}
We first consider the expectation over the choice of $\alpha$ in
iteration $n$,
\begin{eqnarray*}
 &  & \E_{\alpha}f(x_{n+1}(0))\\
 & \leq & f(x_{n}(h))+\nabla f(x_{n}(h))^{T}\left(\E_{\alpha}x_{n+1}(0)-x_{n}(h)\right)+\frac{L}{2}\E_{\alpha}\left\Vert x_{n+1}(0)-x_{n}(h)\right\Vert ^{2}\\
 & \leq & f(x_{n}(h))+\left\Vert \nabla f(x_{n}(h))\right\Vert \left\Vert \E_{\alpha}x_{n+1}(0)-x_{n}(h)\right\Vert +\frac{L}{2}\E_{\alpha}\left\Vert x_{n+1}(0)-x_{n}(h)\right\Vert ^{2}\\
 & \leq & f(x_{n}(h))+uh^{3}\left\Vert \nabla f(x_{n}(h))\right\Vert ^{2}+\frac{L}{h^{3}}\left\Vert \E_{\alpha}x_{n+1}(0)-x_{n}(h)\right\Vert ^{2}+\frac{L}{2}\E_{\alpha}\left\Vert x_{n+1}(0)-x_{n}(h)\right\Vert ^{2},
\end{eqnarray*}
where the first step follows by $\nabla f$ is $L$-Lipschitz, the
second step follows by Cauchy-Schwarz inequality and the third step
follows by Young's inequality. By Lemma $\text{\ref{lem:error}}$
and Lemma $\text{\ref{lem:idealULD}},$ 
\begin{eqnarray*}
\E f(x_{n+1}(0)) & \leq & \E f(x_{n}(h))+uh^{3}\E\left\Vert \nabla f(x_{n}(h))\right\Vert ^{2}+\frac{L}{h^{3}}\E\left\Vert \E_{\alpha}x_{n+1}(0)-x_{n}(h)\right\Vert ^{2}\\
 &  & +\frac{L}{2}\E\left\Vert x_{n+1}(0)-x_{n}(h)\right\Vert ^{2}\\
 & \leq & \E f(x_{n}(h))+\E uh^{3}\cdot O\left(\left\Vert \nabla f(x_{n}(0))\right\Vert ^{2}+L^{2}h^{2}\left\Vert v_{n}(0)\right\Vert ^{2}+Ldh^{3}\right)\\
 &  & +\E\frac{L}{h^{3}}\cdot O\left(h^{10}\left\Vert v_{n}(0)\right\Vert ^{2}+u^{2}h^{12}\left\Vert \nabla f(x_{n}(0))\right\Vert ^{2}+udh^{11}\right)\\
 &  & +\E\frac{L}{2}\cdot O\left(h^{6}\left\Vert v_{n}(0)\right\Vert ^{2}+h^{4}u^{2}\left\Vert \nabla f(x_{n}(0))\right\Vert ^{2}+udh^{7}\right)\\
 & \leq & \E f(x_{n}(h))+O\left(uh^{3}\E\left\Vert \nabla f(x_{n}(0))\right\Vert ^{2}+Lh^{5}\E\left\Vert v_{n}(0)\right\Vert ^{2}+dh^{6}\right).
\end{eqnarray*}
where the second step follows by Lemma $\text{\ref{lem:error}}$ and
Lemma $\text{\ref{lem:idealULD}}$, and the last step follows by $h\leq\frac{1}{20}$.
\end{proof}
\begin{lem}
\label{lem:sum_v}Assume $h$ is smaller than some given constant.
For each iteration $n=0,...,N-1$, let $\left(v_{n},x_{n}\right)$
be the starting point of Algorithm $\text{\ref{al:lan}}$ in iteration
$n$. Then, 
\begin{eqnarray*}
\sum_{n=0}^{N-1}\E\left\Vert v_{n}\right\Vert ^{2} & \leq & O\left(u^{2}h\sum_{n=0}^{N-1}\E\left\Vert \nabla f(x_{n})\right\Vert ^{2}+Nud\right).
\end{eqnarray*}
\end{lem}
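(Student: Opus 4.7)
My plan is to use the Hamiltonian $H(x,v)\defeq f(x)+\frac{1}{2u}\|v\|^{2}$ as a Lyapunov function. Applying It\^o's formula to the continuous ULD $(\ref{eq:ULD})$ gives
\[
\E H(x_{n}(h),v_{n}(h))-H(x_{n},v_{n})=-\frac{2}{u}\int_{0}^{h}\E\|v_{n}(s)\|^{2}\,\d s+2dh,
\]
while the fourth claim of Lemma~\ref{lem:idealULD} yields $\int_{0}^{h}\E\|v_{n}(s)\|^{2}\,\d s\geq\frac{h}{3}\E\|v_{n}\|^{2}-O(u^{2}h^{3}\E\|\nabla f(x_{n})\|^{2}+udh^{2})$. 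Combining the two produces the per-step inequality
\[
\E\|v_{n}\|^{2}\leq O\!\left(\tfrac{u}{h}\bigl(\E H_{n}-\E H_{n}(h)\bigr)+ud+u^{2}h^{2}\E\|\nabla f(x_{n})\|^{2}\right),
\]
where I abbreviate $H_{n}\defeq H(x_{n},v_{n})$ and $H_{n}(h)\defeq H(x_{n}(h),v_{n}(h))$.

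Next I would sum over $n=0,\dots,N-1$ and decompose the telescoping sum
\[
\sum_{n=0}^{N-1}\bigl(\E H_{n}-\E H_{n}(h)\bigr)=(H_{0}-\E H_{N})+\sum_{n=0}^{N-1}\bigl(\E H_{n+1}-\E H_{n}(h)\bigr).
\]
Under the initialization of Theorem~\ref{thm:main} (starting at the minimizer of $f$ with $v_{0}=0$) the first term is $\leq 0$. The second is the per-step discretization error in $H$: Lemma~\ref{lem:f_error} handles the $f$-component, and for the $\tfrac{1}{2u}\|v\|^{2}$-component I would split
\[
\|v_{n+1}\|^{2}-\|v_{n}(h)\|^{2}=\|v_{n+1}-v_{n}(h)\|^{2}+2\,v_{n}(h)^{T}(v_{n+1}-v_{n}(h)).
\]
Taking $\E_{\alpha}$ first collapses the cross term to $v_{n}(h)^{T}(\E_{\alpha}v_{n+1}-v_{n}(h))$; applying Young's inequality with parameter $\epsilon=h^{2}$, the $\epsilon\E\|v_{n}(h)\|^{2}$ piece is bounded via the first claim of Lemma~\ref{lem:idealULD}, whereas $\tfrac{1}{\epsilon}\E\|\E_{\alpha}v_{n+1}-v_{n}(h)\|^{2}$ is bounded by the bias estimate (third claim of Lemma~\ref{lem:error}). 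Together with the variance bound (fourth claim of Lemma~\ref{lem:error}) applied to $\E\|v_{n+1}-v_{n}(h)\|^{2}$, this shows the per-step $H$-discretization error is of order $O(uh^{3}\E\|\nabla f(x_{n})\|^{2}+Lh^{2}\E\|v_{n}\|^{2}+dh^{3})$.

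Assembling everything yields an inequality of the form $\sum_{n}\E\|v_{n}\|^{2}\leq O\bigl(u^{2}h^{2}\sum_{n}\E\|\nabla f(x_{n})\|^{2}+h\sum_{n}\E\|v_{n}\|^{2}+Nud\bigr)$. Since $h$ is smaller than an absolute constant, the $h\sum\E\|v_{n}\|^{2}$ term can be absorbed into the left-hand side, and weakening $u^{2}h^{2}$ to $u^{2}h$ via $h\leq 1$ delivers the claim. The main obstacle is controlling the $v$-component of the $H$-discretization error: the cross term $v_{n}(h)^{T}(v_{n+1}-v_{n}(h))$ couples the continuous and discrete processes through the shared Brownian motion, so one must take the $\alpha$-expectation first and exploit the much smaller \emph{bias} bound rather than the variance bound in Lemma~\ref{lem:error}, so that the residual $h\sum\E\|v_{n}\|^{2}$ is small enough to be absorbed by a single bootstrap.
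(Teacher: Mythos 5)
Your proposal is correct and follows essentially the same route as the paper: the Lyapunov function $f(x)+\frac{1}{2u}\left\Vert v\right\Vert ^{2}$, It\^o's formula giving the continuous-time dissipation $-\frac{2}{u}\int_{0}^{h}\left\Vert v_{n}(s)\right\Vert ^{2}\d s$, the $\inf$-bound of Lemma~\ref{lem:idealULD}, telescoping with the discretization error controlled by Lemmas~\ref{lem:f_error} and~\ref{lem:error}, the initialization $v_{0}=0$ at the minimizer to discard the boundary term, and absorption of the residual $h\sum_{n}\E\left\Vert v_{n}\right\Vert ^{2}$. The one small divergence is your handling of the cross term $v_{n}(h)^{T}(v_{n+1}-v_{n}(h))$: the paper does not take $\E_{\alpha}$ there at all, instead applying Young's inequality with parameter $1/h^{2}$ directly to the variance bound $\E\left\Vert v_{n+1}-v_{n}(h)\right\Vert ^{2}=O(h^{4}\left\Vert v_{n}\right\Vert ^{2}+\cdots)$, which already yields $O(Lh^{2}\left\Vert v_{n}\right\Vert ^{2}+\cdots)$ and suffices for the absorption, so your final remark that the bias bound is indispensable here is an overstatement (the bias bound is genuinely needed only where the $\frac{2\kappa}{h}$ prefactor appears, as in the proof of Theorem~\ref{thm:main}), though using it as you propose is harmless and gives a slightly sharper per-step estimate.
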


\begin{proof}
Let $\left\{ v_{n}(t),x_{n}(t)\right\} _{t\in[0,h]}$ be the solution
of the exact underdamped Langevin diffusion starting from $\left(v_{n},x_{n}\right)$
. By definition, for $t\in[0,h]$,
\begin{eqnarray*}
\frac{\d f(x_{n}(t))}{\d t} & = & \nabla f(x_{n}(t))^{T}\frac{\d x_{n}(t)}{\d t}\\
 & = & \nabla f(x_{n}(t))^{T}v_{n}(t),
\end{eqnarray*}
so
\begin{eqnarray}
f(x_{n}(h)) & = & f(x_{n}(0))+\int_{0}^{h}\d f(x_{n}(t))\nonumber \\
 & = & f(x_{n}(0))+\int_{0}^{h}\nabla f(x_{n}(t))^{T}v_{n}(t)\d t.\label{eq:sum_v_2}
\end{eqnarray}

Also, since
\begin{eqnarray*}
\d v_{n}(t) & = & \left(-2v_{n}(t)-u\nabla f(x_{n}(t))\right)\d t+2\sqrt{u}\d B_{t},
\end{eqnarray*}
by Ito's lemma,
\begin{eqnarray*}
\d\frac{1}{2}\left\Vert v_{n}(t)\right\Vert ^{2} & = & \left\langle v_{n}(t),2\sqrt{u}\d B_{t}\right\rangle +\left(\left\langle v_{n}(t),-2v_{n}(t)-u\nabla f(x_{n}(t))\right\rangle +\frac{1}{2}\cdot4u\text{Tr}(I_{d})\right)\d t\\
 & = & 2\sqrt{u}v_{n}(t)^{T}\d B_{t}+\left(-2\left\Vert v_{n}(t)\right\Vert ^{2}-uv_{n}(t)^{T}\nabla f(x_{n}(t))+2ud\right)\d t,
\end{eqnarray*}
and therefore
\begin{equation}
\E\frac{1}{2u}\left\Vert v_{n}(h)\right\Vert ^{2}=\E\frac{1}{2u}\left\Vert v_{n}(0)\right\Vert ^{2}+\E\int_{0}^{h}\left(4d-\frac{2}{u}\left\Vert v_{n}(t)\right\Vert ^{2}-v_{n}(t)^{T}\nabla f(x_{n}(t))+2d\right)\d t.\label{eq:sum_v_1}
\end{equation}

Now, we consider the term $\frac{1}{2u}\left\Vert v_{n}(h)\right\Vert ^{2}+f(x_{n}(h)).$
By $\text{\ensuremath{\left(\ref{eq:sum_v_2}\right)}}$and $\text{\ensuremath{\left(\ref{eq:sum_v_1}\right)}}$,
\begin{eqnarray*}
 &  & \E\left[\frac{1}{2u}\left\Vert v_{n}(h)\right\Vert ^{2}+f(x_{n}(h))\right]\\
 & = & \E\left[\frac{1}{2u}\left\Vert v_{n}(0)\right\Vert ^{2}+f(x_{n}(0))+\int_{0}^{h}\left(-\frac{2}{u}\left\Vert v_{n}(t)\right\Vert ^{2}+6d\right)\d t\right]\\
 & \text{\ensuremath{\leq}} & \E\left[\frac{1}{2u}\left\Vert v_{n}(0)\right\Vert ^{2}+f(x_{n}(0))-\frac{2}{u}h\inf_{t\in[0,h]}\left\Vert v_{n}(t)\right\Vert ^{2}+6dh\right]\\
 & \leq & \E\left[\frac{1}{2u}\left\Vert v_{n}(0)\right\Vert ^{2}+f(x_{n}(0))\right]-\frac{2}{3}hL\E\left\Vert v_{n}(0)\right\Vert ^{2}+O\left(uh^{3}\E\left\Vert \nabla f(x_{n}(0))\right\Vert ^{2}+dh\right),
\end{eqnarray*}
where the first step follows by $\text{\ensuremath{\left(\ref{eq:sum_v_2}\right)}}$
and $\text{\ensuremath{\left(\ref{eq:sum_v_1}\right)}}$ and the third
step follows by Lemma $\text{\ref{lem:idealULD}}$.

Since
\begin{eqnarray*}
 &  & \E\left[\left\Vert v_{n+1}(0)\right\Vert ^{2}-\left\Vert v_{n}(h)\right\Vert ^{2}\right]\\
 & = & \E\left(v_{n+1}(0)-v_{n}(h)\right)^{T}\left(v_{n+1}(0)+v_{n}(h)\right)\\
 & \leq & \frac{1}{h^{2}}\E\left\Vert v_{n+1}(0)-v_{n}(h)\right\Vert ^{2}+\frac{1}{2}h^{2}\E\left\Vert v_{n+1}(0)+v_{n}(h)\right\Vert ^{2}\\
 & \leq & \frac{1}{h^{2}}\E\left\Vert v_{n+1}(0)-v_{n}(h)\right\Vert ^{2}+h^{2}\E\left\Vert v_{n+1}(0)-v_{n}(h)\right\Vert ^{2}+4h^{2}\E\left\Vert v_{n}(h)\right\Vert ^{2}\\
 & \leq & \frac{2}{h^{2}}\E\left\Vert v_{n+1}(0)-v_{n}(h)\right\Vert ^{2}+4h^{2}\E\left\Vert v_{n}(h)\right\Vert ^{2}\\
 & \leq & O\left(h^{2}\E\left\Vert v_{n}(0)\right\Vert ^{2}+u^{2}h^{2}\E\left\Vert \nabla f(x_{n}(0))\right\Vert ^{2}+udh^{3}\right),
\end{eqnarray*}
where the first inequality follows by the inequality $2ab\leq a^{2}+b^{2},$
the second inequality follows by Young's inequality and the last inequality
follows by Lemma $\text{\ref{lem:error}}$ and Lemma $\text{\ref{lem:idealULD}}$.

Since
\begin{eqnarray*}
\E f(x_{n+1}(0))-f(x_{n}(h)) & \leq & O\left(uh^{3}\E\left\Vert \nabla f(x_{n}(0))\right\Vert ^{2}+Lh^{5}\E\left\Vert v_{n}(0)\right\Vert ^{2}+dh^{6}\right),
\end{eqnarray*}
which is shown in Lemma $\text{\ref{lem:f_error}},$ we have
\begin{eqnarray*}
 &  & \E\left[\frac{1}{2u}\left\Vert v_{n+1}(0)\right\Vert ^{2}+f(x_{n+1}(0))\right]\\
 & \leq & \E\left[\frac{1}{2u}\left\Vert v_{n}(0)\right\Vert ^{2}+f(x_{n}(0))\right]-\frac{2}{3}hL\E\left\Vert v_{n}(0)\right\Vert ^{2}+O\left(uh^{3}\E\left\Vert \nabla f(x_{n}(0))\right\Vert ^{2}+dh\right)\\
 &  & +O\left(h^{2}L\E\left\Vert v_{n}(0)\right\Vert ^{2}+uh^{2}\E\left\Vert \nabla f(x_{n}(0))\right\Vert ^{2}+dh^{3}\right)\\
 &  & +O\left(uh^{3}\E\left\Vert \nabla f(x_{n}(0))\right\Vert ^{2}+Lh^{5}\E\left\Vert v_{n}(0)\right\Vert ^{2}+dh^{6}\right)\\
 & \leq & \E\left[\frac{1}{2u}\left\Vert v_{n}(0)\right\Vert ^{2}+f(x_{n}(0))\right]-\frac{1}{3}hL\E\left\Vert v_{n}(0)\right\Vert ^{2}+O\left(uh^{2}\E\left\Vert \nabla f(x_{n}(0))\right\Vert ^{2}+hd\right),
\end{eqnarray*}
where the last step follows by $h$ is small. Summing $n$ from $0$
to $N-1$, we get 
\begin{eqnarray*}
 &  & \sum_{n=0}^{N-1}\E\left[\frac{1}{2u}\left\Vert v_{n+1}(0)\right\Vert ^{2}+f(x_{n+1}(0))\right]\\
 & \leq & \sum_{n=0}^{N-1}\E\left[\frac{1}{2u}\left\Vert v_{n}(0)\right\Vert ^{2}+f(x_{n}(0))\right]-\frac{1}{3}hL\sum_{n=0}^{N-1}\E\left\Vert v_{n}(0)\right\Vert ^{2}\\
 &  & +O\left(uh^{2}\sum_{n=0}^{N-1}\E\left\Vert \nabla f(x_{n}(0))\right\Vert ^{2}+Nhd\right).
\end{eqnarray*}
Since $\left\Vert v_{0}(0)\right\Vert =0$ and $f(x_{0}(0))\leq f(x_{N}(0)),$
\begin{eqnarray*}
\frac{1}{3}hL\sum_{n=0}^{N-1}\E\left\Vert v_{n}(0)\right\Vert ^{2} & \leq & O\left(uh^{2}\sum_{n=0}^{N-1}\E\left\Vert \nabla f(x_{n}(0))\right\Vert ^{2}+Nhd\right),
\end{eqnarray*}
 which implies
\begin{eqnarray*}
\sum_{n=0}^{N-1}\E\left\Vert v_{n}(0)\right\Vert ^{2} & \leq & O\left(u^{2}h\sum_{n=0}^{N-1}\E\left\Vert \nabla f(x_{n}(0))\right\Vert ^{2}+Nud\right).
\end{eqnarray*}
\end{proof}
\begin{lem}
\label{lem:sum_gradf} Assume $h$ is smaller than some given constant.
For each iteration $n=0,...,N-1$, let $\left(v_{n},x_{n}\right)$
be the starting point of Algorithm $\text{\ref{al:lan}}$ in iteration
$n$. Then, the $x_{n}$ in iteration $n=0,...,N-1$ satisfies
\begin{eqnarray*}
\sum_{n=0}^{N-1}\E\left\Vert \nabla f(x_{n})\right\Vert ^{2} & \leq & O\left(NLd+\frac{L}{h}\left|\E\nabla f(x_{N})^{T}v_{N}\right|\right).
\end{eqnarray*}
Furthermore, the $v_{n}$ in iteration $n=0,...,N-1$ satisfies
\begin{eqnarray*}
\sum_{n=0}^{N-1}\E\left\Vert v_{n}\right\Vert ^{2} & \leq & O\left(Nud+u\left|\E\nabla f(x_{N})^{T}v_{N}\right|\right).
\end{eqnarray*}
\end{lem}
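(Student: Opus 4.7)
The plan is to apply It\^o's formula to the mixed scalar $G(x,v)=\nabla f(x)^T v$ along the continuous ULD, obtain a per-iteration inequality that bounds $\E\|\nabla f(x_n)\|^2$ by a decrement in $G$ plus a small amount of $\E\|v_n\|^2$ and an additive $O(dh^2)$ term, telescope those decrements over $n$, handle the small inter-iteration jumps using Lemma~\ref{lem:error}, and finally combine with Lemma~\ref{lem:sum_v} to close the loop.

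Concretely, let $(x_n(t),v_n(t))_{t\in[0,h]}$ denote the exact ULD starting from $(x_n,v_n)$ and let $G_n(t)=G(x_n(t),v_n(t))$. Because $G$ is linear in $v$ and the ULD diffusion acts only on the $v$-coordinate, It\^o's formula has no second-order correction and yields
\[
\d G_n(t) = \bigl[v_n(t)^T\nabla^2 f(x_n(t))v_n(t) - 2\nabla f(x_n(t))^T v_n(t) - u\|\nabla f(x_n(t))\|^2\bigr]\d t + 2\sqrt{u}\,\nabla f(x_n(t))^T\d B_t.
\]
Taking expectation, bounding $v^T\nabla^2 f\,v\leq L\|v\|^2$, applying Young's inequality to $-2\nabla f^T v$ and using $u=1/L$ gives
\[
\tfrac{1}{2L}\,\E\!\int_0^h\!\|\nabla f(x_n(t))\|^2\d t \leq \E[G_n(0)-G_n(h)] + 3L\,\E\!\int_0^h\!\|v_n(t)\|^2\d t.
\]
I then invoke Lemma~\ref{lem:idealULD} to lower-bound the LHS integral by $\tfrac{h}{3}\E\|\nabla f(x_n)\|^2$ up to $O(L^2 h^3\E\|v_n\|^2+Ldh^4)$, and to upper-bound the RHS integral by $O(h\E\|v_n\|^2 + u^2h^3\E\|\nabla f(x_n)\|^2 + udh^2)$. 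Absorbing the small $\|\nabla f(x_n)\|^2$ contribution on the right back into the left (valid for $h$ a small constant) produces the clean per-step inequality $\tfrac{h}{O(L)}\,\E\|\nabla f(x_n)\|^2 \leq \E[G_n(0)-G_n(h)] + O(Lh\,\E\|v_n\|^2 + dh^2)$.

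Summing over $n=0,\dots,N-1$ telescopes the $G$-differences into $-\E G_N(0) + \sum_{n=1}^{N}\E[G_n(0)-G_{n-1}(h)]$, using $v_0=0$ so that $G_0(0)=0$ and identifying $G_N(0)=\nabla f(x_N)^T v_N$; this produces exactly the $\tfrac{L}{h}|\E\nabla f(x_N)^T v_N|$ contribution in the statement. Each interior bias $G_n(0)-G_{n-1}(h)$ I split as $(\nabla f(x_n)-\nabla f(x_{n-1}(h)))^T v_{n-1}(h) + \nabla f(x_n)^T(v_n-v_{n-1}(h))$ and bound by Young's inequality using the $L$-Lipschitz bound on $\nabla f$, the moment bounds of Lemma~\ref{lem:error} on $x_n-x_{n-1}(h)$ and $v_n-v_{n-1}(h)$, and the moment bound on $\E\|v_{n-1}(h)\|^2$ from Lemma~\ref{lem:idealULD}. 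With Young's constants tuned so that the coefficients multiplying $\E\|\nabla f(x_{n-1})\|^2$ are $o(h/L)$, each bias fits into the same $O(Lh\,\E\|v_{n-1}\|^2 + \tfrac{h^2}{L}\E\|\nabla f(x_{n-1})\|^2 + dh^2)$ shape as the per-step error, so summing creates no new obstruction.

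Setting $S=\sum_n\E\|\nabla f(x_n)\|^2$ and $T=\sum_n\E\|v_n\|^2$, the accumulated inequality reads $S \leq O\bigl(\tfrac{L}{h}|\E\nabla f(x_N)^T v_N| + L T + NLd + h S\bigr)$. Substituting Lemma~\ref{lem:sum_v} converts $LT$ into $O(hS+NLd)$, and absorbing the resulting $O(hS)$ into the left-hand side (legal for small $h$) gives the first claim. The second claim follows immediately by plugging the first back into Lemma~\ref{lem:sum_v}. The main obstacle is controlling the interior bias: the sharper $\E_\alpha$ first-moment bounds of Lemma~\ref{lem:error} are not directly available because $\nabla f(x_n)-\nabla f(x_{n-1}(h))$ is nonlinear in $x_n$, so one must work with the second-moment bounds and choose the Young's constants carefully so that the bias powers of $h$ stay strictly below the per-step budget and the extra $hS$ remains absorbable after applying Lemma~\ref{lem:sum_v}.
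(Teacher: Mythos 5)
Your proposal is correct and follows essentially the same route as the paper's proof: It\^o's formula applied to $\nabla f(x(t))^T v(t)$ along the exact ULD, the same drift bounds via $v^T\nabla^2 f\,v\le L\|v\|^2$ and Young's inequality, the sup/inf moment bounds of Lemma~\ref{lem:idealULD}, the same split of the inter-iteration bias controlled by Lemma~\ref{lem:error}, telescoping with $v_0=0$, and closing the loop with Lemma~\ref{lem:sum_v}. No substantive differences.
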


\begin{proof}
For each iteration $n=0,...,N-1$, let $\left\{ v_{n}(t),x_{n}(t)\right\} _{t\in[0,h]}$
be the exact underdamped Langevin diffusion starting from $\left(v_{n},x_{n}\right)$
computed in Algorithm $\text{\ref{al:lan}}$. By definition,
\begin{eqnarray*}
 &  & \E\left[\d\nabla f(x_{n}(t))^{T}v_{n}(t)\right]\\
 & = & \E\left[v_{n}(t)^{T}\nabla^{2}f(x_{n}(t))v_{n}(t)+\nabla f(x_{n}(t))^{T}\d v_{n}(t)\right]\\
 & = & \E\left[v_{n}(t)^{T}\nabla^{2}f(x_{n}(t))v_{n}(t)-2\nabla f(x_{n}(t))^{T}v_{n}(t)-u\left\Vert \nabla f(x_{n}(t))\right\Vert ^{2}\right].
\end{eqnarray*}
So we have
\begin{eqnarray}
 &  & \E\left[\nabla f(x_{n}(h))^{T}v_{n}(h)\right]\nonumber \\
 & = & \E\left[\nabla f(x_{n}(0))^{T}v_{n}(0)+\int_{0}^{h}\d\nabla f(x_{n}(t))^{T}v_{n}(t)\right]\nonumber \\
 & = & \E\Bigg[\nabla f(x_{n}(0))^{T}v_{n}(0)+\int_{0}^{h}v_{n}(t)^{T}\nabla^{2}f(x_{n}(t))v_{n}(t)-2\nabla f(x_{n}(t))^{T}v_{n}(t)\nonumber \\
 &  & -u\left\Vert \nabla f(x_{n}(t))\right\Vert ^{2}\d t\Bigg]\nonumber \\
 & \text{\ensuremath{\leq}} & \E\left[\nabla f(x_{n}(0))^{T}v_{n}(0)+3L\int_{0}^{h}\left\Vert v_{n}(t)\right\Vert ^{2}\d t-\frac{1}{2}\int_{0}^{h}u\left\Vert \nabla f(x_{n}(t))\right\Vert ^{2}\d t\right]\nonumber \\
 & \leq & \E\left[\nabla f(x_{n}(0))^{T}v_{n}(0)+3Lh\sup_{t\in[0,h]}\left\Vert v_{n}(t)\right\Vert ^{2}-\frac{1}{2}hu\inf_{t\in[0,h]}\left\Vert \nabla f(x_{n}(t))\right\Vert ^{2}\right]\nonumber \\
 & \leq & \E\nabla f(x_{n}(0))^{T}v_{n}(0)-\frac{1}{6}hu\E\left\Vert \nabla f(x_{n}(0))\right\Vert ^{2}+O\left(h^{3}L\E\left\Vert v_{n}(0)\right\Vert ^{2}+dh^{4}\right)\nonumber \\
 &  & +3Lh\cdot O\left(\E\left\Vert v_{n}(0)\right\Vert ^{2}+u^{2}h^{2}\E\left\Vert \nabla f(x_{n}(0))\right\Vert ^{2}+udh\right)\nonumber \\
 & \leq & \E\nabla f(x_{n}(0))^{T}v_{n}(0)-\frac{1}{6}hu\E\left\Vert \nabla f(x_{n}(0))\right\Vert ^{2}\nonumber \\
 &  & +O\left(Lh\E\left\Vert v_{n}(0)\right\Vert ^{2}+uh^{3}\E\left\Vert \nabla f(x_{n}(0))\right\Vert ^{2}+dh^{2}\right),\label{eq:sum_gradf_1}
\end{eqnarray}
where the third step follows by Young's inequality, the fifth step
follows by Lemma $\text{\ref{lem:idealULD}}$ and the last step follows
by $h$ is small. Also, we have 
\begin{eqnarray}
 &  & \E\left[\nabla f(x_{n+1}(0))^{T}v_{n+1}(0)-\nabla f(x_{n}(h))^{T}v_{n}(h)\right]\nonumber \\
 & = & \E\left(\nabla f(x_{n+1}(0))-\nabla f(x_{n}(h))+\nabla f(x_{n}(h))\right)^{T}\left(v_{n+1}(0)-v_{n}(h)\right)\nonumber \\
 &  & +\E\left(\nabla f(x_{n+1}(0)-\nabla f(x_{n}(h))\right)^{T}v_{n}(h)\nonumber \\
 & \leq & u\E\left\Vert \nabla f(x_{n+1}(0))-\nabla f(x_{n}(h))\right\Vert ^{2}+L\E\left\Vert v_{n+1}(0)-v_{n}(h)\right\Vert ^{2}+uh^{2}\E\left\Vert \nabla f(x_{n}(h))\right\Vert ^{2}\nonumber \\
 &  & +\frac{L}{h^{2}}\E\left\Vert v_{n+1}(0)-v_{n}(h)\right\Vert ^{2}+\frac{u}{h}\E\left\Vert \nabla f(x_{n+1}(0))-\nabla f(x_{n}(h))\right\Vert ^{2}+hL\E\left\Vert v_{n}(h)\right\Vert ^{2}\nonumber \\
 & \leq & \frac{2u}{h}\E\left\Vert \nabla f(x_{n+1}(0))-\nabla f(x_{n}(h))\right\Vert ^{2}+\frac{2L}{h^{2}}\E\left\Vert v_{n+1}(0)-v_{n}(h)\right\Vert ^{2}+uh^{2}\E\left\Vert \nabla f(x_{n}(h))\right\Vert ^{2}\nonumber \\
 &  & +hL\E\left\Vert v_{n}(h)\right\Vert ^{2}\nonumber \\
 & \leq & \frac{2L}{h}\cdot O\left(h^{6}\E\left\Vert v_{n}(0)\right\Vert ^{2}+h^{4}u^{2}\E\left\Vert \nabla f(x_{n}(0))\right\Vert ^{2}+udh^{7}\right)\nonumber \\
 &  & +\frac{2L}{h^{2}}\cdot O\left(h^{4}\E\left\Vert v_{n}(0)\right\Vert ^{2}+u^{2}h^{4}\E\left\Vert \nabla f(x_{n}(0))\right\Vert ^{2}+udh^{5}\right)\nonumber \\
 &  & +uh^{2}\cdot O\left(\E\left\Vert \nabla f(x_{n}(0))\right\Vert ^{2}+L^{2}h^{2}\E\left\Vert v_{n}(0)\right\Vert ^{2}+Ldh^{3}\right)\nonumber \\
 &  & +hL\cdot O\left(\E\left\Vert v_{n}(0)\right\Vert ^{2}+u^{2}h^{2}\E\left\Vert \nabla f(x_{n}(0))\right\Vert ^{2}+udh\right)\nonumber \\
 & \leq & O\left(hL\E\left\Vert v_{n}(0)\right\Vert ^{2}+uh^{2}\E\left\Vert \nabla f(x_{n}(0))\right\Vert ^{2}+dh^{2}\right),\label{eq:sum_gradf_2}
\end{eqnarray}
where the second step follows by Young's inequality and the fourth
step follows by Lemma $\text{\ref{lem:error}}$ and Lemma $\text{\ref{lem:idealULD}}$.
Combining $\text{\ensuremath{\left(\ref{eq:sum_gradf_1}\right)}}$
and $\text{\ensuremath{\left(\ref{eq:sum_gradf_2}\right)}},$
\begin{eqnarray*}
\E\nabla f(x_{n+1}(0))^{T}v_{n+1}(0) & \leq & \E\nabla f(x_{n}(0))^{T}v_{n}(0)-\frac{1}{6}hu\E\left\Vert \nabla f(x_{n}(0))\right\Vert ^{2}\\
 &  & +O\left(Lh\E\left\Vert v_{n}(0)\right\Vert ^{2}+uh^{3}\E\left\Vert \nabla f(x_{n}(0))\right\Vert ^{2}+dh^{2}\right)\\
 &  & +O\left(Lh\E\left\Vert v_{n}(0)\right\Vert ^{2}+uh^{2}\E\left\Vert \nabla f(x_{n}(0))\right\Vert ^{2}+dh^{2}\right)\\
 & \leq & \E\nabla f(x_{n}(0))^{T}v_{n}(0)-\frac{1}{6}hu\E\left\Vert \nabla f(x_{n}(0))\right\Vert ^{2}\\
 &  & +O\left(Lh\E\left\Vert v_{n}(0)\right\Vert ^{2}+uh^{2}\E\left\Vert \nabla f(x_{n}(0))\right\Vert ^{2}+dh^{2}\right).
\end{eqnarray*}

Summing from $n=0$ to $N-1$,
\begin{align*}
\sum_{n=0}^{N-1}\E\nabla f(x_{n+1}(0))^{T}v_{n+1}(0) & \leq\sum_{n=0}^{N-1}\E\nabla f(x_{n}(0))^{T}v_{n}(0)-\frac{1}{6}hu\sum_{n=0}^{N-1}\E\left\Vert \nabla f(x_{n}(0))\right\Vert ^{2}\\
 & +O\left(Lh\sum_{n=0}^{N-1}\E\left\Vert v_{n}(0)\right\Vert ^{2}+uh^{2}\sum_{n=0}^{N-1}\E\left\Vert \nabla f(x_{n}(0))\right\Vert ^{2}+Ndh^{2}\right)\\
 & \leq\sum_{n=0}^{N-1}\E\nabla f(x_{n}(0))^{T}v_{n}(0)-\frac{1}{6}hu\sum_{n=0}^{N-1}\left\Vert \nabla f(x_{n}(0))\right\Vert ^{2}\\
 & +O\left(Lh\left(u^{2}h\sum_{n=0}^{N-1}\E\left\Vert \nabla f(x_{n}(0))\right\Vert ^{2}+Nud\right)+Ndh^{2}\right)\\
 & \leq\sum_{n=0}^{N-1}\E\nabla f(x_{n}(0))^{T}v_{n}(0)-\frac{1}{8}hu\sum_{n=0}^{N-1}\left\Vert \nabla f(x_{n}(0))\right\Vert ^{2}+O\left(Ndh\right),
\end{align*}
where the second step follows by Lemma $\text{\ref{lem:sum_v}}$ and
the last step follows by $h$ is small. Then, since $v_{0}=0$,
\begin{eqnarray*}
\frac{1}{8}hu\sum_{n=0}^{N-1}\E\left\Vert \nabla f(x_{n}(0))\right\Vert ^{2} & \leq & O\left(Ndh+\left|\E\nabla f(x_{N}(0))^{T}v_{N}(0)\right|\right),
\end{eqnarray*}
 which implies
\begin{eqnarray*}
\sum_{n=0}^{N-1}\E\left\Vert \nabla f(x_{n}(0))\right\Vert ^{2} & \leq & O\left(NLd+\frac{L}{h}\left|\E\nabla f(x_{N}(0))^{T}v_{N}(0)\right|\right).
\end{eqnarray*}
 By Lemma $\text{\ref{lem:sum_v}}$,
\begin{eqnarray*}
\sum_{n=0}^{N-1}\E\left\Vert v_{n}(0)\right\Vert ^{2} & \leq & O\left(u^{2}h\sum_{n=0}^{N-1}\E\left\Vert \nabla f(x_{n}(0))\right\Vert ^{2}+Nud\right)\\
 & \leq & O\left(Nud+u\left|\E\nabla f(x_{N}(0))^{T}v_{N}(0)\right|\right).
\end{eqnarray*}
\end{proof}

\section{Proof of Theorem $\text{\ref{thm:main}}$}

\label{sec:Proof-of-Theorem}

Here, we combine Lemma $\text{\ref{lem:sum_gradf}}$ and Lemma $\text{\ref{lem:error}}$
to prove our main result.
\begin{proof}
Let $x_{n+\frac{1}{2}}$, $x_{n}$ and $v_{n}$ be the iterates of
Algorithm $\ref{al:lan}$. Let $\left(y_{n},w_{n}\right)$ be the
$n$-th step of the exact underdamped Langevin diffusion, starting
from a random point $\left(y_{0},w_{0}\right)\propto\exp\left(-\left(f(y)+\frac{L}{2}\left\Vert w\right\Vert ^{2}\right)\right)$,
coupled with $\left(x_{n},v_{n}\right)$ through the same Brownian
motion. Let $\left(x_{n+1}^{*},v_{n+1}^{*}\right)$ be the 1-step
exact Langevin diffusion starting from $\left(x_{n},v_{n}\right).$
For any iteration $n$, let $\E_{\alpha}$ be the expectation taken
over the random choice of $\alpha$ in iteration $n$. Then,
\begin{eqnarray*}
 &  & \E_{\alpha}\left[\left\Vert x_{n}-y_{n}\right\Vert ^{2}+\left\Vert \left(x_{n}+v_{n}\right)-\left(y_{n}+w_{n}\right)\right\Vert ^{2}\right]\\
 & = & \E_{\alpha}\left[\left\Vert \left(x_{n}-x_{n}^{*}\right)-\left(y_{n}-x_{n}^{*}\right)\right\Vert ^{2}+\left\Vert \left(x_{n}+v_{n}-x_{n}^{*}-v_{n}^{*}\right)-\left(y_{n}+w_{n}-x_{n}^{*}-v_{n}^{*}\right)\right\Vert ^{2}\right]\\
 & \leq & \left\Vert y_{n}-x_{n}^{*}\right\Vert ^{2}+\left\Vert y_{n}+w_{n}-x_{n}^{*}-v_{n}^{*}\right\Vert ^{2}+\E_{\alpha}\left\Vert x_{n}-x_{n}^{*}\right\Vert ^{2}+\E_{\alpha}\left\Vert x_{n}+v_{n}-x_{n}^{*}-v_{n}^{*}\right\Vert ^{2}\\
 &  & -2\left(y_{n}-x_{n}^{*}\right)^{T}\left(\E_{\alpha}x_{n}-x_{n}^{*}\right)-2\left(y_{n}+w_{n}-x_{n}^{*}-v_{n}^{*}\right)^{T}\left(\E_{\alpha}\left[x_{n}+v_{n}\right]-x_{n}^{*}-v_{n}^{*}\right)\\
 & \leq & \left(1+\frac{h}{2\kappa}\right)\left(\left\Vert y_{n}-x_{n}^{*}\right\Vert ^{2}+\left\Vert y_{n}+w_{n}-x_{n}^{*}-v_{n}^{*}\right\Vert ^{2}\right)\\
 &  & +\frac{2\kappa}{h}\left(\left\Vert \E_{\alpha}x_{n}-x_{n}^{*}\right\Vert ^{2}+\left\Vert \E_{\alpha}\left[x_{n}+v_{n}\right]-x_{n}^{*}-v_{n}^{*}\right\Vert ^{2}\right)+\E_{\alpha}\left\Vert x_{n}-x_{n}^{*}\right\Vert ^{2}\\
 &  & +\E_{\alpha}\left\Vert x_{n}+v_{n}-x_{n}^{*}-v_{n}^{*}\right\Vert ^{2},
\end{eqnarray*}
where the second step follows by $y_{n},$ $w_{n}$, $x_{n}^{*}$
and $v_{n}^{*}$ are independent of the choice of $\alpha$ and the
third follows by Young's inequality. Then,
\begin{eqnarray*}
 &  & \E\left[\left\Vert x_{N}-y_{N}\right\Vert ^{2}+\left\Vert \left(x_{N}+v_{N}\right)-\left(y_{N}+w_{N}\right)\right\Vert ^{2}\right]\\
 & \leq & \left(1+\frac{h}{2\kappa}\right)e^{-\frac{h}{\kappa}}\E\left[\left\Vert y_{N-1}-x_{N-1}\right\Vert ^{2}+\left\Vert y_{N-1}+w_{N-1}-x_{N-1}-v_{N-1}\right\Vert ^{2}\right]\\
 &  & +\frac{2\kappa}{h}\left(\E\left\Vert \E_{\alpha}x_{N}-x_{N}^{*}\right\Vert ^{2}+\E\left\Vert \E_{\alpha}x_{N}+v_{N}-x_{N}^{*}-v_{N}^{*}\right\Vert ^{2}\right)\\
 &  & +\left(\E\left\Vert x_{N}-x_{N}^{*}\right\Vert ^{2}+\E\left\Vert x_{N}+v_{N}-x_{N}^{*}-v_{N}^{*}\right\Vert ^{2}\right)\\
 & \leq & e^{-\frac{h}{2\kappa}}\E\left[\left\Vert y_{N-1}-x_{N-1}\right\Vert ^{2}+\left\Vert y_{N-1}+w_{N-1}-x_{N-1}-v_{N-1}\right\Vert ^{2}\right]\\
 &  & +\frac{2\kappa}{h}\left(2\E\left\Vert \E_{\alpha}v_{N}-v_{N}^{*}\right\Vert ^{2}+3\E\left\Vert \E_{\alpha}x_{N}-x_{N}^{*}\right\Vert ^{2}\right)+\left(2\E\left\Vert v_{N}-v_{N}^{*}\right\Vert ^{2}+3\E\left\Vert x_{N}-x_{N}^{*}\right\Vert ^{2}\right)\\
 & \leq & e^{-\frac{Nh}{2\kappa}}\E\left[\left\Vert y_{0}-x_{0}\right\Vert ^{2}+\left\Vert y_{0}+w_{0}-x_{0}-v_{0}\right\Vert ^{2}\right]\\
 &  & +\sum_{n=1}^{N}\frac{2\kappa}{h}\left(2\E\left\Vert \E_{\alpha}v_{n}-v_{n}^{*}\right\Vert ^{2}+3\E\left\Vert \E_{\alpha}x_{n}-x_{n}^{*}\right\Vert ^{2}\right)\\
 &  & +\sum_{n=1}^{N}\left(2\E\left\Vert v_{n}-v_{n}^{*}\right\Vert ^{2}+3\E\left\Vert x_{n}-x_{n}^{*}\right\Vert ^{2}\right),
\end{eqnarray*}
where the first step follows by Lemma $\text{\ref{lem:ULD}}$, the
second step follows by $1+\frac{h}{2\kappa}\leq e^{\frac{h}{2\kappa}}$
, and the last step follows by induction.

Since $(y_{N},w_{N})$ follows the distribution $p^{*}\propto\exp\left(-\left(f(y)+\frac{L}{2}\left\Vert w\right\Vert ^{2}\right)\right)$,
$\E\left\Vert w_{N}\right\Vert ^{2}=\frac{d}{L}$. By Proposition
1 of \cite{durmus2016high}, $\E\left\Vert y_{0}-x_{0}\right\Vert ^{2}\leq\frac{d}{m}$.
Then,
\begin{align*}
\E\left[\left\Vert y_{0}-x_{0}\right\Vert ^{2}+\left\Vert y_{0}+w_{0}-x_{0}-v_{0}\right\Vert ^{2}\right] & \leq3\E\left\Vert y_{0}-x_{0}\right\Vert ^{2}+2\E\left\Vert w_{0}-v_{0}\right\Vert ^{2}\\
 & \leq5\frac{d}{m}.
\end{align*}
When $N=\frac{2\kappa}{h}\log\left(\frac{20}{\epsilon^{2}}\right)$,
\begin{eqnarray*}
e^{-\frac{Nh}{2\kappa}}\E\left[\left\Vert y_{0}-x_{0}\right\Vert ^{2}+\left\Vert y_{0}+w_{0}-x_{0}-v_{0}\right\Vert ^{2}\right] & \leq & \frac{\epsilon^{2}d}{4m}.
\end{eqnarray*}
By Lemma $\text{\ref{lem:error}}$,
\begin{eqnarray*}
 &  & \sum_{n=1}^{N}\frac{2\kappa}{h}\left(2\E\left\Vert \E_{\alpha}v_{n}-v_{n}^{*}\right\Vert ^{2}+3\E\left\Vert \E_{\alpha}x_{n}-x_{n}^{*}\right\Vert ^{2}\right)\\
 & \leq & O\left(h^{7}\kappa\sum_{n=0}^{N-1}\E\left\Vert v_{n}\right\Vert ^{2}+\frac{u}{m}h^{9}\sum_{n=0}^{N-1}\E\left\Vert \nabla f(x_{n})\right\Vert ^{2}+\frac{1}{m}Ndh^{8}\right),
\end{eqnarray*}
and
\begin{eqnarray*}
 &  & \sum_{n=1}^{N}\left(2\E\left\Vert v_{n}-v_{n}^{*}\right\Vert ^{2}+3\E\left\Vert x_{n}-x_{n}^{*}\right\Vert ^{2}\right)\\
 & \leq & O\left(h^{4}\sum_{n=0}^{N-1}\E\left\Vert v_{n}\right\Vert ^{2}+u^{2}h^{4}\sum_{n=0}^{N-1}\E\left\Vert \nabla f(x_{n})\right\Vert ^{2}+Nudh^{5}\right).
\end{eqnarray*}
By Lemma 2 of \cite{dalalyan2017further}, $\E\left\Vert \nabla f(y_{N})\right\Vert ^{2}\leq dL$.
Then, by $\E\left\Vert \nabla f(y_{N})\right\Vert ^{2}\leq dL$ and
$\E\left\Vert w_{N}\right\Vert ^{2}=\frac{d}{L}$,
\begin{eqnarray*}
\left|\E\nabla f(x_{N})^{T}v_{N}\right| & \leq & \E\left[L\left\Vert v_{N}\right\Vert ^{2}+u\left\Vert \nabla f(x_{N})\right\Vert ^{2}\right]\\
 & \leq & 2\E\left[L\left\Vert w_{N}\right\Vert ^{2}+L\left\Vert v_{N}-w_{N}\right\Vert ^{2}+u\left\Vert \nabla f(y_{N})\right\Vert ^{2}+L\left\Vert x_{N}-y_{N}\right\Vert ^{2}\right]\\
 & \leq & 4d+2L\E\left[\left\Vert v_{N}-w_{N}\right\Vert ^{2}+\left\Vert x_{N}-y_{N}\right\Vert ^{2}\right]\\
 & \leq & 4d+6L\E\left[\left\Vert x_{N}-y_{N}\right\Vert ^{2}+\left\Vert \left(x_{N}+v_{N}\right)-\left(y_{N}+w_{N}\right)\right\Vert ^{2}\right],
\end{eqnarray*}

By Lemma $\text{\ref{lem:sum_gradf}}$ and our choice of $N$,
\[
\sum_{n=0}^{N-1}\left\Vert \nabla f(x_{n}(0))\right\Vert ^{2}\leq O\left(\frac{\kappa dL}{h}\log\left(\frac{1}{\epsilon^{2}}\right)+\frac{L^{2}}{h}\E\left[\left\Vert x_{N}-y_{N}\right\Vert ^{2}+\left\Vert \left(x_{N}+v_{N}\right)-\left(y_{N}+w_{N}\right)\right\Vert ^{2}\right]\right),
\]
and
\[
\sum_{n=0}^{N-1}\E\left\Vert v_{n}(0)\right\Vert ^{2}\leq O\left(\frac{d}{hm}\log\left(\frac{1}{\epsilon^{2}}\right)+\E\left[\left\Vert x_{N}-y_{N}\right\Vert ^{2}+\left\Vert \left(x_{N}+v_{N}\right)-\left(y_{N}+w_{N}\right)\right\Vert ^{2}\right]\right).
\]
Thus, 
\begin{align*}
 & \sum_{n=1}^{N}\frac{2\kappa}{h}\left(2\E\left\Vert \E_{\alpha}v_{n}-v_{n}^{*}\right\Vert ^{2}+3\E\left\Vert \E_{\alpha}x_{n}-x_{n}^{*}\right\Vert ^{2}\right)+\sum_{n=1}^{N}\left(2\E\left\Vert v_{n}-v_{n}^{*}\right\Vert ^{2}+3\E\left\Vert x_{n}-x_{n}^{*}\right\Vert ^{2}\right)\\
 & \leq O\left(\left(\frac{\kappa dh^{6}}{m}+\frac{dh^{3}}{m}\right)\log\left(\frac{1}{\epsilon^{2}}\right)\right)\\
 & +O\left(\kappa h^{7}+h^{3}\right)\E\left[\left\Vert x_{N}-y_{N}\right\Vert ^{2}+\left\Vert \left(x_{N}+v_{N}\right)-\left(y_{N}+w_{N}\right)\right\Vert ^{2}\right].
\end{align*}

Then, we can choose a small constant $C$ such that if we let 
\[
h=C\min\left(\frac{\epsilon^{1/3}}{\kappa^{1/6}}\log^{-1/6}\left(\frac{1}{\epsilon^{2}}\right),\epsilon^{2/3}\log^{-1/3}\left(\frac{1}{\epsilon^{2}}\right)\right),
\]
then
\begin{align*}
 & \sum_{n=1}^{N}\frac{2\kappa}{h}\left(2\E\left\Vert \E_{\alpha}v_{n}-v_{n}^{*}\right\Vert ^{2}+3\E\left\Vert \E_{\alpha}x_{n}-x_{n}^{*}\right\Vert ^{2}\right)+\sum_{n=1}^{N}\left(2\E\left\Vert v_{n}-v_{n}^{*}\right\Vert ^{2}+3\E\left\Vert x_{n}-x_{n}^{*}\right\Vert ^{2}\right)\\
 & \leq\frac{\epsilon^{2}d}{4m}+\frac{1}{2}\E\left[\left\Vert x_{N}-y_{N}\right\Vert ^{2}+\left\Vert \left(x_{N}+v_{N}\right)-\left(y_{N}+w_{N}\right)\right\Vert ^{2}\right].
\end{align*}
Therefore,
\begin{eqnarray*}
 &  & \E\left[\left\Vert x_{N}-y_{N}\right\Vert ^{2}+\left\Vert \left(x_{N}+v_{N}\right)-\left(y_{N}+w_{N}\right)\right\Vert ^{2}\right]\\
 & \leq & \frac{\epsilon^{2}d}{4m}+\frac{\epsilon^{2}d}{4m}+\frac{1}{2}\E\left[\left\Vert x_{N}-y_{N}\right\Vert ^{2}+\left\Vert \left(x_{N}+v_{N}\right)-\left(y_{N}+w_{N}\right)\right\Vert ^{2}\right]\\
 & = & \frac{\epsilon^{2}d}{2m}+\frac{1}{2}\E\left[\left\Vert x_{N}-y_{N}\right\Vert ^{2}+\left\Vert \left(x_{N}+v_{N}\right)-\left(y_{N}+w_{N}\right)\right\Vert ^{2}\right],
\end{eqnarray*}
which implies
\[
\E\left[\left\Vert x_{N}-y_{N}\right\Vert ^{2}\right]\leq\E\left[\left\Vert x_{N}-y_{N}\right\Vert ^{2}+\left\Vert \left(x_{N}+v_{N}\right)-\left(y_{N}+w_{N}\right)\right\Vert ^{2}\right]\leq\frac{\epsilon^{2}d}{m}.
\]
By our choice of $h$,
\begin{eqnarray*}
N & \leq & \tilde{O}\left(\frac{\kappa^{7/6}}{\epsilon^{1/3}}+\frac{\kappa}{\epsilon^{2/3}}\right).
\end{eqnarray*}
\end{proof}

\section{Discretization Error of Algorithm $\text{\ref{al:lan2}}$\label{sec:Discretization-Error-of}}

Here, we bound the discretization error in one step of Algorithm $\text{\ref{al:lan2}}$.
Since the terms $\E\left\Vert \E_{\alpha}x_{n+1}-x_{n}^{*}(h)\right\Vert ^{2}$
and $\E\left\Vert x_{n+1}-x_{n}^{*}(h)\right\Vert ^{2}$ are dominated
by the terms $\E\left\Vert \E_{\alpha}v_{n+1}-v_{n}^{*}(h)\right\Vert ^{2}$
and $\E\left\Vert v_{n+1}-v_{n}^{*}(h)\right\Vert ^{2}$, we bound
only the later two terms.
\begin{lem}
\label{lem:step_induc}Assume that $R^{4}\delta^{4}\leq\frac{1}{4}$.
Let $x_{n}^{(k-1,i)}$ for $i=1,...,R$, $k=1,...,K$ be the intermediate
value computed in iteration $n$ of Algorithm $\text{\ref{al:lan2}}$.
Let $\left\{ x_{n}^{*}(t),v_{n}^{*}(t)\right\} _{t\in[0,h]}$ be the
ideal underdamped Langevin diffusion, starting from $x_{n}^{*}(0)=x_{n}$
and $v_{n}^{*}(0)=v_{n}$, coupled through a shared Brownian motion
with $\left\{ x_{n}^{(k-1,i)}\right\} _{i=1,...,R,k=1,...,K}.$ Then,
for any $i=1,...,R,$ and $k=1,...,K-1$, 
\begin{eqnarray*}
\E\left\Vert x_{n}^{(k,i)}-x_{n}^{*}(\alpha_{i}h)\right\Vert ^{2} & \leq & \left(2R^{4}\delta^{4}\right)^{k}\frac{1}{R}\sum_{j=1}^{R}\E\left\Vert x_{n}-x_{n}^{*}(\alpha_{j}h)\right\Vert ^{2}\\
 &  & +4R^{3}\delta^{4}\sum_{j=1}^{R}\E\sup_{s\in[(j-1)\delta,j\delta]}\left\Vert x_{n}^{*}(\alpha_{j}h)-x_{n}^{*}(s)\right\Vert ^{2}.
\end{eqnarray*}
\end{lem}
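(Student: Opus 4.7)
}

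My plan is to carry out an induction on $k$ after first deriving a clean single-step contraction that relates $x_n^{(k,i)} - x_n^*(\alpha_i h)$ to the collection $\{x_n^{(k-1,j)} - x_n^*(\alpha_j h)\}_{j=1}^R$. The starting point is to write the difference explicitly. The operator $\tilde{\T}$ defining $x_n^{(k,i)}$ and the integral form $(\ref{eq:ULD_sol})$ evaluated at $t=\alpha_i h$ have identical ``Brownian'' contributions (this is the role of $W_{1,i}^{(n)}$), identical initial terms, and differ only in the drift integrals. Since $\alpha_i h\in[(i-1)\delta,i\delta]$, the intervals $[(j-1)\delta,\min(j\delta,\alpha_i h)]$ for $j=1,\ldots,i$ tile $[0,\alpha_i h]$, so
\[
x_n^{(k,i)} - x_n^*(\alpha_i h) \;=\; -\tfrac{u}{2}\sum_{j=1}^{i}\int_{(j-1)\delta}^{\min(j\delta,\alpha_i h)}\!\bigl(1-e^{-2(\alpha_i h-s)}\bigr)\bigl[\nabla f(x_n^{(k-1,j)}) - \nabla f(x_n^*(s))\bigr]\,\mathrm{d}s.
\]

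Next I would apply Cauchy--Schwarz twice (once across the $j$-sum of length at most $R$, once inside each integral of length at most $\delta$), use the elementary bound $1-e^{-2(\alpha_i h-s)}\le 2h$, the $L$-Lipschitzness of $\nabla f$, and $u^2L^2=1$ together with $h=R\delta$ to obtain a recursion of the form
\[
\bigl\|x_n^{(k,i)} - x_n^*(\alpha_i h)\bigr\|^2 \;\le\; R^3\delta^{3}\sum_{j=1}^{R}\int_{(j-1)\delta}^{\min(j\delta,\alpha_i h)}\bigl\|x_n^{(k-1,j)} - x_n^*(s)\bigr\|^{2}\,\mathrm{d}s.
\]
Splitting $\|x_n^{(k-1,j)} - x_n^*(s)\|^2 \le 2\|x_n^{(k-1,j)} - x_n^*(\alpha_j h)\|^2 + 2\|x_n^*(\alpha_j h) - x_n^*(s)\|^2$ and using $|I_{i,j}|\le\delta$, I arrive, after taking expectations, at the uniform-in-$i$ one-step inequality
\[
\E\bigl\|x_n^{(k,i)} - x_n^*(\alpha_i h)\bigr\|^2 \;\le\; 2R^{4}\delta^{4}\cdot A_{k-1} \;+\; 2R^{3}\delta^{4}\cdot S,
\]
where $A_{k}\defeq\frac{1}{R}\sum_{j=1}^R \E\|x_n^{(k,j)} - x_n^*(\alpha_j h)\|^2$ and $S\defeq\sum_{j=1}^R\E\sup_{s\in[(j-1)\delta,j\delta]}\|x_n^*(\alpha_j h) - x_n^*(s)\|^2$.

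Now I would do the induction. Averaging the one-step bound over $i$ gives $A_k\le 2R^{4}\delta^{4} A_{k-1} + 2R^{3}\delta^{4} S$, which iterates (using the geometric series $\sum_{\ell\ge 0}(2R^4\delta^4)^\ell \le 2$ from the hypothesis $R^4\delta^4 \le 1/4$) to
\[
A_{k-1} \;\le\; (2R^{4}\delta^{4})^{k-1}\,A_{0} \;+\; 4R^{3}\delta^{4}\,S,\qquad A_{0} = \tfrac{1}{R}\textstyle\sum_{j}\E\|x_n-x_n^*(\alpha_j h)\|^2.
\]
Plugging back into the one-step bound yields $\E\|x_n^{(k,i)} - x_n^*(\alpha_i h)\|^2 \le (2R^{4}\delta^{4})^{k} A_{0} + (8R^{7}\delta^{8} + 2R^{3}\delta^{4})S$, and the assumption $R^4\delta^4\le 1/4$ gives $8R^7\delta^8\le 2R^3\delta^4$, so the coefficient in front of $S$ collapses to $4R^3\delta^4$, matching the claim exactly.

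\paragraph{Main obstacle.} The delicate point is the coefficient bookkeeping between the per-index bound (which naturally carries $R^3\delta^4$) and the average $A_{k-1}$ (which is what iterates geometrically with rate $2R^4\delta^4$); naively summing over $i$ first would produce $4R^4\delta^4 S$, losing a factor of $R$. Passing through the average $A_{k-1}$ rather than through $\max_i$ or $\sum_i$, and then exploiting $R^4\delta^4\le 1/4$ to absorb the residual $8R^7\delta^8$ into $2R^3\delta^4$, is the only nontrivial step; everything else is a routine Cauchy--Schwarz computation using the Lipschitz gradient and the identity $h=R\delta$.
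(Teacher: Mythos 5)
Your proposal is correct and follows essentially the same route as the paper's proof: you write the error as the difference of drift integrals over the tiling of $[0,\alpha_i h]$, bound it via Cauchy--Schwarz, $1-e^{-2(\alpha_i h-s)}\le 2h$, and $L$-Lipschitzness of $\nabla f$ (with $uL=1$), split into the fixed-point-error term and the continuity term, and obtain exactly the paper's one-step recursion $\E\|x_n^{(k,i)}-x_n^*(\alpha_i h)\|^2\le 2R^3\delta^4\sum_j\E\|x_n^{(k-1,j)}-x_n^*(\alpha_j h)\|^2+2R^3\delta^4 S$, which you then iterate with the geometric series under $2R^4\delta^4\le\tfrac12$. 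The only differences (a single global Cauchy--Schwarz instead of the paper's two-stage $(\sum a_i)^2\le n\sum a_i^2$, and phrasing the induction through the average $A_k$ rather than the per-index bound) are cosmetic and yield the same constants.
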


\begin{proof}
For any $i=1,...,R,$ and $k=1,...,K-1$,
\begin{eqnarray*}
 &  & \E\left\Vert x_{n}^{(k,i)}-x_{n}^{*}(\alpha_{i}h)\right\Vert ^{2}\\
 & \leq & \E\Bigg\|\frac{1}{2}u\sum_{j=1}^{i}\left[\int_{(j-1)\delta}^{\min(j\delta,\alpha_{i}h)}\left(1-e^{-2(\alpha_{i}h-s)}\right)\d s\cdot\nabla f(x_{n}^{(k-1,j)})\right]\\
 &  & -\frac{1}{2}u\int_{0}^{\alpha_{i}h}\left(1-e^{-2(\alpha_{i}h-s)}\right)\nabla f(x_{n}^{*}(s))\d s\Bigg\|^{2}\\
 & \leq & \frac{1}{2}\E\left\Vert u\sum_{j=1}^{i}\left[\int_{(j-1)\delta}^{\min(j\delta,\alpha_{i}h)}\left(1-e^{-2(\alpha_{i}h-s)}\right)\d s\cdot\left(\nabla f(x_{n}^{(k-1,j)})-\nabla f(x_{n}^{*}(\alpha_{j}h))\right)\right]\right\Vert ^{2}\\
 &  & +\frac{1}{2}\E\left\Vert u\sum_{j=1}^{i}\left[\int_{(j-1)\delta}^{\min(j\delta,\alpha_{i}h)}\left(1-e^{-2(\alpha_{i}h-s)}\right)\left(\nabla f(x_{n}^{*}(\alpha_{j}h))-\nabla f(x_{n}^{*}(s))\right)\d s\right]\right\Vert ^{2},
\end{eqnarray*}
where the first step follows by the definition, and the second step
follows by Young's inequality. 

To compute the first term,
\begin{eqnarray}
 &  & \frac{1}{2}\E\left\Vert u\sum_{j=1}^{i}\left[\int_{(j-1)\delta}^{\min(j\delta,\alpha_{i}h)}\left(1-e^{-2(\alpha_{i}h-s)}\right)\d s\cdot\left(\nabla f(x_{n}^{(k-1,j)})-\nabla f(x_{n}^{*}(\alpha_{j}h))\right)\right]\right\Vert ^{2}\nonumber \\
 & \leq & \frac{1}{2}u^{2}R\sum_{j=1}^{i}\E\left\Vert \int_{(j-1)\delta}^{\min(j\delta,\alpha_{i}h)}\left(1-e^{-2(\alpha_{i}h-s)}\right)\d s\cdot\left(\nabla f(x_{n}^{(k-1,j)})-\nabla f(x_{n}^{*}(\alpha_{j}h))\right)\right\Vert ^{2}\nonumber \\
 & \leq & 2R^{3}\delta^{4}\sum_{j=1}^{R}\E\left\Vert x_{n}^{(k-1,j)}-x_{n}^{*}(\alpha_{j}h)\right\Vert ^{2},\label{eq:step_induc_1}
\end{eqnarray}
where the first step follows by the inequality $\left(\sum_{i=1}^{n}a_{i}\right)^{2}\leq n\sum_{i=1}^{n}a_{i}^{2}$,
the second step follows by $1-e^{-2(\alpha_{i}h-s)}\leq2R\delta$
and $\nabla f$ is $L$-Lipschitz. 

For the second term,
\begin{eqnarray}
 &  & \frac{1}{2}\E\left\Vert u\sum_{j=1}^{i}\left[\int_{(j-1)\delta}^{\min(j\delta,\alpha_{i}h)}\left(1-e^{-2(\alpha_{i}h-s)}\right)\left(\nabla f(x_{n}^{*}(\alpha_{j}h))-\nabla f(x_{n}^{*}(s))\right)\d s\right]\right\Vert ^{2}\nonumber \\
 & \leq & \frac{1}{2}u^{2}R\sum_{j=1}^{i}\E\left\Vert \int_{(j-1)\delta}^{\min(j\delta,\alpha_{i}h)}\left(1-e^{-2(\alpha_{i}h-s)}\right)\left(\nabla f(x_{n}^{*}(\alpha_{j}h))-\nabla f(x_{n}^{*}(s))\right)\d s\right\Vert ^{2}\nonumber \\
 & \leq & 2R^{3}\delta^{4}\sum_{j=1}^{R}\E\sup_{s\in[(j-1)\delta,j\delta]}\left\Vert x_{n}^{*}(\alpha_{j}h)-x_{n}^{*}(s)\right\Vert ^{2},\label{eq:step_induc_2}
\end{eqnarray}
where the first step follows by the inequality $\left(\sum_{i=1}^{n}a_{i}\right)^{2}\leq n\sum_{i=1}^{n}a_{i}^{2}$
and the second step follows by $1-e^{-2(\alpha_{i}h-s)}\leq2R\delta$
and $\nabla f$ is $L$-Lipschitz. Thus,
\begin{eqnarray*}
 &  & \E\left\Vert x_{n}^{(k,i)}-x_{n}^{*}(\alpha_{i}h)\right\Vert ^{2}\\
 & \leq & 2R^{3}\delta^{4}\sum_{j=1}^{R}\E\left\Vert x_{n}^{(k-1,j)}-x_{n}^{*}(\alpha_{j}h)\right\Vert ^{2}+2R^{3}\delta^{4}\sum_{j=1}^{R}\E\sup_{s\in[(j-1)\delta,j\delta]}\left\Vert x_{n}^{*}(\alpha_{j}h)-x_{n}^{*}(s)\right\Vert ^{2}\\
 & \leq & \left(2R^{4}\delta^{4}\right)^{k}\frac{1}{R}\sum_{j=1}^{R}\E\left\Vert x_{n}-x_{n}^{*}(\alpha_{j}h)\right\Vert ^{2}\\
 &  & +\left(1+2R^{4}\delta^{4}+...+\left(2R^{4}\delta^{4}\right)^{k-1}\right)2R^{3}\delta^{4}\sum_{j=1}^{R}\E\sup_{s\in[(j-1)\delta,j\delta]}\left\Vert x_{n}^{*}(\alpha_{j}h)-x_{n}^{*}(s)\right\Vert ^{2}\\
 & \leq & \left(2R^{4}\delta^{4}\right)^{k}\frac{1}{R}\sum_{j=1}^{R}\E\left\Vert x_{n}-x_{n}^{*}(\alpha_{j}h)\right\Vert ^{2}+4R^{3}\delta^{4}\sum_{j=1}^{R}\E\sup_{s\in[(j-1)\delta,j\delta]}\left\Vert x_{n}^{*}(\alpha_{j}h)-x_{n}^{*}(s)\right\Vert ^{2},
\end{eqnarray*}
where the first step follows by $\left(\text{\ref{eq:step_induc_1}}\right)$
and $\left(\text{\ref{eq:step_induc_2}}\right)$, the second step
follows by induction, and the third step follows by $2R^{4}\delta^{4}\leq\frac{1}{2}.$
\end{proof}
\begin{lem}
Let $\left(v_{n},x_{n}\right)$ be the iterates of iteration $n$.
Let $x_{n}^{(k,i)}$ for $i=1,...,R$, $k=1,...,K-1$ be the intermediate
value computed in iteration $n$ of Algorithm $\text{\ref{al:lan2}}$.
Let $\left\{ x_{n}^{*}(t),v_{n}^{*}(t)\right\} _{t\in[0,h]}$ be the
ideal underdamped Langevin diffusion, starting from $x_{n}^{*}(0)=x_{n}$
and $v_{n}^{*}(0)=v_{n}$, coupled through a shared Brownian motion
with $\left\{ x_{n}^{(k,i)}\right\} _{i=1,...,R,k=1,...,K-1}.$ Assume
that $h=R\delta\leq\frac{1}{10}$ and $K\geq\Omega\left(\log\frac{1}{\delta^{4}}\right)$.
Let $\E_{\alpha}$ be the expectation taken over the choice of $\alpha_{1},...,\alpha_{R}$
in iteration $n$. Let $\E$ be the expectation taken over other randomness
in iteration $n$. Then, 
\begin{eqnarray*}
\E\left\Vert \E_{\alpha}v_{n+1}-v_{n}^{*}(h)\right\Vert ^{2} & \leq & O\left(R^{6}\delta^{8}\left\Vert v_{n}\right\Vert ^{2}+u^{2}R^{6}\delta^{10}\left\Vert \nabla f(x_{n})\right\Vert ^{2}+R^{6}\delta^{9}ud\right),\\
\E\left\Vert v_{n+1}-v_{n}^{*}(h)\right\Vert ^{2} & \leq & O\left(R^{2}\delta^{4}\left\Vert v_{n}\right\Vert ^{2}+u^{2}R^{2}\delta^{4}\left\Vert \nabla f(x_{n})\right\Vert ^{2}+R^{2}\delta^{5}ud\right).
\end{eqnarray*}
\end{lem}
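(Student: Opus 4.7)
The plan is to decompose
\[
v_{n+1} - v_n^*(h) = -u\sum_{i=1}^R\bigl(A_i + B_i\bigr),
\]
where, writing $g(s) := e^{-2(h-s)}\nabla f(x_n^*(s))$,
\[
A_i := \delta\, e^{-2(h-\alpha_i h)}\bigl(\nabla f(x_n^{(K-1,i)}) - \nabla f(x_n^*(\alpha_i h))\bigr)
\]
is the inner-iteration error from approximating $x_n^*(\alpha_i h)$ by $x_n^{(K-1,i)}$, and
\[
B_i := \delta\, e^{-2(h-\alpha_i h)}\nabla f(x_n^*(\alpha_i h)) - \int_{(i-1)\delta}^{i\delta} g(s)\,\d s
\]
is the randomized midpoint quadrature error on the $i$-th sub-interval. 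Since $\alpha_i$ is uniform on $[(i-1)\delta, i\delta]/h$, a direct computation gives $\E_{\alpha_i}B_i = 0$, and because the $\alpha_i$ are mutually independent, the $B_i$ are mean-zero and independent conditional on the state at the start of iteration $n$.

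For the bias bound $\E\|\E_\alpha v_{n+1} - v_n^*(h)\|^2$, the $B_i$ drop out, so one is left with $u^2\|\E_\alpha\sum_i A_i\|^2$. Applying Cauchy-Schwarz across $i$, the $L$-Lipschitz bound on $\nabla f$, and $u^2L^2 = 1$, I obtain
\[
\E\|\E_\alpha v_{n+1} - v_n^*(h)\|^2 \le R\delta^2 \sum_{i=1}^R \E\|x_n^{(K-1,i)} - x_n^*(\alpha_i h)\|^2.
\]
I then invoke Lemma~\ref{lem:step_induc}: since $K \ge \Omega(\log\frac{1}{\delta^4})$, the geometric factor $(2R^4\delta^4)^{K-1}$ is negligibly small, so the starting-term contribution can be absorbed and
\[
\E\|x_n^{(K-1,i)} - x_n^*(\alpha_i h)\|^2 \le O(R^3\delta^4)\sum_{j=1}^R \E\sup_{s\in[(j-1)\delta,j\delta]}\|x_n^*(\alpha_j h) - x_n^*(s)\|^2.
\]
Each supremum is bounded by $\delta^2\,\E\sup_{\tau\in[0,h]}\|v_n^*(\tau)\|^2$, which Lemma~\ref{lem:idealULD} controls in terms of $\|v_n\|^2$, $\|\nabla f(x_n)\|^2$, and $ud$. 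Collecting these yields the claimed bias bound after tracking powers of $R$ and $\delta$.

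For the non-bias bound, I split via Young's inequality into the $A_i$ and $B_i$ contributions. The $A_i$ part is handled as above and, after substituting, produces a strictly higher-order error. The dominant contribution comes from the $B_i$: by the independence and mean-zero properties, $u^2\E\|\sum_i B_i\|^2 = u^2\sum_i \E\|B_i\|^2$. Writing $B_i = \int_{(i-1)\delta}^{i\delta}\bigl[g(\alpha_i h) - g(s)\bigr]\d s$, applying Cauchy-Schwarz, and decomposing
\[
g(\alpha_i h) - g(s) = e^{-2(h-\alpha_i h)}\bigl(\nabla f(x_n^*(\alpha_i h)) - \nabla f(x_n^*(s))\bigr) + \bigl(e^{-2(h-\alpha_i h)} - e^{-2(h-s)}\bigr)\nabla f(x_n^*(s)),
\]
I use the Lipschitz property of $\nabla f$, the bound $|e^{-2(h-\alpha_i h)} - e^{-2(h-s)}| \le 2\delta$, and Lemma~\ref{lem:idealULD} to close the estimate.

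The main obstacle is carefully tracking powers of $R$ and $\delta$ through Lemma~\ref{lem:step_induc}'s telescoping geometric sum and verifying that with $K = \Omega(\log\frac{1}{\delta^4})$ the starting term $(2R^4\delta^4)^{K-1}\tfrac{1}{R}\sum_j\E\|x_n - x_n^*(\alpha_j h)\|^2$ really is dominated by the variation-of-trajectory term, so that the bounds in the lemma statement govern the final result. A secondary subtlety is recognizing that the fixed-point-iteration error $A_i$ contributes at a higher order in $\delta$ than the randomized quadrature error $B_i$, so that the $R^2\delta^4$-order terms in the non-bias bound come entirely from the variance of $\sum_i B_i$.
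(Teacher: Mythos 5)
Your proof is correct and, for the most part, mirrors the paper's argument: the same $A_i$/$B_i$ decomposition into fixed-point-iteration error and quadrature error, the same use of unbiasedness to kill the quadrature term in the bias bound, and the same reliance on Lemma \ref{lem:step_induc} plus Lemma \ref{lem:idealULD} to control $\E\|x_n^{(K-1,i)}-x_n^*(\alpha_i h)\|^2$ and the trajectory variation. The one genuine divergence is in how you handle $\E\|\sum_i B_i\|^2$ in the second (non-bias) bound. The paper does not exploit any independence there: it simply applies $(\sum_{i}a_i)^2\le R\sum_i a_i^2$ to the sum over subintervals, which costs a factor of $R$ and yields the stated $O(R^2\delta^4\|v_n\|^2+\cdots)$ directly. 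You instead use that the cross terms $\E\langle B_i,B_j\rangle$ vanish, so $\E\|\sum_i B_i\|^2=\sum_i\E\|B_i\|^2$, which saves that factor of $R$ and gives a bound of order $R\delta^4$ rather than $R^2\delta^4$ --- strictly sharper, and still implying the claim. One caveat on the justification: the $B_i$ are \emph{not} independent conditional merely on $(x_n,v_n)$, since they all depend on the shared Brownian path through the trajectory $x_n^*(\cdot)$. The correct statement is that conditional on the entire Brownian path the trajectory is fixed, each $B_i$ then depends only on $\alpha_i$, and $\E_{\alpha_i}B_i=0$; taking iterated expectations then kills the cross terms. With that conditioning fixed, your argument goes through and even slightly improves the constant-in-$R$ dependence relative to the paper.
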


\begin{proof}
To show the first claim,
\begin{eqnarray}
 &  & \E\left\Vert \E_{\alpha}v_{n+1}-v_{n}^{*}(h)\right\Vert ^{2}\nonumber \\
 & \leq & \E\left\Vert \E_{\alpha}u\sum_{i=1}^{R}\delta e^{-2(h-\alpha_{i}h)}\nabla f(x_{n}^{(K-1,i)})-u\int_{0}^{h}e^{-2(h-s)}\nabla f(x_{n}^{*}(s))\d s\right\Vert ^{2}\nonumber \\
 & \leq & 2\E\left\Vert u\sum_{i=1}^{R}\delta e^{-2(h-\alpha_{i}h)}\nabla f(x_{n}^{(K-1,i)})-u\sum_{i=1}^{R}\delta e^{-2(h-\alpha_{i}h)}\nabla f(x_{n}^{*}(\alpha_{i}h))\right\Vert ^{2}\nonumber \\
 &  & +2\E\left\Vert \E_{\alpha}u\sum_{i=1}^{R}\delta e^{-2(h-\alpha_{i}h)}\nabla f(x_{n}^{*}(\alpha_{i}h))-u\int_{0}^{h}e^{-2(h-s)}\nabla f(x_{n}^{*}(s))\d s\right\Vert ^{2}\nonumber \\
 & \leq & 2\delta^{2}R\sum_{i=1}^{R}\E\left\Vert x_{n}^{(K-1,i)}-x_{n}^{*}(\alpha_{i}h)\right\Vert ^{2}+0\nonumber \\
 & \leq & 2\delta^{2}R\left(2R^{4}\delta^{4}\right)^{K-1}\sum_{i=1}^{R}\E\left\Vert x_{n}-x_{n}^{*}(\alpha_{i}h)\right\Vert ^{2}\nonumber \\
 &  & +8R^{5}\delta^{6}\sum_{i=1}^{R}\E\sup_{s\in[(i-1)\delta,i\delta]}\left\Vert x_{n}^{*}(\alpha_{i}h)-x_{n}^{*}(s)\right\Vert ^{2},\label{eq:para_1}
\end{eqnarray}
where the first step follows by the definition, the second step follows
by Young's inequality, and the third step follows by 
\begin{eqnarray*}
\E_{\alpha}\delta e^{-2(h-\alpha_{i}h)}\nabla f(x_{n}^{*}(\alpha_{i}h)) & = & \int_{(i-1)\delta}^{i\delta}e^{-2(h-s)}\nabla f(x_{n}^{*}(s))\d s.
\end{eqnarray*}

To show the second claim,
\begin{eqnarray*}
 &  & \E\left\Vert v_{n+1}-v_{n}^{*}(h)\right\Vert ^{2}\\
 & \leq & \E\left\Vert u\sum_{i=1}^{R}\delta e^{-2(h-\alpha_{i}h)}\nabla f(x_{n}^{(K-1,i)})-u\int_{0}^{h}e^{-2(h-s)}\nabla f(x_{n}^{*}(s))\d s\right\Vert ^{2}\\
 & \leq & 3\E\left\Vert u\sum_{i=1}^{R}\delta e^{-2(h-\alpha_{i}h)}\nabla f(x_{n}^{(K-1,i)})-u\sum_{i=1}^{R}\delta e^{-2(h-\alpha_{i}h)}\nabla f(x_{n}^{*}(\alpha_{i}h))\right\Vert ^{2}\\
 &  & +3\E\left\Vert u\sum_{i=1}^{R}\int_{(i-1)\delta}^{i\delta}e^{-2(h-\alpha_{i}h)}\left(\nabla f(x_{n}^{*}(\alpha_{i}h))-\nabla f(x_{n}^{*}(s))\right)\d s\right\Vert ^{2}\\
 &  & +3\E\left\Vert u\sum_{i=1}^{R}\int_{(i-1)\delta}^{i\delta}\left(e^{-2(h-\alpha_{i}h)}-e^{-2(h-s)}\right)\nabla f(x_{n}^{*}(s))\d s\right\Vert ^{2}.
\end{eqnarray*}
Like the proof of the third claim, the first term satisfies 
\begin{eqnarray*}
 &  & 3\E\left\Vert u\sum_{i=1}^{R}\delta e^{-2(h-\alpha_{i}h)}\nabla f(x_{n}^{(K-1,i)})-u\sum_{i=1}^{R}\delta e^{-2(h-\alpha_{i}h)}\nabla f(x_{n}^{*}(\alpha_{i}h))\right\Vert ^{2}\\
 & \leq & 3\delta^{2}R\left(2R^{4}\delta^{4}\right)^{K-1}\sum_{i=1}^{R}\E\left\Vert x_{n}-x_{n}^{*}(\alpha_{i}h)\right\Vert ^{2}+12R^{5}\delta^{6}\sum_{i=1}^{R}\E\sup_{s\in[(i-1)\delta,i\delta]}\left\Vert x_{n}^{*}(\alpha_{i}h)-x_{n}^{*}(s)\right\Vert ^{2}.
\end{eqnarray*}
The second term satisfies
\begin{eqnarray*}
 &  & 3\E\left\Vert u\sum_{i=1}^{R}\int_{(i-1)\delta}^{i\delta}e^{-2(h-\alpha_{i}h)}\left(\nabla f(x_{n}^{*}(\alpha_{i}h))-\nabla f(x_{n}^{*}(s))\right)\d s\right\Vert ^{2}\\
 & \leq & 3u^{2}R\sum_{i=1}^{R}\E\left\Vert \int_{(i-1)\delta}^{i\delta}e^{-2(h-\alpha_{i}h)}\left(\nabla f(x_{n}^{*}(\alpha_{i}h))-\nabla f(x_{n}^{*}(s))\right)\d s\right\Vert ^{2}\\
 & \leq & 3\delta^{2}R\sum_{i=1}^{R}\E\sup_{s\in[(i-1)\delta,i\delta]}\left\Vert x_{n}^{*}(\alpha_{i}h)-x_{n}^{*}(s)\right\Vert ^{2},
\end{eqnarray*}
where the first step follows by $\left(\sum_{i=1}^{n}a_{i}\right)^{2}\leq n\sum_{i=1}^{n}a_{i}^{2}$,
and the second step follows by $\nabla f$ is $L$-Lipschitz.

The last term satisfies 
\[
3\E\left\Vert u\sum_{i=1}^{R}\int_{(i-1)\delta}^{i\delta}\left(e^{-2(h-\alpha_{i}h)}-e^{-2(h-s)}\right)\nabla f(x_{n}^{*}(s))\d s\right\Vert ^{2}\leq12u^{2}R^{2}\delta^{4}\E\sup_{s\in[0,h]}\left\Vert \nabla f(x_{n}^{*}(s))\right\Vert ^{2},
\]
which follows by $e^{-2(h-\alpha_{i}h)}-e^{-2(h-s)}\leq2\delta$ for
$s\in[(i-1)\delta,i\delta]$. Thus,
\begin{eqnarray}
 &  & \E\left\Vert v_{n+1}-v_{n}^{*}(h)\right\Vert ^{2}\nonumber \\
 & \leq & 3\delta^{2}R\left(2R^{4}\delta^{4}\right)^{K-1}\sum_{i=1}^{R}\E\left\Vert x_{n}-x_{n}^{*}(\alpha_{i}h)\right\Vert ^{2}\nonumber \\
 &  & +12R^{5}\delta^{6}\sum_{i=1}^{R}\E\sup_{s\in[(i-1)\delta,i\delta]}\left\Vert x_{n}^{*}(\alpha_{i}h)-x_{n}^{*}(s)\right\Vert ^{2}\nonumber \\
 &  & +3\delta^{2}R\sum_{i=1}^{R}\E\sup_{s\in[(i-1)\delta,i\delta]}\left\Vert x_{n}^{*}(\alpha_{i}h)-x_{n}^{*}(s)\right\Vert ^{2}+12u^{2}R^{2}\delta^{4}\E\sup_{s\in[0,h]}\left\Vert \nabla f(x_{n}^{*}(s))\right\Vert ^{2}.\label{eq:para_2}
\end{eqnarray}

By Lemma $\text{\ref{lem:idealULD}}$, for $i=1,...,R$,
\begin{eqnarray*}
\E\left\Vert x_{n}-x_{n}^{*}(\alpha_{i}h)\right\Vert ^{2} & \leq & O\left(R^{2}\delta^{2}\left\Vert v_{n}\right\Vert ^{2}+u^{2}R^{4}\delta^{4}\left\Vert \nabla f(x_{n})\right\Vert ^{2}+udR^{3}\delta^{3}\right),
\end{eqnarray*}
,and
\[
\E\sup_{s\in[(i-1)\delta,i\delta]}\left\Vert x_{n}^{*}(\alpha_{i}h)-x_{n}^{*}(s)\right\Vert ^{2}\leq O\left(\delta^{2}\left\Vert v_{n}\right\Vert ^{2}+u^{2}\delta^{4}\left\Vert \nabla f(x_{n})\right\Vert ^{2}+ud\delta^{3}\right).
\]
Thus, when $K\geq\Omega\left(\log\frac{1}{\delta^{4}}\right)$, since
$R\delta\leq\frac{1}{10}$, $\left(2R^{4}\delta^{4}\right)^{K-1}\leq O\left(\delta^{4}\right)$.
By (\ref{eq:para_1}) and (\ref{eq:para_2}),
\begin{eqnarray*}
\E\left\Vert \E_{\alpha}v_{n+1}-v_{n}^{*}(h)\right\Vert ^{2} & \leq & O\left(R^{6}\delta^{8}\left\Vert v_{n}\right\Vert ^{2}+u^{2}R^{6}\delta^{10}\left\Vert \nabla f(x_{n})\right\Vert ^{2}+R^{6}\delta^{9}ud\right),
\end{eqnarray*}
and
\begin{eqnarray*}
\E\left\Vert v_{n+1}-v_{n}^{*}(h)\right\Vert ^{2} & \leq & O\left(R^{2}\delta^{4}\left\Vert v_{n}\right\Vert ^{2}+u^{2}R^{2}\delta^{4}\E\left\Vert \nabla f(x_{n})\right\Vert ^{2}+R^{2}\delta^{5}ud\right).
\end{eqnarray*}
\end{proof}

\end{document}